\newif\ifneurips
\newif\ifchecklist
\newif\ifappendix
\newcommand{\appendixref}[1]{\ifappendix Appendix~\ref{#1}\else the appendix\fi}
\pgfplotsset{compat=1.9} 
\newcommand\percentage[2][round-precision = 1]
\renewcommand{\vec}[1]{\boldsymbol{#1}}
\declaretheorem{proposition}
\declaretheorem[sibling=proposition]{lemma}
\declaretheorem[sibling=proposition]{theorem}
\declaretheorem[sibling=proposition]{corollary}
\title{Adversarial Reprogramming Revisited}
\author{
Matthias Englert\thanks{Equal contribution.} \\
University of Warwick \\
\texttt{m.englert@warwick.ac.uk}
\ifneurips
\And
\else
\and
\fi
Ranko Lazi\'c\footnotemark[\value{footnote}] \\
University of Warwick \\
\texttt{r.s.lazic@warwick.ac.uk}
}
\date{}
\begin{document}

\maketitle

\begin{abstract}
Adversarial reprogramming, introduced by Elsayed, Goodfellow, and Sohl-Dickstein, seeks to repurpose a neural network to perform a different task, by manipulating its input without modifying its weights.  We prove that two-layer ReLU neural networks with random weights can be adversarially reprogrammed to achieve arbitrarily high accuracy on Bernoulli data models over hypercube vertices, provided the network width is no greater than its input dimension.  We also substantially strengthen a recent result of Phuong and Lampert on directional convergence of gradient flow, and obtain as a corollary that training two-layer ReLU neural networks on orthogonally separable datasets can cause their adversarial reprogramming to fail.  We support these theoretical results by experiments that demonstrate that, as long as batch normalisation layers are suitably initialised, even untrained networks with random weights are susceptible to adversarial reprogramming.  This is in contrast to observations in several recent works that suggested that adversarial reprogramming is not possible for untrained networks to any degree of reliability.
\end{abstract}

\section{Introduction}
\label{s:intro}

\citet{elsayed2018adversarial} proposed \emph{adversarial reprogramming}: given a neural network~$\mathcal{N}$ which performs a task~$F: X \to Y$ and given an adversarial task $G: \widetilde{X} \to \widetilde{Y}$, repurpose~$\mathcal{N}$ to perform~$G$ by finding mappings $h_{\mathrm{in}}: \widetilde{X} \to X$ and $h_{\mathrm{out}}: Y \to \widetilde{Y}$ such that $G \approx h_{\mathrm{out}} \circ F \circ h_{\mathrm{in}}$.  They focused on a setting where $F$ and~$G$ are image classification tasks, $X$~consists of large images, $\widetilde{X}$~consists of small images, and $h_{\mathrm{in}}$ and $h_{\mathrm{out}}$ are simple and computationally inexpensive: $h_{\mathrm{in}}(\widetilde{x}) = p + \widetilde{x}$ draws the input~$\widetilde{x}$ at the centre of the \emph{adversarial program}~$p$, and $h_{\mathrm{out}}$ is a hard coded mapping from the class labels~$Y$ to the class labels~$\widetilde{Y}$.  Then the challenge of adversarial reprogramming is to find~$p$ such that $h_{\mathrm{out}} \circ F \circ h_{\mathrm{in}}$ approximates well the adversarial task~$G$.

Remarkably, \citet{elsayed2018adversarial} showed that a range of realistic neural networks can be adversarially reprogrammed to perform successfully several tasks.  They considered six architectures trained on ImageNet~\citep{RussakovskyDSKS15} and found adversarial programs that achieve very good accuracies on a counting task, MNIST~\citep{LeCunBBH98}, and CIFAR-10~\citep{Krizhevsky09}.  In addition to the basic setting, they investigated limiting the visibility of the adversarial program by restricting its size or scale, or even concealing both the input and the adversarial program by hiding them within a normal image from ImageNet, and obtained good results.

Adversarial reprogramming can be seen as taking the crafting of adversarial examples~\citep{BiggioCMNSLGR13,SzegedyZSBEGF13} to a next level.  Like the universal adversarial perturbation of \citet{Moosavi-Dezfooli17}, a single adversarial program is combined with every input from the adversarial task, but in contrast to the former where the goal is to cause natural images to be misclassified with high probability, adversarial reprogramming seeks a high accuracy of classification for the given adversarial task.

Adversarial reprogramming is also related to transfer learning~\citep{RainaBLPN07,MesnilDGRBGLMDWVCB12}, however with important differences.  Whereas transfer learning methods use the knowledge obtained from one task as a base for learning to perform another, and allow model parameters to be changed for the new task, in adversarial reprogramming the new task may bear no similarity with the old, and the model cannot be altered but has to be manipulated through its input.

The latter features are suggestive of potential nefarious \label{p:nefarious} uses of adversarial reprogramming, and \citet{elsayed2018adversarial} list several, such as repurposing computational resources to perform a task which violates the ethics code of system providers.  However, its further explorations have demonstrated utility for virtuous deployments to medical image classification~\citep{TsaiCH20}, natural language processing~\citep{HambardzumyanKM20,NeekharaHDK19,NeekharaHDDKM22}, molecule toxicity prediction~\citep{VinodCD20}, and time series classification~\citep{YangTC21}.  In the last three works, adversarial reprogramming was achieved between different domains, e.g.\ repurposing neural networks trained on ImageNet to perform classification of DNA sequences, and of natural language sentiments and topics.

In spite of the variety of fruitful applications, it is still largely a mystery when adversarial reprogramming is possible and why.  In the only work in this direction, \citet{ZhengFXJDPBR21} proposed the alignment $\|g\|_1 / (\nicefrac{1}{n} \sum_i \|g_i\|_1)$ of the gradients~$g_i$ of the inputs from the adversarial task with their average~$g$ as the main indication of whether adversarial reprogramming will succeed.  However, their experiments did not show a significant correlation between accuracy after reprogramming and the gradient alignment before reprogramming.  The correlation was statistically significant with the gradient alignment after reprogramming, but that is unsurprising and it is unclear how to use it for predicting success.

Specifically, a central question on adversarial reprogramming is:
\begin{framed}
Can neural networks with random weights be adversarially reprogrammed, and more generally, how does training impact adversarial reprogrammability?
\end{framed}
First, addressing this question is important for assessing scope of two claims made in the literature:
\begin{description}[style=unboxed,wide]
\item[\emph{``[Adversarial] reprogramming usually fails when applied to untrained networks''} \citep{ZhengFXJDPBR21}.]
In addition to networks trained on ImageNet, \citet{elsayed2018adversarial} experimented using the same architectures untrained, i.e.\ with random weights, and obtained generally poor results.  Similarly, the experimental results of \citet{NeekharaHDK19} and \citet{ZhengFXJDPBR21} for random networks are significantly worse than their experimental results for trained networks.  However, they remarked that this was surprising given the known richness of random networks (see e.g.~\citet{HeWH16,LeeBNSPS18}), and that it was possibly due to simple reasons such as poor scaling of the random weights.
\item[\emph{``The original task the neural networks perform is important for adversarial reprogramming''} \citep{elsayed2018adversarial}.]
Nevertheless a number experimental results including those of \citet{TsaiCH20,NeekharaHDDKM22,VinodCD20,YangTC21,ZhengFXJDPBR21} demonstrated successful adversarial reprogramming between tasks that are seemingly unrelated (e.g.~repurposing networks trained on ImageNet to act as HCL2000~\citep{ZhangGCL09} classifiers) or from different domains, so the interaction between the original and adversarial tasks remains unclear.
\end{description}
Second, the question above is important because of implications for the following two applied considerations:
\begin{description}[style=unboxed,wide]
\item[Disentangling architecture and training as factors in adversarial reprogrammability.]
Clarifying the respective bearings on adversarial reprogramming success of the network architecture and of the task (if any) it was trained for would improve decision making, either to maximise the success in beneficial scenarios or to minimise it in detrimental ones.
\item[Managing the cost of adversarial reprogramming.]
Understanding when training the network is not essential, and when training for longer does not help or even hinders adversarial reprogrammability, should make it possible to control better the economic and environmental costs.
\end{description}

\subsection{Our contributions}

We initiate a theoretical study of adversarial reprogramming.  In it we focus on two-layer neural networks with ReLU activation, and on adversarial tasks given by the Bernoulli distributions of \citet{SchmidtSTTM18} over hypercube vertices.  The latter are binary classification data models in which the two classes are represented by opposite hypercube vertices, and when sampling a data point for a given class, we flip each coordinate of the corresponding class vertex with a certain probability.  This probability is a parameter that controls the difficulty of the classification task.  These data models are inspired by the MNIST dataset because MNIST images are close to binary (many pixels are almost fully black or white).  The remaining parameters are the radius of the hypercube, the input dimension of the neural network, and its width.

We prove that, in this setting, for networks with random weights, adversarial programs exist that achieve arbitrarily high accuracy on the Bernoulli adversarial tasks.  This holds for a wide variety of parameter regimes, provided the network width is no greater than its input dimension.  The adversarial programs we construct depend on the weights of the network and on the class vertices (i.e.\ the direction) of the Bernoulli data model, and their Euclidean length is likely to be close to the square root of the input dimension.  We present these results in Section~\ref{s:random}.

We also prove that, in the same setting, training the network on orthogonally separable datasets can cause adversarial reprogramming to fail.  \citet{PhuongL21} recently showed that, under several assumptions, training a two-layer ReLU network on such datasets by gradient flow makes it converge to a linear combination of two maximum-margin neurons; and subsequently \citet{WangP22} obtained in a different manner the same conclusion under the same assumptions.  We provide a simpler proof of a significantly stronger result: we show that the assumptions in \citet{PhuongL21} and \citet{WangP22} of small initialisation, and of positive and negative support examples spanning the whole space, are not needed; and we generalise to the exponential loss function as well as the logistic one.  We then observe that, for any Bernoulli data model whose direction is in a half-space of the difference of the maximum-margin neurons, and for any adversarial program, the accuracy tends to~$1 / 2$ (i.e.~approaches guessing) under a mild assumption on the growth rate of the difficulty of the data model.  We present these results in Section~\ref{s:bias}.

Both for the neworks with random weights and for the networks trained to infinity on orthogonally separable datasets, we then show that similar theoretical results can be obtained with a different kind of adversarial task, namely those given by the Gaussian distributions also of \citet{SchmidtSTTM18}.  The latter are mixtures of two spherical multivariate Gaussians, one per data class.  Please see \appendixref{app:Gaussian}.

In the experimental part of our work, we demonstrate that, as long as batch normalisation layers are suitably initialised, even untrained networks with random weights are susceptible to adversarial reprogramming.  Both the random weights and the batch normalisation layers are kept fixed throughout the finding of adversarial programs and their evaluation.  Our experiments are conducted with six realistic network architectures and MNIST as the adversarial task.  We investigate two different schemes to combine input images with adversarial programs: replacing the centre of the program by the image as was done by \citet{elsayed2018adversarial}, and scaling the image to the size of the program and then taking a convex combination of the two.  Each of the two schemes has a ratio parameter, and we explore their different values.  We find that the second scheme gives better results in our experiments, and that for some choices of the ratio parameter, the accuracies on the test set across all six architectures are not far below what \citet{elsayed2018adversarial} reported for networks trained on ImageNet.  Please see Section~\ref{s:exper}.

We conduct the same experiments also on the more challenging Fashion-MNIST~\citep{XiaoRV17} and Kuzushiji-MNIST~\citep{ClanuwatBKLYH18} datasets, and obtain broadly similar results, however with lower test accuracies in several cases; they are reported in \appendixref{app:other}.

For a further discussion of relations with other works, please see \appendixref{app:related}.

We conclude the paper in Section~\ref{s:concl}, where we discuss limitations of our work and suggest directions for future work.

\section{Random networks}
\label{s:random}

\paragraph{Basic notations.}

We write:
$[n]$~for the set $\{1, \ldots, n\}$,
$\|\vec{v}\|$~for the Euclidean length of a vector~$\vec{v}$,
$\angle(\vec{v}, \vec{v}')$ for the angle between~$\vec{v}$ and~$\vec{v}'$\ifappendix , \else , and \fi
$\mathbb{H}^d$~for the $d$-dimensional unit hypercube $\{\pm 1 / \sqrt{d}\}^d$\ifappendix , and
$\mathbb{S}^{d - 1}$~for the $d$-dimensional unit sphere $\{\vec{v} \in \mathbb{R}^d \,\mid\, \|\vec{v}\| = 1\}$\else\fi.

\paragraph{Two-layer ReLU networks.}

We consider two-layer neural networks~$\mathcal{N}$ with the ReLU activation.  We write $d$~for the input dimension, $k$~for the width, $\vec{w}_1, \ldots, \vec{w}_k \in \mathbb{R}^d$ for the weights of the first layer, and $a_1, \ldots, a_k \in \mathbb{R}$ for the weights of the second layer.  For an input $\vec{x} \in \mathbb{R}^d$, the output is thus
\[\mathcal{N}(\vec{x}) \coloneqq \sum_{j = 1}^k a_j \psi(\vec{w}_j^\top \vec{x})\;,\]
where $\psi(u) = \max \{u, 0\}$ is the ReLU function.

\paragraph{Random weights.}

In this section, we assume that the weights in~$\mathcal{N}$ are random as follows:
\begin{itemize}
\item
each $\vec{w}_j$ consists of $d$ independent centred Gaussians with variance~$1 / d$, and
\item
each $a_j$ is independently uniformly distributed in $\{\pm 1 / \sqrt{k}\}$.
\end{itemize}
This distribution is as in \citet{BubeckCGT21}, and standard for theoretical investigations.  It is similar to He's initialisation~\citep{HeZRS15}, with the second layer discretised for simplicity.

The variances of the weights are such that, for any input $\vec{x} \in \mathbb{R}^d$ of Euclidean length~$\sqrt{d}$, each $\vec{w}_j^\top \vec{x}$ is a standard Gaussian, and for large widths~$k$ the distribution of~$\mathcal{N}(\vec{x})$ is close to centred Gaussian with variance~$1 / 2$.

\paragraph{Bernoulli data models.}

Adapting from \citet{SchmidtSTTM18}, given a hypercube vertex $\vec{\phi} \in \mathbb{H}^d$, a radius $\rho > 0$, and a class bias parameter $0 < \tau \leq 1 / 2$, we define the $(\vec{\phi}, \rho, \tau)$-Bernoulli distribution over $(\vec{x}, y) \,\in\, \rho \mathbb{H}^d \times \{\pm 1\}$ as follows:
\begin{itemize}
\item
first draw the label~$y$ uniformly at random from $\{\pm 1\}$,
\item
then sample the data point~$\vec{x}$ by taking $y \rho \vec{\phi}$ and flipping the sign of each coordinate independently with probability $1 / 2 - \tau$.
\end{itemize}

These binary classification data models on hypercube vertices are inspired by the MNIST dataset~\citep{LeCunBBH98}.  The class bias parameter~$\tau$ controls the difficulty of the classification task, which increases as $\tau$~tends to zero, i.e.~as $1 / \tau$ tends to infinity.

In this section and the next, we consider adversarial tasks that are $(\vec{\phi}, \rho, \tau)$-Bernoulli data models, and investigate variations of the parameters $\vec{\phi}$, $\rho$ and~$\tau$.

\paragraph{Adversarial program.}

In this section, we assume that $k \leq d$, i.e.~the network width is no greater than the input dimension, and we define an adversarial program~$\vec{p}$ which depends on the weights of the network~$\mathcal{N}$ and on the direction~$\vec{\phi}$ of the Bernoulli data model.

With probability~$1$, for all $j \in [k]$, we have that $a_j \vec{w}_j^\top \vec{\phi} \neq 0$.  Let us write $K^+$~for $\{j \in [k] \,\mid\, a_j \vec{w}_j^\top \vec{\phi} > 0\}$, and $K^-$~for $\{j \in [k] \,\mid\, a_j \vec{w}_j^\top \vec{\phi} < 0\}$.  Then, for all $j \in [k]$, let:
\[\vec{p}'_j =
  \begin{cases}
  0                 & \text{if $j \in K^+$,} \\
  -\sqrt{d / |K^-|} & \text{if $j \in K^-$.}
  \end{cases}\]

Since $k \leq d$, with probability~$1$, the weights vectors~$\vec{w}_j$ are linearly independent, i.e.~the $k \times d$ matrix~$\vec{W}$ whose rows are~$\vec{w}_j$ has a positive smallest singular value $s_{\mathrm{min}}(\vec{W})$.  Hence $\vec{p} \in \mathbb{R}^d$ exists such that
\begin{equation}
\vec{p}' = \vec{W} \vec{p} \quad\text{and}\quad
\frac{\|\vec{p}'\|}{s_{\mathrm{max}}(\vec{W})}
\leq \|\vec{p}\| \leq
\frac{\|\vec{p}'\|}{s_{\mathrm{min}}(\vec{W})}\;.
\label{eq:p.p'}
\end{equation}

The neurons in~$K^+$ can be thought of as ``helpful'' for the adversarial task, and those in~$K^-$ as ``unhelpful''.  This definition of an adversarial program~$\vec{p}$ ensures that its effect is to introduce as first-layer biases the entries of the vector~$\vec{p}'$.  They are~$0$ (i.e.~do nothing) for every ``helpful'' neuron, and the negative value $-\sqrt{d / |K^-|}$ (i.e.~reduce the contribution to the network output) for every ``unhelpful'' neuron.

From the inequalities in \eqref{eq:p.p'}, we can expect $\|\vec{p}\| \approx \sqrt{d}$ if $k = o(d)$ (see \appendixref{app:p.p'} for details).

\paragraph{Expected reprogramming accuracy.}

The accuracy for an adversarial task~$\mathcal{D}$ is the probability that the sign of the output of the reprogrammed network conforms to the input label, i.e.
\[\mathbb{P}_{(\vec{x}, y) \sim \mathcal{D}}
  \{y \, \mathcal{N}(\vec{p} + \vec{x}) > 0\}\;.\]

Our main result in this section is that, for sufficiently large input dimensions~$d$, and under mild restrictions on the growth rates of the network width~$k$, the radius~$\rho$ and the difficulty~$1 / \tau$ of the Bernoulli data model, in expectation over the random network weights, the reprogramming accuracy is at least $(1 - C_1 \gamma) (1 - \gamma^\dag)$, which by tuning the probability parameters~$\gamma$ and~$\gamma^\dag$ can be arbitrarily close to~$100\%$.  The growth rate restrictions indicate that networks with smaller widths can be reprogrammed for tasks with larger radii, but that networks with larger widths can be reprogrammed for tasks that are more difficult.  The proof proceeds by analysing concentrations of the underlying distributions and involves establishing a theorem that bounds the reprogrammed network output; the details can be found in \appendixref{app:random}.

\begin{restatable}{corollary}{firstcorrandom}
\label{cor:random}
Suppose that
\[k        = \Theta(d^{\eta_{(k)}})\;, \quad
  \rho     = O(d^{\eta_{(\rho)}})\;, \quad
  1 / \tau = O(d^{\eta_{(\tau)}}) \quad\text{and}\quad
  1 / \tau = \omega_d(1)\;,\]
where $\eta_{(k)}, \eta_{(\rho)}, \eta_{(\tau)} \in [0, 1]$ are arbitrary constants that satisfy
\[\eta_{(\rho)} < 1 - \eta_{(k)} / 2 \quad\text{and}\quad
  \eta_{(\tau)} < \eta_{(k)} / 2\;.\]
Then, for sufficiently large input dimensions~$d$, the expected accuracy of the adversarially reprogrammed network~$\mathcal{N}$ on the $(\vec{\phi}, \rho, \tau)$-Bernoulli data model is arbitrarily close to~$100\%$.
\end{restatable}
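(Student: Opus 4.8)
The plan is to obtain Corollary~\ref{cor:random} directly from Theorem~\ref{th:random}, by showing that under the stated growth conditions the bracketed factor in the lower bound on $y\,\mathcal{N}(\vec{p}+\vec{x})$ is eventually strictly positive with the asserted probability. Since the prefactor $\sqrt{k}\,\rho/\sqrt{d}$ is always positive, the sign of $y\,\mathcal{N}(\vec{p}+\vec{x})$, and hence whether the reprogrammed network classifies the sample correctly, is governed entirely by the bracket, whose leading term is $C_2\tau$. Thus it suffices to prove that each of the three subtracted error terms is $o(\tau)$ as $d\to\infty$, so that the bracket eventually exceeds, say, $\tfrac12 C_2\tau>0$.

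First I would fix an arbitrary target $\delta>0$ and choose \emph{constants} $\gamma,\gamma^\dag\in(0,1)$, depending only on $\delta$ and $C_1$, with $(1-C_1\gamma)(1-\gamma^\dag)\ge 1-\delta$. Freezing $\gamma,\gamma^\dag$ is the crux of the argument: it keeps $\ln(1/\gamma)$ and $\ln(1/\gamma^\dag)$ bounded, so the last two error terms reduce to $O(k^{-1/2})$ and $O(d^{-1/2})$. Rewriting the hypotheses as exponents gives $\tau=\Omega(d^{-\eta_{(\tau)}})$, $k=\Omega(d^{\eta_{(k)}})$, and $k\rho^2=O(d^{\eta_{(k)}+2\eta_{(\rho)}})$ with $\eta_{(k)}+2\eta_{(\rho)}<2$ from the first constraint.

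Next I would bound the three error terms. The second term $C_4\sqrt{\ln(1/\gamma)/k}=O(d^{-\eta_{(k)}/2})$ is $o(d^{-\eta_{(\tau)}})=o(\tau)$ precisely because $\eta_{(\tau)}<\eta_{(k)}/2$, the second constraint. The third term $C_5\sqrt{\ln(1/\gamma^\dag)/d}=O(d^{-1/2})$ is $o(\tau)$ because $\eta_{(k)}\le 1$ forces $\eta_{(\tau)}<\eta_{(k)}/2\le 1/2$. For the first term, set $t=d^2/(k\rho^2)=\Omega(d^{\,2-\eta_{(k)}-2\eta_{(\rho)}})$, which tends to infinity polynomially; then $\exp(-t/2)$ decays faster than every power of $d$ and the factor $\min\{1,1/t\}\to 0$ only helps, so this term is $e^{-\Omega(d^{\epsilon})}$ for some $\epsilon>0$ and is $o(\tau)$ as well, since $\tau$ is at most polynomially small. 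I would also verify the theorem's proviso $2d\tau^2\ge\ln(1/\gamma^\dag)$: as $\eta_{(\tau)}<1/2$, we have $2d\tau^2=\Omega(d^{\,1-2\eta_{(\tau)}})\to\infty$, exceeding the fixed constant eventually. Putting these together, for all large $d$ the bracket is positive, so $y\,\mathcal{N}(\vec{p}+\vec{x})>0$ with probability at least $1-\delta$ over network and data; since this joint probability \emph{is} the expected accuracy and $\delta$ was arbitrary, the expected accuracy approaches $100\%$.

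The computations here are routine asymptotics, so the only real obstacle is the order of quantifiers: one must freeze $\gamma,\gamma^\dag$ as constants (keeping the logarithmic factors bounded) \emph{before} sending $d\to\infty$, and then check that the two strict exponent inequalities are exactly what make the second and first error terms vanish relative to $\tau$. The radius enters only through the exponentially small first term, which is why the comparatively loose condition $\eta_{(\rho)}<1-\eta_{(k)}/2$ suffices there.
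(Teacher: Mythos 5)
Your proposal is correct and follows essentially the same route as the paper's own proof: fix $\gamma,\gamma^\dag$ as constants depending only on the target accuracy, verify the proviso $2d\tau^2\geq\ln(1/\gamma^\dag)$, and show via the two exponent inequalities that each of the three subtracted terms in the bracket of Theorem~\ref{th:random} is $o(\tau)$, so the bracket is eventually positive. The only point worth flagging is that your step $k\rho^2=O(d^{\eta_{(k)}+2\eta_{(\rho)}})$ tacitly reads the hypothesis $k=\Omega(d^{\eta_{(k)}})$ as a two-sided bound $k=\Theta(d^{\eta_{(k)}})$ when controlling the exponential term (the bound $k\leq d$ alone only gives $d^2/(k\rho^2)=\Omega(d^{1-2\eta_{(\rho)}})$, which need not diverge when $\eta_{(\rho)}\geq 1/2$); the paper's proof makes the same implicit move, so this does not distinguish your argument from theirs.
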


\section{Implicit bias}
\label{s:bias}

\paragraph{Orthogonally separable dataset.}

In this section, we consider training the network on a binary classification dataset $S \,=\, \{(\vec{x}_1, y_1), \ldots, (\vec{x}_n, y_n)\} \,\subseteq\, \mathbb{R}^d \times \{\pm 1\}$ which is orthogonally separable~\citep{PhuongL21}, i.e.\ for all $i, i' \in [n]$ we have:
\[\vec{x}_i^\top \vec{x}_{i'} >    0 \quad\text{if}\quad y_i =    y_{i'}\;,
  \quad\text{and}\quad
  \vec{x}_i^\top \vec{x}_{i'} \leq 0 \quad\text{if}\quad y_i \neq y_{i'}\;.\]
In other words, every data point can act as a linear separator, although some data points from the opposite class may be exactly orthogonal to it.

\paragraph{Gradient flow with exponential or logistic loss.}

For two-layer ReLU networks with input dimension~$d$ and width~$k$ as before (but without the assumption $k \leq d$, which is not needed in this section), we denote the vector of all weights by
\[\vec{\theta} \coloneqq
  (\vec{w}_1, \ldots, \vec{w}_k, a_1, \ldots, a_k) \in
  \mathbb{R}^{k (d + 1)}\;,\]
and we write $\mathcal{N}_{\vec{\theta}}$ for the network whose weights are the coordinates of the vector~$\vec{\theta}$.

The empirical loss of~$\mathcal{N}_{\vec{\theta}}$ on~$S$ is
$\mathcal{L}(\vec{\theta}) \coloneqq
 \sum_{i = 1}^n \ell(y_i \mathcal{N}_{\vec{\theta}}(\vec{x}_i))$,
where $\ell$~is either the exponential $\ell_\mathrm{exp}(u) = e^{-u}$ or the logistic $\ell_\mathrm{log}(u) = \ln(1 + e^{-u})$ loss function.

A trajectory of gradient flow is a function $\vec{\theta}(t): [0, \infty) \to \mathbb{R}^{k (d + 1)}$ that is an arc, i.e.~it is absolutely continuous on every compact subinterval, and that satisfies the differential inclusion
\[\frac{\mathrm{d} \vec{\theta}}{\mathrm{d} t} \in -\partial \mathcal{L}(\vec{\theta}(t))
  \quad\text{for almost all}\quad t \in [0, \infty)\;,\]
where $\partial \mathcal{L}$ denotes the Clarke subdifferential~\citep{clarke1975generalized} of the locally Lipschitz function~$\mathcal{L}$.

Gradient flow is gradient descent with infinitesimal step size.  We work with the Clarke subdifferential in order to handle the non-differentiability of the ReLU function at zero: $\partial \psi(0)$ is the whole interval~$[0, 1]$.  At points of continuous differentiability, the Clarke subdifferential amounts to the gradient, e.g.~$\partial \mathcal{L}(\vec{\theta}) = \{\nabla \mathcal{L}(\vec{\theta})\}$.  For some further background, see \appendixref{app:Clarke}.

\paragraph{Initialisation of network weights.}

In this section, we assume that the initialisation is
\begin{description}[style=unboxed,wide]
\item[balanced:]
for all $j \in [k]$, at time $t = 0$ we have $|a_j| = \|\vec{w}_j\| > 0$; and
\item[live:]
for both signs $s \in \{\pm 1\}$ there exist $i_s \in [n]$ and $j_s \in [k]$ such that $y_{i_s} = s$ and at time $t = 0$ we have $y_{i_s} a_{j_s} \psi(\vec{w}_{j_s}^\top \vec{x}_{i_s}) > 0$.
\end{description}

The balanced assumption has featured in previous work (see e.g.~\citet{PhuongL21,LyuLWA21}).  It ensures that it remains to hold throughout the training, and that the signs of the second-layer weights~$a_j$ do not change (see the proof of Theorem~\ref{th:loss}).  The live assumption (present probabilistically in \citet{PhuongL21}) is mild: it states that at least one positively initialised neuron is active for at least one positive input, and the same for negative ones.

\paragraph{Convergence of gradient flow.}

Our main result in this section establishes that the early phase of training necessarily reaches a point where the empirical loss is less than~$\ell(0)$, which implies that then every input is classified correctly by the network.  Perhaps surprisingly, no small initialisation assumption is needed, however the proof makes extensive use of orthogonal separability of the dataset (see \appendixref{app:bias}, which contains all proofs for this section).

\begin{restatable}{theorem}{firstthloss}
\label{th:loss}
There exists a time~$t_0$ such that $\mathcal{L}(\vec{\theta}(t_0)) < \ell(0)$.
\end{restatable}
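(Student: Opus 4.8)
The plan is to argue that, starting from any balanced and live initialisation, gradient flow strictly decreases the loss below the guessing threshold $\ell(0)$ within finite time. The strategy is to identify a single ``live'' neuron for each sign and show that its contribution to the network output on its associated example grows without bound, which forces the per-example loss on that example towards $0$, and in turn the total loss below $\ell(0)$. First I would set up the bookkeeping guaranteed by the balanced assumption: since $|a_j| = \|\vec{w}_j\|$ at $t = 0$, one shows the quantity $a_j^2 - \|\vec{w}_j\|^2$ is conserved along gradient flow (its time derivative vanishes because the outgoing and incoming gradient contributions to a neuron are equal in magnitude under homogeneity), so balancedness persists and, crucially, the sign of each $a_j$ never changes. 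This also means a neuron that is ``live'' (i.e.\ $y_{i_s} a_{j_s} \psi(\vec{w}_{j_s}^\top \vec{x}_{i_s}) > 0$) cannot have its $a_{j_s}$ flip sign, so its helpful orientation is locked in.

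\emph{The main obstacle} is controlling the activation pattern of the live neuron as it evolves: I must ensure that once a neuron is active on its aligned example (with the correct sign contribution), orthogonal separability prevents it from being pulled into an unhelpful configuration. Here I would lean heavily on the orthogonal separability hypothesis. Consider the live neuron $j_s$ for sign $s$ and its example $\vec{x}_{i_s}$ with $y_{i_s}=s$. The negative gradient of the loss with respect to $\vec{w}_{j_s}$ is a nonnegative-weighted sum of terms $\ell'(\cdot)\, y_i a_{j_s}\, \vec{x}_i$ over examples $i$ for which the neuron is currently active. Because $\ell' < 0$ for both loss choices, the aligned examples ($y_i = s$) push $\vec{w}_{j_s}$ in directions $\vec{x}_i$ having positive inner product with $\vec{x}_{i_s}$, while opposite-class examples ($y_i \neq s$) contribute terms with nonpositive inner product with $\vec{x}_{i_s}$ by orthogonal separability. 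Projecting the flow onto the direction $\vec{x}_{i_s}$, the opposing examples can only help or be neutral, so the inner product $\vec{w}_{j_s}^\top \vec{x}_{i_s}$ is nondecreasing and in fact bounded below away from zero once positive. This keeps the neuron active on $\vec{x}_{i_s}$ and makes its pre-activation grow.

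Quantitatively, I would track the scalar $g_s(t) \coloneqq y_{i_s} a_{j_s}(t)\, \psi(\vec{w}_{j_s}(t)^\top \vec{x}_{i_s})$ and show that while the global loss stays above $\ell(0)$ (so that $|\ell'|$ is bounded below on the relevant example), $g_s$ increases at a rate bounded below by a positive constant depending on $\|\vec{x}_{i_s}\|$ and the current activation margin. Using balancedness to couple the growth of $|a_{j_s}|$ and $\|\vec{w}_{j_s}\|$ (the homogeneity gives a product-type growth, so the contribution $a_{j_s}\psi(\vec{w}_{j_s}^\top \vec{x}_{i_s})$ grows at least linearly, and plausibly faster), one concludes $y_{i_s}\mathcal{N}_{\vec{\theta}(t)}(\vec{x}_{i_s}) \to +\infty$, since the contributions of the remaining neurons on $\vec{x}_{i_s}$ cannot cancel this growing positive live term — any neuron whose contribution to $y_{i_s}\mathcal{N}(\vec{x}_{i_s})$ is negative would, by the same orthogonal-separability projection argument, have its harmful pre-activation driven towards zero rather than reinforced. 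Hence the summand $\ell(y_{i_s}\mathcal{N}(\vec{x}_{i_s}))$ tends to $0$ for each sign $s$. Finally I would combine this with a global argument: the total loss $\mathcal{L}$ is nonincreasing along gradient flow (standard, via the chain rule for the Clarke subdifferential of the locally Lipschitz $\mathcal{L}$ along the arc $\vec{\theta}(t)$), so if $\mathcal{L}$ stayed at or above $\ell(0)$ for all time it would have to converge to a limit $\geq \ell(0)$; but the two live examples alone already contribute loss tending to $0$, and I would argue that this is incompatible with $\mathcal{L} \geq \ell(0)$ being maintained, because the loss contributions of the correctly-driven examples vanish while the flow cannot simultaneously keep some example's margin pinned negatively — orthogonal separability ensures there is a consistent direction of descent until all margins are positive. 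Therefore some finite time $t_0$ exists with $\mathcal{L}(\vec{\theta}(t_0)) < \ell(0)$, giving correct classification of every input at~$t_0$.
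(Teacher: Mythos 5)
Your opening moves match the paper's: persistence of balancedness and of the signs of the $a_j$, and the use of orthogonal separability to show that $\vec{w}_{j_s}^\top \vec{x}_{i_s}$ is nondecreasing, are exactly the paper's Lemmas on balancedness and on the key monotonicity/cone properties of the weights. The genuine gap is in your final step. At best your argument shows that the per-example loss on the \emph{two live examples} $\vec{x}_{i_1}, \vec{x}_{i_{-1}}$ tends to zero. But $\mathcal{L}(\vec{\theta}(t_0)) < \ell(0)$ requires the \emph{sum over all $n$ examples} to fall below $\ell(0)$, hence every single margin $y_i \mathcal{N}_{\vec{\theta}(t_0)}(\vec{x}_i)$ to be positive (and in fact each summand to be below $\ell(0)/n$ on average). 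For $n \geq 3$, two summands tending to zero is perfectly compatible with $\mathcal{L} \geq (n-2)\,\ell(0) \geq \ell(0)$ forever, and your closing sentence (``orthogonal separability ensures there is a consistent direction of descent until all margins are positive'') asserts precisely the missing claim without proving it. The paper's central lemma is exactly this propagation step: for \emph{every} $i \in [n]$ there is some neuron $j$ with $\vec{w}_j^\top \vec{x}_i \to \infty$. It is proved by showing first that some $a_j \to \infty$ (via a dichotomy: if both $\mathcal{N}_{\vec{\theta}}(\vec{x}_{i_1})$ and $\vec{w}_{j_1}^\top \vec{x}_{i_1}$ were bounded above, the lower bound on $\mathrm{d}(\vec{w}_{j_1}^\top \vec{x}_{i_1})/\mathrm{d}t$ would force the latter to be unbounded), and then using the containment $\vec{w}_j(t) - \vec{w}_j(0) \in \mathrm{cone}(\sgn(a_j)\vec{X})$ together with orthogonal separability to spread unboundedness of $\vec{w}_j^\top \vec{x}_{i'}$ from one example to \emph{all} examples of the matching class. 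Nothing in your proposal plays this role.

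A secondary soft spot: your claim that opposing neurons ``cannot cancel this growing positive live term'' because their harmful pre-activations are ``driven towards zero'' is not justified as stated --- a negatively-signed neuron active on $\vec{x}_{i_1}$ has a nonincreasing pre-activation there, but its magnitude $|a_j| = \|\vec{w}_j\|$ may still grow, so its (negative) contribution need not shrink. The paper sidesteps this by never arguing that the network output on the live example diverges; instead, once every example has a correctly-signed active neuron with $|a_{j(i)}| \geq 1/\|\vec{x}_i\|$, it bounds $\mathrm{d}\mathcal{L}/\mathrm{d}t \leq -\sum_{i=1}^n \bigl(\ell'(y_i \mathcal{N}_{\vec{\theta}}(\vec{x}_i))\bigr)^2$ (using orthogonal separability to control the cross terms) and integrates, so that at some finite time all the $|\ell'|$, hence all the per-example losses, are simultaneously small. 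You would need both the propagation lemma and some version of this simultaneous quantitative argument to close your proof.
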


This enables us to apply to the late phase recent results of \citet{LyuL20,JiT20,LyuLWA21} and obtain the next corollary, which is significantly stronger than the main result of \citet{PhuongL21}, extending it to exponential loss, and showing that assumptions of small initialisation, and of positive and negative support examples spanning the whole space, are not needed.  The corollary establishes that each neuron converges to one of three types: a scaling of the maximum-margin vector for the positive data class, a scaling of the maximum-margin vector for the negative data class, or zero.  The two maximum-margin vectors are defined as follows: for both signs $s \in \{\pm 1\}$, let $I_s \coloneqq \{i \in [n] \,\mid\, y_i = s\}$, and let $\vec{v}_s$~be the unique minimiser of the quadratic problem
\[\text{\rm minimise}\quad
  \frac{1}{2} \|\vec{v}\|^2
  \quad\text{\rm subject to}\quad
  \forall i \in I_s: \, \vec{v}^\top \vec{x}_i \geq 1\;.\]
That a trajectory $\vec{\theta}(t)$ converges in direction to a vector $\widetilde{\vec{\theta}}$ means $\lim_{t \to \infty} \vec{\theta}(t) / \|\vec{\theta}(t)\| = \widetilde{\vec{\theta}} / \|\widetilde{\vec{\theta}}\|$.

\begin{restatable}{corollary}{firstcorbias}
\label{cor:bias}
As the time tends to infinity, we have that the empirical loss converges to zero, the Euclidean norm of the weights converges to infinity, and the weights converge in direction to some~$\vec{\theta}$ such that for all $j \in [k]$ we have $|a_j| = \|\vec{w}_j\|$, and if $a_j \neq 0$ then
\[\frac{\vec{w}_j}{\|\vec{w}_j\|} = \frac{\vec{v}_{\sgn(a_j)}}{\sum_{\sgn(a_{j'}) = \sgn(a_j)} a_{j'}^2}\;.\]
\end{restatable}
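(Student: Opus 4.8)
The plan is to combine Theorem~\ref{th:loss} with the general late-phase analysis of homogeneous networks and then to pin down the limiting direction using orthogonal separability. First I would observe that, since $\ell$ is strictly decreasing, the conclusion $\mathcal{L}(\vec{\theta}(t_0)) < \ell(0)$ forces $y_i\mathcal{N}_{\vec{\theta}(t_0)}(\vec{x}_i) > 0$ for every $i$: if any margin were nonpositive, that single summand would already be at least $\ell(0)$. Thus at time $t_0$ the data are classified with positive margin. The map $\vec{\theta}\mapsto\mathcal{N}_{\vec{\theta}}(\vec{x})$ is locally Lipschitz and positively $2$-homogeneous in~$\vec{\theta}$ (the first layer is $1$-homogeneous through the ReLU and the second layer contributes another factor), so the hypotheses of the convergence theorems of \citet{LyuL20,JiT20} are met from $t_0$ onwards. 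I would invoke them to obtain that $\mathcal{L}(\vec{\theta}(t))\to 0$, that $\|\vec{\theta}(t)\|\to\infty$, and that $\vec{\theta}(t)/\|\vec{\theta}(t)\|$ converges to (a positive multiple of) a Karush--Kuhn--Tucker point $\vec{\theta}$ of the margin-maximisation program $\min\tfrac12\|\vec{\theta}\|^2$ subject to $y_i\mathcal{N}_{\vec{\theta}}(\vec{x}_i)\ge 1$ for all $i$. Independently, the balancedness quantity $a_j^2-\|\vec{w}_j\|^2$ is conserved along gradient flow and vanishes at initialisation, and the sign of each $a_j$ cannot change; both properties pass to the limiting direction, giving $|a_j|=\|\vec{w}_j\|$ for all $j$.

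Next I would extract the stationarity and complementary-slackness conditions at the KKT point. Writing the Clarke subgradient of the ReLU at the pre-activation $\vec{w}_j^\top\vec{x}_i$ as $g_{ij}\in[0,1]$ (equal to $1$, $0$, or free in $[0,1]$ according as the pre-activation is positive, negative, or zero) and the constraint multipliers as $\lambda_i\ge 0$, stationarity reads $\vec{w}_j = a_j\sum_i\lambda_i y_i g_{ij}\vec{x}_i$ and $a_j=\sum_i\lambda_i y_i\psi(\vec{w}_j^\top\vec{x}_i)$, with $\lambda_i>0$ only when $y_i\mathcal{N}(\vec{x}_i)=1$. The first use of orthogonal separability is a cross-class inactivity statement: for a neuron with $a_j>0$ and any negative example $\vec{x}_{i'}$, expanding $\vec{w}_j^\top\vec{x}_{i'}=a_j\sum_i\lambda_i y_i g_{ij}\,\vec{x}_i^\top\vec{x}_{i'}$ and checking signs term by term (positive examples give $\vec{x}_i^\top\vec{x}_{i'}\le 0$, negative examples give $\vec{x}_i^\top\vec{x}_{i'}>0$ against the factor $y_i=-1$) shows $\vec{w}_j^\top\vec{x}_{i'}\le 0$. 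Hence positive neurons are inactive on negative data, and symmetrically negative neurons are inactive on positive data.

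With cross-class inactivity in hand, I would show that every nonzero same-sign neuron is in fact active on all same-class points. If a neuron with $a_j>0$ were inactive on the whole positive class, the stationarity expression would make $\vec{w}_j$ a combination of negative examples on which, moreover, the pre-activation vanishes, forcing $\|\vec{w}_j\|^2=0$ and contradicting $\|\vec{w}_j\|=|a_j|>0$; so some positive multiplier is active, and then the strict positivity $\vec{x}_i^\top\vec{x}_{i'}>0$ of same-class inner products yields $\vec{w}_j^\top\vec{x}_{i'}>0$ for \emph{every} positive $\vec{x}_{i'}$. Consequently all positive neurons share the same activation pattern and hence the same direction $\vec{z}=\sum_{i\in I_{+1}}\lambda_i\vec{x}_i$, so $\vec{w}_j=a_j\vec{z}$. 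The effective classifier seen by the positive class, $\vec{u}=\sum_{a_j>0}a_j\vec{w}_j=\bigl(\sum_{a_j>0}a_j^2\bigr)\vec{z}$, is a nonnegative combination of positive examples that meets every positive margin constraint, with the active ones tight by complementary slackness; these are exactly the KKT conditions of the convex program defining $\vec{v}_{+1}$, so $\vec{u}=\vec{v}_{+1}$. Reading off $\vec{z}=\vec{v}_{+1}/\sum_{a_{j'}>0}a_{j'}^2$ and using $\|\vec{z}\|=1$ gives the stated identity and $\sum_{a_{j'}>0}a_{j'}^2=\|\vec{v}_{+1}\|$; the negative class is treated identically.

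I expect the main obstacle to be the non-differentiability of the ReLU at zero pre-activations, handled through the Clarke subdifferential: the boundary examples (those with $\vec{w}_j^\top\vec{x}_i=0$ but $g_{ij}>0$) can in principle contribute to the stationarity equation and threaten the uniform alignment of same-sign neurons. Controlling these terms is precisely where the strict same-class inner-product positivity in the definition of orthogonal separability is indispensable, as it is what upgrades ``inactive on the opposite class'' to ``active on the entire own class'' and thereby collapses all same-sign neurons onto a single direction.
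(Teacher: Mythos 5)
Your proposal is correct and follows essentially the same route as the paper: Theorem~\ref{th:loss} supplies a time with $\mathcal{L}<\ell(0)$, the late-phase results of \citet{LyuL20,JiT20} (the paper's Theorem~\ref{th:KKT}) give loss convergence, norm divergence, and directional convergence to a KKT point of the margin-maximisation problem, and the KKT point is then characterised via orthogonal separability. The characterisation you derive inline --- cross-class inactivity, activity on the whole own class, collapse of same-sign neurons onto $\sum_{i\in I_s}\lambda_i\vec{x}_i$, and identification with $\vec{v}_s$ via uniqueness of the KKT point of the per-class quadratic program --- is precisely the paper's Lemma~\ref{l:KKT}, proved there by the same sign-checking of the stationarity equation, including the treatment of zero pre-activations through the Clarke subgradient.
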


Thanks to homogeneity, the sign of the network output does not depend on the norm of the weights, only on their direction.  Examining networks whose weights are directional limits as in Corollary~\ref{cor:bias} is therefore informative of consequences for adversarial reprogramming of long training.  The following result tells us that, for any Bernoulli data model whose direction is in a half-space of the difference of the maximum-margin vectors, and for any adversarial program, the accuracy tends to~$1 / 2$ provided that the difficulty~$1 / \tau$ of the data model increases slower than the square root of the input dimension~$d$.  The latter assumption is considerably weaker than in the results of \citet{SchmidtSTTM18} on the Bernoulli data model, where the bound is in terms of the fourth root.  The statement also tells us that this failure cannot be fixed by choosing in advance a different mapping from the class labels of the original task to the class labels of the adversarial task.  Since we consider binary classification tasks here, the mapping can be represented by a multiplier $m \in \{\pm 1\}$.

\begin{restatable}{proposition}{firstprfail}
\label{pr:fail}
Suppose network weights~$\vec{\theta}$ are is in Corollary~\ref{cor:bias}, class label mapping $m \in \{\pm 1\}$ is arbitrary, data model~$\mathcal{D}$ is any $(\vec{\phi}, \rho, \tau)$-Bernoulli distribution such that $m \cos \angle(\vec{v}_1 - \vec{v}_{-1}, \vec{\phi}) < 0$, and adversarial program~$\vec{p}$ is arbitrary.  Then we have that
\[\mathbb{P}_{(\vec{x}, y) \sim \mathcal{D}}
  \{m \, y \, \mathcal{N}_{\vec{\theta}}(\vec{p} + \vec{x}) > 0\} \,\leq\,
  \frac{1}{2} +
  \frac{1}{2} \, e^{-2 d \tau^2 \cos^2 \angle(\vec{v}_1 - \vec{v}_{-1}, \vec{\phi})}\;.\]
\end{restatable}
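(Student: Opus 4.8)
The plan is to first use the structure of the converged weights from Corollary~\ref{cor:bias} to collapse $\mathcal{N}_{\vec{\theta}}$ to a plain difference of two ReLU units, then split the accuracy over the two labels by a coupling, and finally reduce the resulting event to a one-sided large deviation of $(\vec{v}_1 - \vec{v}_{-1})^\top \vec{\xi}$ that I control by Hoeffding's inequality. For the first step I would observe that, by Corollary~\ref{cor:bias}, every neuron with $a_j \neq 0$ has $\vec{w}_j$ pointing along $\vec{v}_1$ or $\vec{v}_{-1}$ according to the sign of $a_j$; since $\vec{w}_j / \|\vec{w}_j\|$ is a unit vector, the stated identity forces $\sum a_j^2 = \|\vec{v}_s\|$ summed over each sign class $s$. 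Using $|a_j| = \|\vec{w}_j\|$ and positive homogeneity of $\psi$, each sign class collapses to a single term with coefficient $(\sum a_j^2) / \|\vec{v}_s\| = 1$, so that for every input $\vec{z}$
\[\mathcal{N}_{\vec{\theta}}(\vec{z}) = \psi(\vec{v}_1^\top \vec{z}) - \psi(\vec{v}_{-1}^\top \vec{z})\;.\]
The equality of the two coefficients will be indispensable below.

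Next I would decompose the accuracy by conditioning on the label. Writing $\vec{x} = y \rho \vec{\xi}$, where $\vec{\xi}$ is $\vec{\phi}$ with each coordinate sign flipped independently with probability $1/2 - \tau$ and is independent of~$y$, the target probability equals $\tfrac{1}{2}[\mathbb{P}(A) + \mathbb{P}(B)]$, where $A = \{m \mathcal{N}_{\vec{\theta}}(\vec{p} + \rho \vec{\xi}) > 0\}$ and $B = \{m \mathcal{N}_{\vec{\theta}}(\vec{p} - \rho \vec{\xi}) < 0\}$. By inclusion--exclusion $\mathbb{P}(A) + \mathbb{P}(B) \leq 1 + \mathbb{P}(A \cap B)$, so it suffices to prove the bound $\mathbb{P}(A \cap B) \leq e^{-2 d \tau^2 \cos^2 \angle(\vec{v}_1 - \vec{v}_{-1}, \vec{\phi})}$.

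The crux is a deterministic implication: on $A \cap B$ we must have $m (\vec{v}_1 - \vec{v}_{-1})^\top \vec{\xi} > 0$. To see this for $m = 1$, set $a = \vec{v}_1^\top \vec{p}$, $b = \vec{v}_{-1}^\top \vec{p}$, $u = \vec{v}_1^\top \vec{\xi}$, and $v = \vec{v}_{-1}^\top \vec{\xi}$. Then $A$ gives $\psi(a + \rho u) > \psi(b + \rho v) \geq 0$, and combining $\psi(a + \rho u) = a + \rho u$ with $\psi(b + \rho v) \geq b + \rho v$ yields $\rho(u - v) > b - a$; symmetrically, $B$ yields $\rho(u - v) > a - b$. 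Hence $\rho(u - v) > |a - b| \geq 0$, i.e.~$(\vec{v}_1 - \vec{v}_{-1})^\top \vec{\xi} > 0$; the case $m = -1$ is identical with the roles of $A$ and $B$ exchanged. This is precisely where the equal coefficients are essential: with unequal weights the sign of $\mathcal{N}_{\vec{\theta}}$ is no longer governed by $\vec{v}_1 - \vec{v}_{-1}$ and the implication breaks. I expect this deterministic step to be the main obstacle.

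Finally I would bound $\mathbb{P}(m (\vec{v}_1 - \vec{v}_{-1})^\top \vec{\xi} > 0)$. The random variable $m (\vec{v}_1 - \vec{v}_{-1})^\top \vec{\xi} = \sum_i m (\vec{v}_1 - \vec{v}_{-1})_i \phi_i (\vec{\xi}_i / \phi_i)$ is a sum of independent bounded terms whose mean is $2 \tau \, m (\vec{v}_1 - \vec{v}_{-1})^\top \vec{\phi} < 0$ by hypothesis, with the $i$-th term ranging over an interval of width $2 |(\vec{v}_1 - \vec{v}_{-1})_i| / \sqrt{d}$. A one-sided Hoeffding bound with deviation equal to the absolute mean, together with $\|\vec{\phi}\| = 1$ so that $\cos^2 \angle(\vec{v}_1 - \vec{v}_{-1}, \vec{\phi}) = ((\vec{v}_1 - \vec{v}_{-1})^\top \vec{\phi})^2 / \|\vec{v}_1 - \vec{v}_{-1}\|^2$, produces exactly the exponential $e^{-2 d \tau^2 \cos^2 \angle(\vec{v}_1 - \vec{v}_{-1}, \vec{\phi})}$. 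Substituting back into $\tfrac{1}{2} + \tfrac{1}{2} \mathbb{P}(A \cap B)$ completes the proof, with the concentration step being routine once the deterministic implication is in hand.
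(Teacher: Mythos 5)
Your proof is correct, and it ends at the same place as the paper's --- a one\nobreakdash-sided tail bound on $(\vec{v}_1 - \vec{v}_{-1})^\top \vec{\xi}$, which the paper obtains by citing Lemma~26 of Schmidt et al.\ where you rederive it via Hoeffding --- but the intermediate decomposition is genuinely different. After the same collapse of the network to $\psi(\vec{v}_1^\top \vec{z}) - \psi(\vec{v}_{-1}^\top \vec{z})$, the paper splits on the sign of $\vec{v}_1^\top \vec{p} - \vec{v}_{-1}^\top \vec{p}$: depending on that sign, it shows that \emph{one} of the two conditional accuracies $\mathbb{P}\{\cdot \mid y = s\}$ is exponentially small, using the single one\nobreakdash-sided implication $\psi(a + \rho u) > \psi(b + \rho v) \Rightarrow u > v$ valid when $a \leq b$, and bounds the other conditional probability by~$1$. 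You instead couple the two classes through the common flip vector $\vec{\xi}$, use $\mathbb{P}(A) + \mathbb{P}(B) \leq 1 + \mathbb{P}(A \cap B)$, and note that on $A \cap B$ the two one\nobreakdash-sided implications combine to give $\rho(u - v) > |a - b| \geq 0$, so no case analysis on $\vec{p}$ is needed. The two routes have the same cost and produce the same constant; yours is slightly slicker in avoiding the case split and in making explicit why the equal unit coefficients of the two maximum\nobreakdash-margin neurons are essential, while the paper's version has the minor advantage of identifying, for a given program $\vec{p}$, which of the two classes is the one whose accuracy collapses.
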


\section{Experiments}
\label{s:exper}

\paragraph{Network architectures and initialisation.}
Our experiments\footnote{We are making code to run the experiments available at \url{https://github.com/englert-m/adversarial_reprogramming}.} are conducted using the following six network architectures:
ResNet-50~\citep{HeZRS16cvpr},
ResNet-50V2, ResNet-101V2, ResNet-152V2~\citep{HeZRS16eccv},
Inception-v3~\citep{SzegedyVISW16}, and
EfficientNet-B0~\citep{TanL19}.

We use the networks exactly as implemented in Keras in TensorFlow 2.8.1 including the method for randomly initialising the trainable weights. For biases, this means they are initialised with~$0$. For all other trainable weights, mostly, the Glorot uniform initialiser~\citep{GlorotB10} is used in this implementation. EfficientNet is an exception, where many layers instead use a truncated normal distribution that has mean~$0$ and standard deviation $\sqrt{2/\text{number of output units}}$.

All the networks we experiment with (ResNet-50, ResNet-50V2, ResNet-101V2, ResNet-152V2, Inception-v3, and EfficientNet-B0) involve batch normalisation layers~\citep{IoffeS15}. Every such layer maintains a moving mean and a moving variance based on batches it has seen during training. The inputs are then normalised accordingly: they are shifted by the recorded mean and scaled by the inverse of the recorded variance. Note that the moving mean and variance values are not trainable, i.e., they are not subject to updates by the optimiser during training. During inference, the moving mean and variance values are no longer updated, and the normalisation is performed based on the last values recorded during training.

Crucially, in the default implementation of these networks, these moving mean and moving variance values are initialised as~$0$ and~$1$, respectively. Therefore, an untrained network initialised in this way will behave as if the batch normalisation layers were not present.

To obtain more sensible random networks, i.e., ones that can still make use of batch normalisation, we initialise the moving mean and variance of batch normalisation layers differently. We generate a batch of 50 random images (each pixel value is chosen independently and uniformly at random in the allowed range). This single batch is then fed through the random network and each batch normalisation layer records the mean and variance values it sees at its input (and normalises its output accordingly). The trainable weights of the network are not changed during this process.

We should note that, in addition to the moving mean and variance, batch normalisation layers can have \emph{trainable} weights, by means of which the output mean and variance can be tuned. Specifically, such a layer may have trainable weights~$\gamma$ and~$\beta$, and will scale its otherwise normalised output by~$\gamma$ and shift it by~$\beta$. If present, these trainable weights are initialised as~$1$ and~$0$ respectively, and hence have no effect. Our initialisation procedure does not modify these trainable weights and they are therefore not used in our random networks.

For each network, after randomly initialising its weights as set out in Section~\ref{s:exper} and its batch normalisation layers as described above, we keep it completely fixed: neither its weights nor its batch normalisation layers (i.e., their moving means and variances, and their weights if any) change in any way.

\paragraph{Combining input images with adversarial programs.}
Our adversarial programs are colour images whose sizes match the expected input size of the networks. This is $224 \times 224$ for all networks except Inception-v3, where it is $299 \times 299$.

We use two different schemes to combine input images with the adversarial program. The first scheme, used by \citet{elsayed2018adversarial}, is to take the adversarial program and overwrite a portion of it by the input image. We do this in such a way that the input image is, up to rounding, centred in the adversarial program. We can vary the construction by scaling the input image up or down before applying this procedure. In particular, we use a parameter $r\in[0,1]$ and scale the input image, using bilinear interpolation, in such a way that $r$~times the width of the adversarial program is equal to the width of the scaled image. That we focus on the width is not important because all our inputs and programs are square.  An illustration is shown in Figure~\ref{f:scheme.1}.

\begin{figure}
\[\underbrace
  {\includegraphics[align=c,scale=.2]{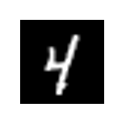}}
  _{\substack{\mathclap{\text{input from}} \\
              \mathclap{\text{adversarial task}}}}
  \quad \xrightarrow{\qquad} \quad
  \underbrace
  {\includegraphics[align=c,scale=.2]{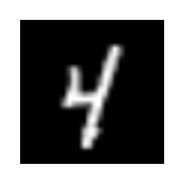}}
  _{\substack{\mathclap{\text{scaled input,}} \\
              \mathclap{\text{to width } r \cdot L}}}
  \quad + \quad
  \underbrace
  {\includegraphics[align=c,scale=.2]{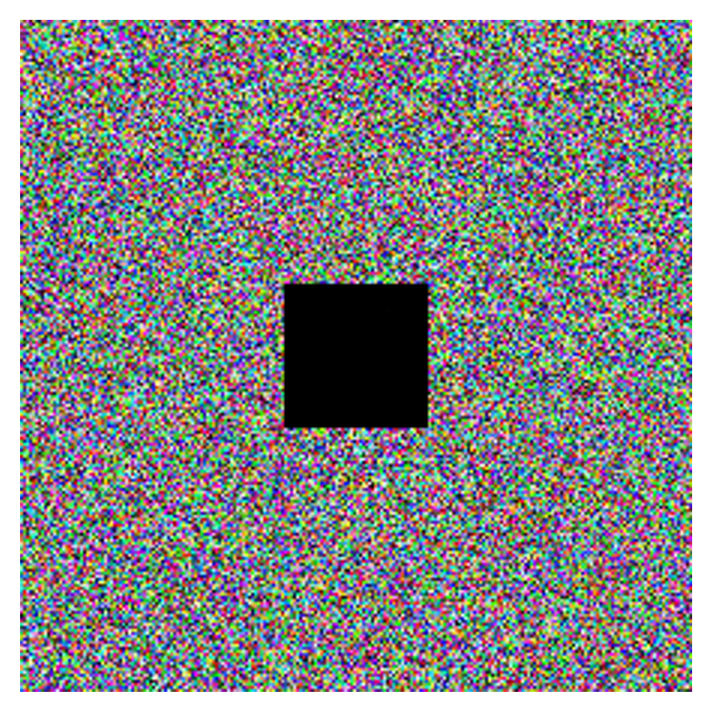}}
  _{\substack{\text{adversarial program,} \\
              \text{of width } L}}
  \quad = \quad
  \underbrace
  {\includegraphics[align=c,scale=.2]{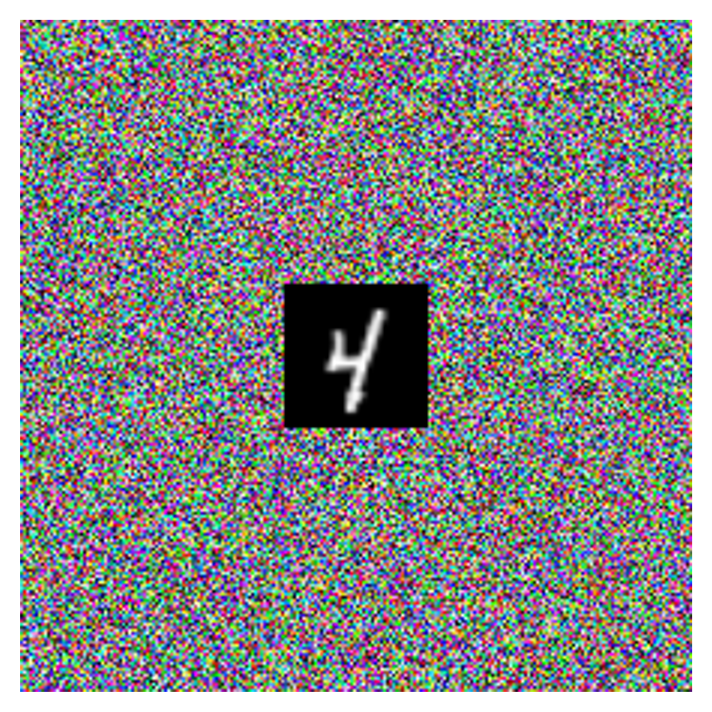}}
  _{\substack{\text{combined input,} \\
              \text{fed to network}}}\]
\caption{Scheme~1 for combining input images with adversarial programs.  In this example, the width and height of the adversarial program are~$224$ and the parameter~$r$ equals $2^{-20/9} \approx 0.214$, so the input image is scaled to width and height $r \cdot 224$~rounded, which is~$48$.}
\label{f:scheme.1}
\end{figure}

Our second scheme involves scaling the image to the same size as the adversarial program and then taking a convex combination of the two. We use a parameter $v\in[0,1]$ to specify how much weight the input image should get in this convex combination. Specifically, the combined image is obtained by calculating $v\cdot I + (1-v)\cdot P$, where $I$~is the input image and $P$~is the adversarial program.  An illustration is shown in Figure~\ref{f:scheme.2}.

\begin{figure}
\[\underbrace
  {\includegraphics[align=c,scale=.2]{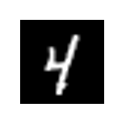}}
  _{\substack{\mathclap{\text{input from}} \\
              \mathclap{\text{adversarial task}}}}
  \; \xrightarrow{\qquad} \;
  v \cdot
  \underbrace
  {\includegraphics[align=c,scale=.2]{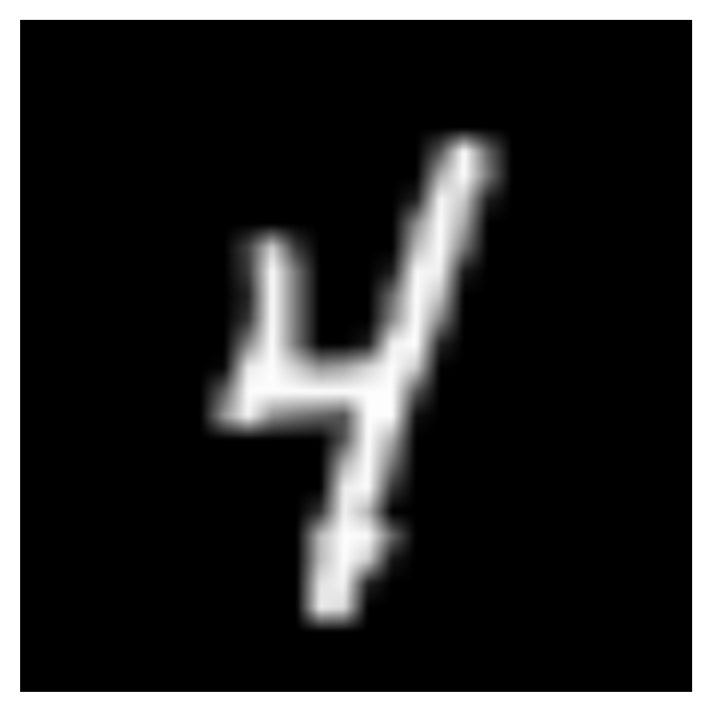}}
  _{\text{scaled input}}
  \; + \;
  (1 - v) \cdot
  \underbrace
  {\includegraphics[align=c,scale=.2]{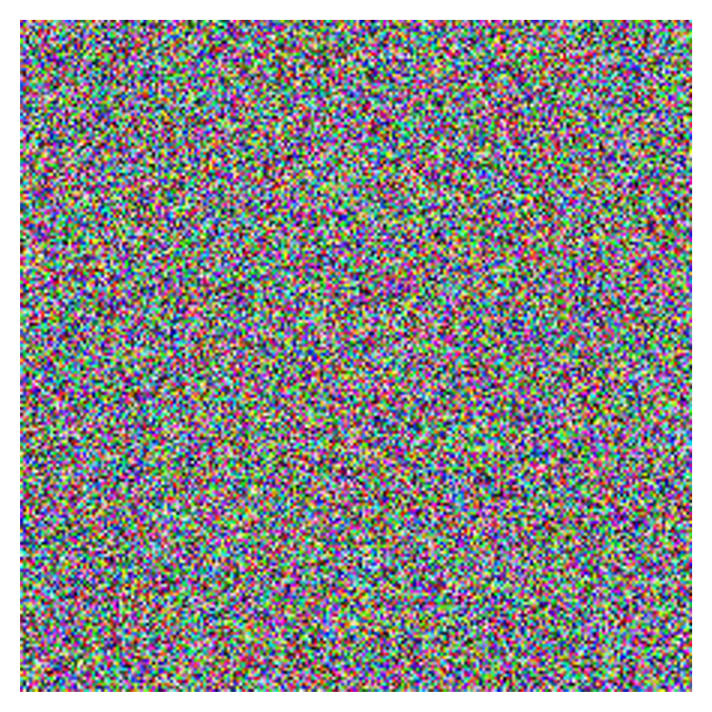}}
  _{\text{adversarial program}}
  \; = \;
  \underbrace
  {\includegraphics[align=c,scale=.2]{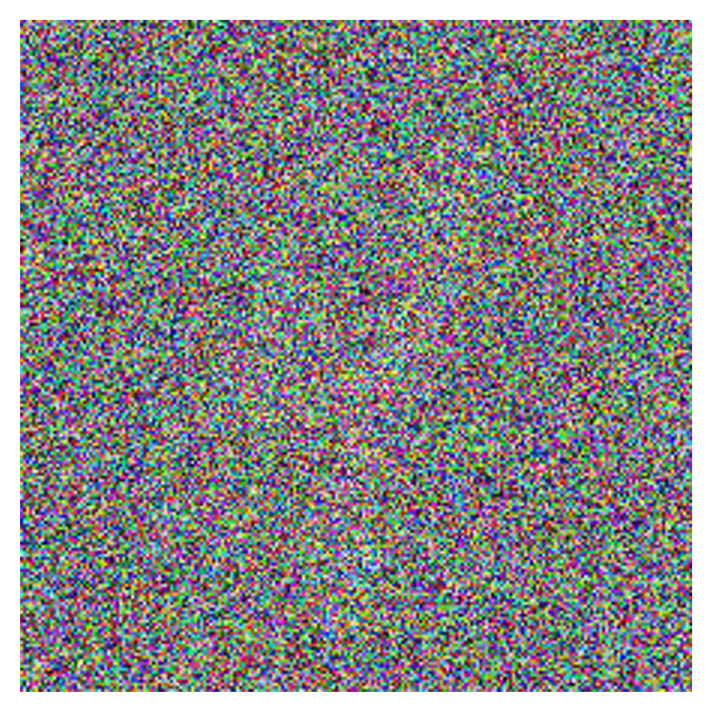}}
  _{\substack{\text{combined input,} \\
              \text{fed to network}}}\]
\caption{Scheme~2 for combining input images with adversarial programs.  In this example, the sizes of the scaled input image and the adversarial program are $224 \times 224$.  The parameter~$v$ equals $2^{-40/9} \approx 0.046$, so the weight of the input image in the convex combination with the adversarial program is approximately $4.6\%$, which makes it faintly visible.}
\label{f:scheme.2}
\end{figure}

\paragraph{Adversarial task dataset.}
\citet{elsayed2018adversarial} evaluate adversarial reprogramming on random networks using the MNIST~\citep{LeCunBBH98} dataset. In other words, they were asking whether it is possible to repurpose a random network for the task of classifying the handwritten digits from the MNIST dataset. We use the same dataset, which consists of $60,\!000$ training images and $10,\!000$ test images, for our experiments. It is available under the Creative Commons Attribution Share-Alike 3.0 licence.

The networks we use classify inputs into $1,\!000$ classes. We map the $10$~labels of the MNIST dataset onto the first~$10$ of these classes.

For additional experimental results on the Fashion-MNIST and Kuzushiji-MNIST datasets, please see \appendixref{app:other}.

\paragraph{Finding and evaluating adversarial programs.}
Internally, we represent adversarial programs using unconstrained weights. We then apply a softsign function to the weights to map them into the range $(-1,1)$, and further shift and scale the program such that the pixel values lie in the same range that is used for the input images. The program is initialised in such a way that after the application of the softsign function, each value lies uniformly at random in $(-1,1)$.

We use the $60,\!000$ training images to run an Adam optimiser~\citep{KingmaB14} with learning rate $0.01$ and a batch size of~$50$ to optimise the unconstrained weights of the adversarial program. We report the accuracy on the $10,\!000$ test images after $20$~epochs, please see Figure~\ref{f:experimentsmnist}.

The experiments were mainly run on two internal clusters utilising a mix of NVIDIA GPUs such as GeForce RTX 3080 Ti, Quadro RTX 6000, GeForce RTX 3060, GeForce RTX 2080 Ti, and GeForce GTX 1080.
Depending on the network, optimising a single adversarial program for 20 epochs takes between $30$~minutes and $1.5$~hours on a standard desktop computer with two NVIDIA GeForce RTX 3080 Ti GPUs.

We did not explore different optimisers and learning rates, since our first choices already resulted in suitable adversarial programs for these random networks. We only reduced the batch size to~$50$ after first trying~$100$, in order to reduce the requirement on GPU memory and be able to easily run the experiments on a wider range of hardware.
However, we did extensively explore the two different schemes of combing input images with adversarial programs, and different values for the respective parameters~$r$ and~$v$.
For each network, and each value of~$r$ and~$v$, we ran $5$~experiments, each with a new random initialisation of the network, and are reporting the average of the test accuracy.

\newcommand{\errorbarplot}[3]{%
\addplot
plot [error bars/.cd, y dir=none, y explicit]
table[x=#3, y=val-acc-mean-#1, y error plus expr=\thisrow{val-acc-max-#1}-\thisrow{val-acc-mean-#1}, y error minus expr=\thisrow{val-acc-mean-#1}-\thisrow{val-acc-min-#1}, col sep=comma]{aggregate_#2_#3.csv};
}

\begin{figure}
    \centering
    \begin{tikzpicture}
        \begin{groupplot}[group style={group size= 2 by 1},height=5.9cm,width=7.3cm]
            \nextgroupplot[xlabel=$r$, ylabel=test accuracy, xmode=log,minor tick num=1,ymin=0, ymax=1, xmax=1,log ticks with fixed point, ymajorgrids=true,  yminorgrids=true, minor grid style={line width=.2pt,draw=gray!50,legend pos=outer north east},title=Scheme 1]

            \errorbarplot{resnet50}{mnist}{r}
            \errorbarplot{ResNet50V2}{mnist}{r}
            \errorbarplot{ResNet101V2}{mnist}{r}
            \errorbarplot{inception-v3}{mnist}{r}
            \errorbarplot{EfficientNetB0}{mnist}{r}
            \errorbarplot{ResNet152V2}{mnist}{r}
            \coordinate (left) at (rel axis cs:0,1);

            \nextgroupplot[xlabel=$v$, xmode=log,minor tick num=1,ymin=0, ymax=1, xmax=1,log ticks with fixed point, ymajorgrids=true,  yminorgrids=true, minor grid style={line width=.2pt,draw=gray!50,legend pos=outer north east},title=Scheme 2,legend columns=3,legend to name=networksmnist]

\errorbarplot{resnet50}{mnist}{v}\addlegendentry{ResNet-50}
\errorbarplot{ResNet50V2}{mnist}{v}\addlegendentry{ResNet-50V2}
\errorbarplot{ResNet101V2}{mnist}{v}\addlegendentry{ResNet-101V2}
\errorbarplot{inception-v3}{mnist}{v}\addlegendentry{Inception-v3}
\errorbarplot{EfficientNetB0}{mnist}{v}\addlegendentry{EfficientNet-B0}
\errorbarplot{ResNet152V2}{mnist}{v}\addlegendentry{ResNet-152V2}

            \coordinate (right) at (rel axis cs:1,1);

        \end{groupplot}
        \coordinate (middle) at ($(left)!.5!(right)$);
        \node[below] at (middle |- current bounding box.south) {\pgfplotslegendfromname{networksmnist}};
    \end{tikzpicture}
    \caption{The accuracy achieved by the adversarial program on the MNIST test set for different parameters of the two schemes of combining input images with adversarial programs. The horizontal axes are logarithmic. The values plotted are averages over $5$~trials, which are listed together with the standard deviations in \appendixref{app:data}.}
    \label{f:experimentsmnist}
\end{figure}
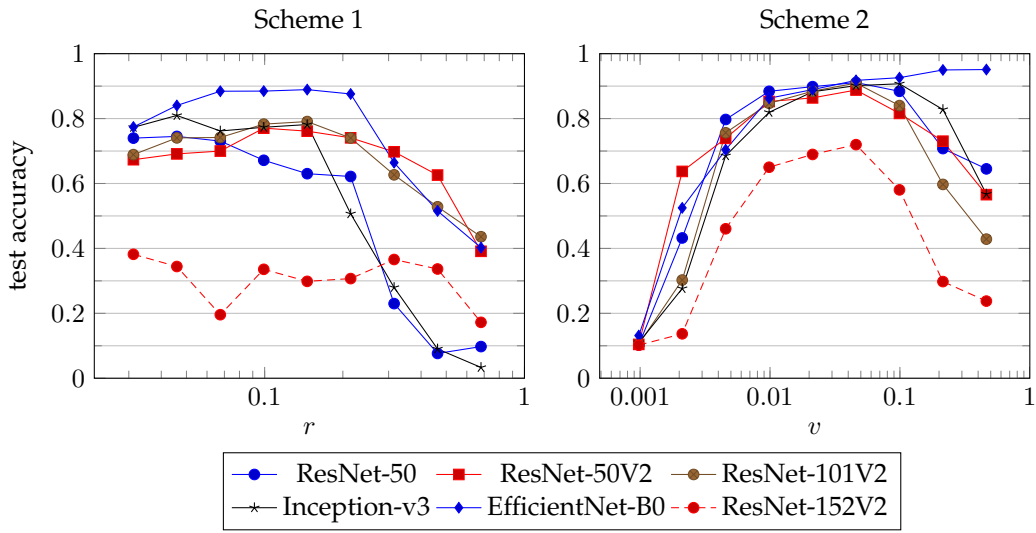

\paragraph{Discussion.}
Overall, the second scheme of combining input images with adversarial programs appears to give better and more reliable results in our experiments. For both schemes, the choice of parameters is important. Clearly, when~$r$ or~$v$ is~$0$, the input image is not visible to the network at all. On the other hand, when~$r$ or~$v$ is~$1$, there no longer is an adversarial program. In most cases, best results are achieved when the adversarial program is significantly larger (either by actual size in the first scheme, or in terms of pixel value ranges in the second scheme) than the input image.

In the second scheme in particular, we see accuracies on the test set which are lower, but not much lower than what \citet{elsayed2018adversarial} reported for networks trained on ImageNet. For $v\approx 0.046$ for example, we see accuracies of $91.8\%$, $91.1\%$, $90.9\%$, $90.2\%$, $88.8\%$, $72.0\%$ for EfficientNet-B0, ResNet-50, ResNet-101V2, Inception-v3, ResNet-50V2, ResNet-152V2, respectively.  This suggests that, while training, say, on ImageNet may impact the possibility of finding suitable adversarial programs, such a training may be less important than previously thought and other factors are of significant importance.

\section{Conclusion and future work}
\label{s:concl}

We proved the first theoretical results on adversarial reprogrammability of neural networks, in which we focused on architectures with two layers and ReLU activations, and on Bernoulli and Gaussian adversarial tasks.  Provided the input dimension is sufficiently large, and for a wide variety of parameter regimes, our results show that: firstly, arbitrarily high reprogramming accuracies are achievable in expectation for networks with random weights; and secondly, reprogramming accuracies that are no better than guessing may be unavoidable for networks that were trained for many iterations with small learning rates on orthogonally separable datasets.

In the theoretical results that conclude arbitrarily high expected reprogramming accuracies, we assumed that the width of the random network is no grater than its input dimension, which is similar to the assumption on widths in e.g.~\citet{DanielyS20} and enabled us to show existence of suitable adversarial programs by matrix inversion.  Interesting directions for future work include relaxing this assumption, extending the whole theoretical analysis to deeper networks, and investigating derivation of adversarial programs by gradient methods.

It would also be interesting to consider more permissive data models; however, our theoretical results on the failure of adversarial reprogramming on networks that were trained to infinity rely on implicit bias properties of gradient methods, and in that area separability assumptions on training data are common and appear challenging to lift (see e.g.~\citet{LyuLWA21}).

The outcomes of our experiments, which are on six realistic convolutional network architectures designed for image classification, and on three adversarial tasks provided by the MNIST, Fashion-MNIST and Kuzushiji-MNIST datasets, are consistent with our theoretical results on high reprogramming accuracies.  Both the experimental plots of test accuracies and the theoretical bounds of network outputs indicate existence of ``sweet spot'' maximising parameter values.\footnote{The varying of the scaling parameters~$r$ and~$v$ in the experiments corresponds in the Bernoulli data model case to varying the radius~$\rho$ while keeping all other parameters fixed, and in the Gaussian data model case to varying the mean radius~$\varrho$ and the variance parameter~$\varsigma$ while keeping fixed their ratio and all other parameters.}  Since in the experiments, the network architectures have widths, depths and other features that are currently beyond our theoretical assumptions, and the adversarial programs are derived by gradient descent, their outcomes provide further motivation for extending the theory as suggested above.

A conclusion that emerges across our experimental results is that the EfficientNet-B0 architecture tends to be more susceptible to adversarial reprogramming, and the ResNet-152V2 architecture tends to be less susceptible, than the remaining four.  We suggest for future work investigating the causes of this, as well as seeking to reprogram random networks for adversarial tasks that are more difficult than MNIST, Fashion-MNIST and Kuzushiji-MNIST.  In the context of the latter, it would be interesting to consider also vision transformer~\citep{DosovitskiyB0WZ21} architectures.

Another direction for experimental work is to verify that long training can cause adversarial reprogramming to fail.

\phantomsection
\addcontentsline{toc}{section}{Acknowledgments and Disclosure of Funding}
\ifneurips
\begin{ack}
\else
\section*{Acknowledgments and Disclosure of Funding}
\fi
We are grateful to the anonymous reviewers, whose comments helped us improve the paper.  We also thank Maria Ovens for linguistic advice.

This research was partially supported by the Centre for Discrete Mathematics and Its Applications (DIMAP) at the University of Warwick.
\ifneurips
\end{ack}
\else\fi

\phantomsection
\addcontentsline{toc}{section}{References}
\bibliographystyle{plainnat}
{\small \bibliography{main}}

\ifchecklist
\newpage
%%%%%%%%%%%%%%%%%%%%%%%%%%%%%%%%%%%%%%%%%%%%%%%%%%%%%%%%%%%%
\section*{Checklist}
\addcontentsline{toc}{section}{Checklist}

%%% BEGIN INSTRUCTIONS %%%
% The checklist follows the references.  Please
% read the checklist guidelines carefully for information on how to answer these
% questions.  For each question, change the default \answerTODO{} to \answerYes{},
% \answerNo{}, or \answerNA{}.  You are strongly encouraged to include a {\bf
% justification to your answer}, either by referencing the appropriate section of
% your paper or providing a brief inline description.  For example:
% \begin{itemize}
%   \item Did you include the license to the code and datasets? \answerYes{See Section~\ref{gen_inst}.}
%   \item Did you include the license to the code and datasets? \answerNo{The code and the data are proprietary.}
%   \item Did you include the license to the code and datasets? \answerNA{}
% \end{itemize}
% Please do not modify the questions and only use the provided macros for your
% answers.  Note that the Checklist section does not count towards the page
% limit.  In your paper, please delete this instructions block and only keep the
% Checklist section heading above along with the questions/answers below.
%%% END INSTRUCTIONS %%%

\begin{enumerate}

\item For all authors...
\begin{enumerate}
  \item Do the main claims made in the abstract and introduction accurately reflect the paper's contributions and scope?
    \answerYes{}
  \item Did you describe the limitations of your work?
    \answerYes{They are determined by the assumptions of the theoretical results and by the restrictions of the experiments ran, and we stated both.}
  \item Did you discuss any potential negative societal impacts of your work?
    \answerYes{They stem from potential nefarious uses of adversarial reprogramming, which we indicated on page~\pageref{p:nefarious} with a reference to a fuller discussion in the literature.}
  \item Have you read the ethics review guidelines and ensured that your paper conforms to them?
    \answerYes{}
\end{enumerate}

\item If you are including theoretical results...
\begin{enumerate}
  \item Did you state the full set of assumptions of all theoretical results?
    \answerYes{}
        \item Did you include complete proofs of all theoretical results?
    \answerYes{In the paper together with its appendix.}
\end{enumerate}

\item If you ran experiments...
\begin{enumerate}
  \item Did you include the code, data, and instructions needed to reproduce the main experimental results (either in the supplemental material or as a URL)?
    \answerYes{At the URL referenced in Section~\ref{s:exper}.}
  \item Did you specify all the training details (e.g., data splits, hyperparameters, how they were chosen)?
    \answerYes{See Section~\ref{s:exper} and \appendixref{app:other}.}
        \item Did you report error bars (e.g., with respect to the random seed after running experiments multiple times)?
    \answerNo{We reported standard deviations in \appendixref{app:data}.}
        \item Did you include the total amount of compute and the type of resources used (e.g., type of GPUs, internal cluster, or cloud provider)?
    \answerYes{See Section~\ref{s:exper} and \appendixref{app:other}.}
\end{enumerate}

\item If you are using existing assets (e.g., code, data, models) or curating/releasing new assets...
\begin{enumerate}
  \item If your work uses existing assets, did you cite the creators?
    \answerYes{}
  \item Did you mention the license of the assets?
    \answerYes{}
  \item Did you include any new assets either in the supplemental material or as a URL?
    \answerNA{}
  \item Did you discuss whether and how consent was obtained from people whose data you're using/curating?
    \answerNA{}
  \item Did you discuss whether the data you are using/curating contains personally identifiable information or offensive content?
    \answerNA{}
\end{enumerate}

\item If you used crowdsourcing or conducted research with human subjects...
\begin{enumerate}
  \item Did you include the full text of instructions given to participants and screenshots, if applicable?
    \answerNA{}
  \item Did you describe any potential participant risks, with links to Institutional Review Board (IRB) approvals, if applicable?
    \answerNA{}
  \item Did you include the estimated hourly wage paid to participants and the total amount spent on participant compensation?
    \answerNA{}
\end{enumerate}

\end{enumerate}

%%%%%%%%%%%%%%%%%%%%%%%%%%%%%%%%%%%%%%%%%%%%%%%%%%%%%%%%%%%%
\else
\fi

\ifappendix
\ifchecklist\newpage\else\fi
\appendix

\tableofcontents

\section{Additional related works}
\label{app:related}

We focus on literature that is most related to this work and not already discussed in Section~\ref{s:intro}.

\paragraph{Understanding adversarial examples.}

\else\nocite{IlyasSTETM19}\nocite{ShamirMB21}\fi\ifappendix
Adversarial reprogramming can be seen as a challenging type of adversarial attack.  Instead of finding perturbations that push examples over a classification boundary, the goal is to find an adversarial program which is an offset that, when added to any input from the adversarial task, with high probability makes it classified as desired.  Equivalently, adversarial reprogramming seeks a single perturbation that pushes all inputs from the adversarial task to their respective target classes.  Therefore understanding when and why neural networks are susceptible to adversarial reprogramming is an interesting piece of the puzzle of understanding adversarial examples.  For example, our results on random networks suggest that sensitivity to well-generalising features in training data, which was identified by \citet{IlyasSTETM19} as a key cause of adversarial vulnerability, does not fully explain susceptibility to adversarial reprogramming.  A different conceptual framework for understanding adversarial examples, in terms of a dimpled manifold model, was proposed by \citet{ShamirMB21}; here our result that training for longer can be a defence against adversarial reprogramming suggests that it may be interesting to explore how its success depends on shapes of the dimples in classification boundaries as they evolve during training.

\else\nocite{BartlettBC21}\nocite{DanielyS20}\nocite{BubeckCGT21}\nocite{MontanariW22}\nocite{WangUMA22}\fi\ifappendix
More directly related to our work are the results of \citet{BartlettBC21} who, building on the works of \citet{DanielyS20} and \citet{BubeckCGT21}, proved that random ReLU networks of constant depth have small adversarial perturbations that can be found in one step along the direction opposite to the input gradient.  Those were advanced further recently by \citet{MontanariW22} removing a restriction on layer widths, and by \citet{WangUMA22} encompassing two-layer networks trained in the so-called lazy regime.  In comparison, our results show existence of adversarial programs that are points where the classification boundary behaves approximately as required for the adversarial task, and are sensitive to the radius and the difficulty of the latter as parameters; however we leave theoretical investigations of obtaining adversarial programs by gradient methods for future work.

\paragraph{Implicit bias of gradient descent.}

\else\nocite{SoudryHNGS18}\nocite{ZhangBHRV17}\fi\ifappendix
By showing that, for linear logistic regression on linearly separable data, gradient descent always converges to the maximum-margin solution, \citet{SoudryHNGS18} initiated a fruitful research direction on implicit bias of gradient descent, which tackles one of the greatest open questions in deep learning: why do overparameterised deep neural networks generalise well~\citep{ZhangBHRV17}.

\else\nocite{LyuL20}\nocite{JiT20}\nocite{PhuongL21}\nocite{WangP22}\nocite{LyuLWA21}\nocite{VardiYS22}\fi\ifappendix
Some of the works closest to ours are: \citet{LyuL20,JiT20}, who established that, for positively homogeneous deep networks and either exponential or logistic loss, if at some time training attains perfect accuracy and a small loss, then continuing the training makes the loss converge to~$0$ and the weights converge in direction to a Karush-Kuhn-Tucker point of a constrained optimisation problem on margin maximisation; \citet{PhuongL21}, who showed that, from small and balanced initialisations, and when trained with logistic loss on orthogonally separable data whose positive and negative support examples span the whole space, two-layer ReLU networks converge to a linear combination of two maximum-margin neurons; \cite{WangP22}, who derived a new proof via characterising the implicit bias of unregularised non-convex gradient flow as convex regularisation of an equivalent convex model; \citet{LyuLWA21}, who proved that, from small initialisations, and when trained with logistic loss on symmetric linearly separable data, two-layer networks with the leaky ReLU activation converge to a globally maximum-margin linear classifier, and that the result is fragile with respect to the symmetry assumption; and \citet{VardiYS22}, who showed that, for two-layer ReLU networks with first-layer biases and for data whose points are neither too many nor too correlated, if training converges then it produces non-robust solutions in spite of robust ones existing.  To supplement our discussion in Section~\ref{s:intro} of the motivations for considering random versus trained networks in this work, we remark that we see merit in the point of \citet{VardiYS22} that \emph{``trained networks are clearly not random, and properties that hold in random networks may not hold in trained networks.''}

\paragraph{Physical adversarial examples.}

\else\nocite{SharifBBR16}\nocite{BrownMRAG17}\fi\ifappendix
Currently more distant but with potential for interesting connections to our work is the vibrant research direction on adversarial examples in the physical world.  For example, our results may inform attempts to design adversarial accessories with prescribed effects for a set of participants~\citep{SharifBBR16}, or adversarial patches that cause prescribed classifications for a set of objects~\citep{BrownMRAG17}.

\section{Length of the adversarial program}
\label{app:p.p'}

To estimate the length of the adversarial program defined in Section~\ref{s:random}, we make use of the following known bounds on the smallest and largest singular values of matrices of independent identical Gaussians.

\else\nocite{VershyninEK12}\fi\ifappendix
\begin{theorem}[see e.g.\ {\citet[Corollary~5.35]{VershyninEK12}}]
\label{th:singular}
Let $\vec{A}$ be an $n \times m$ matrix whose entries are independent centred Gaussians with variance~$1 / d$.  Then with probability at least $1 - \gamma$ its smallest and largest singular values satisfy
\[\frac{\sqrt{m} - \sqrt{n} - \sqrt{2 \ln(2 / \gamma)}}{\sqrt{d}}
  \leq s_{\mathrm{min}}(\vec{A}) \leq s_{\mathrm{max}}(\vec{A}) \leq
  \frac{\sqrt{m} + \sqrt{n} + \sqrt{2 \ln(2 / \gamma)}}{\sqrt{d}}\;.\]
\end{theorem}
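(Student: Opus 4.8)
The plan is to reduce to the standard Gaussian case, bound the expectations of the extreme singular values by Gaussian comparison inequalities, and then upgrade these to high-probability bounds by Gaussian concentration of measure. Throughout I assume $m \geq n$ (the regime used later, where $\vec{W}$ is $k \times d$ with $k \leq d$); when $m < n$ the lower bound is negative and hence vacuous, while the upper bound follows identically. Writing $\vec{A} = \vec{B} / \sqrt{d}$, where $\vec{B}$ is the $n \times m$ matrix with independent standard Gaussian entries, singular values scale as $s_{\mathrm{min}}(\vec{A}) = s_{\mathrm{min}}(\vec{B}) / \sqrt{d}$ and likewise for $s_{\mathrm{max}}$, so it suffices to prove, with probability at least $1 - \gamma$, the bounds $\sqrt{m} - \sqrt{n} - t \leq s_{\mathrm{min}}(\vec{B})$ and $s_{\mathrm{max}}(\vec{B}) \leq \sqrt{m} + \sqrt{n} + t$ with $t = \sqrt{2 \ln(2 / \gamma)}$. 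I would use the variational descriptions $s_{\mathrm{max}}(\vec{B}) = \max_{\vec{u} \in S^{n-1},\, \vec{v} \in S^{m-1}} \vec{u}^\top \vec{B} \vec{v}$ and, since $m \geq n$ makes $\vec{B}$ full row rank almost surely, $s_{\mathrm{min}}(\vec{B}) = \min_{\vec{u} \in S^{n-1}} \max_{\vec{v} \in S^{m-1}} \vec{u}^\top \vec{B} \vec{v}$.

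Next I would control the expectations through the Gaussian process $X_{\vec{u}, \vec{v}} = \vec{u}^\top \vec{B} \vec{v}$, with covariance $\mathbb{E}[X_{\vec{u},\vec{v}} X_{\vec{u}',\vec{v}'}] = (\vec{u}^\top \vec{u}')(\vec{v}^\top \vec{v}')$, comparing it with $Y_{\vec{u}, \vec{v}} = \vec{g}^\top \vec{u} + \vec{h}^\top \vec{v}$ for independent standard Gaussian vectors $\vec{g} \in \mathbb{R}^n$ and $\vec{h} \in \mathbb{R}^m$. A short computation gives, for unit vectors, $\mathbb{E}(Y_{\vec{u},\vec{v}} - Y_{\vec{u}',\vec{v}'})^2 - \mathbb{E}(X_{\vec{u},\vec{v}} - X_{\vec{u}',\vec{v}'})^2 = 2 (1 - \vec{u}^\top \vec{u}')(1 - \vec{v}^\top \vec{v}') \geq 0$. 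Hence the increments of $X$ never exceed those of $Y$, so Sudakov–Fernique yields $\mathbb{E} s_{\mathrm{max}}(\vec{B}) \leq \mathbb{E} \max_{\vec{u},\vec{v}} Y_{\vec{u},\vec{v}} = \mathbb{E}\|\vec{g}\| + \mathbb{E}\|\vec{h}\| \leq \sqrt{n} + \sqrt{m}$, the last step by Jensen. The same increment identity shows that $X$ and $Y$ satisfy the hypotheses of Gordon's minimax comparison (the increments agree when $\vec{u} = \vec{u}'$, and $Y$ has the larger increments when $\vec{u} \neq \vec{u}'$), which gives $\mathbb{E} s_{\mathrm{min}}(\vec{B}) \geq \mathbb{E} \min_{\vec{u}} \max_{\vec{v}} Y_{\vec{u},\vec{v}} = \mathbb{E}\|\vec{h}\| - \mathbb{E}\|\vec{g}\|$. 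To extract the clean constant $\sqrt{m} - \sqrt{n}$ I would use that $k \mapsto \sqrt{k} - \mathbb{E}\|\vec{g}_k\|$, with $\vec{g}_k$ standard Gaussian in $\mathbb{R}^k$, is nonincreasing, so that $\sqrt{m} - \mathbb{E}\|\vec{h}\| \leq \sqrt{n} - \mathbb{E}\|\vec{g}\|$, i.e. $\mathbb{E}\|\vec{h}\| - \mathbb{E}\|\vec{g}\| \geq \sqrt{m} - \sqrt{n}$.

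Finally I would pass from expectations to high probability. Each singular value is a $1$-Lipschitz function of $\vec{B}$ in the Frobenius norm, by Weyl's perturbation inequality for singular values, so the Gaussian concentration inequality gives the one-sided tails $\mathbb{P}(s_{\mathrm{max}}(\vec{B}) > \mathbb{E} s_{\mathrm{max}}(\vec{B}) + t) \leq e^{-t^2 / 2}$ and $\mathbb{P}(s_{\mathrm{min}}(\vec{B}) < \mathbb{E} s_{\mathrm{min}}(\vec{B}) - t) \leq e^{-t^2 / 2}$. A union bound over these two events, combined with the expectation bounds, shows that both desired inequalities hold simultaneously with probability at least $1 - 2 e^{-t^2 / 2}$; choosing $t = \sqrt{2 \ln(2 / \gamma)}$ makes this $1 - \gamma$, and rescaling by $1 / \sqrt{d}$ recovers the statement.

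I expect the main obstacle to be the lower bound on $\mathbb{E} s_{\mathrm{min}}$: one must choose the comparison process $Y$ correctly, verify that it satisfies the asymmetric increment hypotheses of Gordon's minimax theorem (rather than merely the symmetric Sudakov–Fernique hypothesis used for $s_{\mathrm{max}}$), and then sharpen $\mathbb{E}\|\vec{h}\| - \mathbb{E}\|\vec{g}\|$ to $\sqrt{m} - \sqrt{n}$ via the monotonicity of $\sqrt{k} - \mathbb{E}\|\vec{g}_k\|$. The upper bound and the concentration step are comparatively routine.
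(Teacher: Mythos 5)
The paper does not actually prove this statement---it is quoted verbatim from the cited source (Vershynin's Corollary~5.35)---and your argument correctly reconstructs the standard proof of that corollary: Sudakov--Fernique for $\mathbb{E}\,s_{\mathrm{max}}$, Gordon's minimax comparison together with the monotonicity of $k \mapsto \sqrt{k}-\mathbb{E}\|\vec{g}_k\|$ for $\mathbb{E}\,s_{\mathrm{min}}$, and Gaussian concentration of $1$-Lipschitz functions with $t=\sqrt{2\ln(2/\gamma)}$ and a union bound to reach probability $1-\gamma$. The rescaling by $1/\sqrt{d}$, the increment computation $\mathbb{E}(Y-Y')^2-\mathbb{E}(X-X')^2=2(1-\vec{u}^\top\vec{u}')(1-\vec{v}^\top\vec{v}')$, and the disposal of the $m<n$ case are all handled correctly, so there is nothing to add.
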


It follows that, in the regime $k = o(d)$, applying Theorem~\ref{th:singular} to the $k \times d$ matrix~$\vec{W}$ with $\gamma = o_d(1)$ and $\gamma = e^{-o(d)}$ gives us that with probability $1 - o_d(1)$ the singular values are within $1 \pm o_d(1)$.  Then, from the inequalities in~\eqref{eq:p.p'}, we have that $\|\vec{p}\|$ is close to~$\|\vec{p}'\|$ for large~$d$.  Observe that $\|\vec{p}'\| = \sqrt{d}$ whenever $K^-$ is not empty, which occurs with probability $1 - 2^{-k}$ because the events $\{j \in K^-\}$ are independent and have probability~$\nicefrac{1}{2}$.

\section{Proofs for random networks}
\label{app:random}

Here we work with the notations and assumptions from Section~\ref{s:random}, so we have: a random two-layer ReLU network~$\mathcal{N}$ with input dimension~$d$, width~$k$ such that $k \leq d$, and weights~$\vec{w}_j$ and~$a_j$ for $j \in [k]$; a $(\vec{\phi}, \rho, \tau)$-Bernoulli data model; and an adversarial program~$\vec{p}$ defined as in~\eqref{eq:p.p'}.

First we prove a lemma that provides a lower (respectively, upper) bound on the sum of outputs of the ``helpful'' neurons for the adversarial program~$\vec{p}$ together with inputs $\vec{x}$ that are positively (respectively, negatively) correlated with the direction~$\vec{\phi}$ of the data model.  In the proof, we use concentration properties of the weights of~$\mathcal{N}$ to show that these neurons get close to computing together the inner product $\vec{\phi}^\top \vec{x}$, which equals $\rho \cos \alpha$ where $\alpha$~is the angle between~$\vec{\phi}$ and~$\vec{x}$.  The probability parameters~$\gamma$ and~$\gamma'$ can be regarded as constants that can be chosen to be arbitrarily small.  If $1 / |\cos \alpha|$ grows slower than the square root of the width~$k$, then the bound, up to a constant factor, is essentially $\sqrt{k / d} \, \rho \cos \alpha$.

\begin{lemma}
\label{l:K+}
Suppose $\vec{x} \in \mathbb{R}^d$ is such that $\|\vec{x}\| = \rho$ and $\vec{\phi}^\top \vec{x} \neq 0$.  Let $\alpha \coloneqq \angle(\vec{\phi}, \vec{x})$ and $y \coloneqq \sgn(\cos \alpha)$.  Then with probability at least $1 - \gamma - \gamma' / \sqrt{2 \pi}$ one has
\[y \!\sum_{j \in K^+}\! a_j \psi(\vec{w}_j^\top (\vec{p} + \vec{x})) \,>\,
  \frac{\sqrt{k} \rho}{\sqrt{d}} \!
  \left(\!\frac{\sqrt{\pi}}{4} |\cos \alpha| \!
          \left(\frac{1}{4 \sqrt{2}} -
                \sqrt{\frac{\ln(1 / \gamma)}{k}}\right) -
          \sqrt{\frac{2 \ln(1 / \gamma')}{k}}\right)\;,\]
provided that $\gamma' \leq 1 / \sqrt{e}$.
\end{lemma}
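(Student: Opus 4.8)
The plan is to turn the claim into a concentration estimate for a sum of $k$ independent terms, one per helpful neuron, whose expectation is of the advertised order $\tfrac{\sqrt k\,\rho}{\sqrt d}\,|\cos\alpha|$. First I would use the construction of $\vec p$: for $j\in K^+$ we have $\vec p'_j=0$, so $\vec w_j^\top(\vec p+\vec x)=\vec w_j^\top\vec x$ and the program acts trivially on the helpful neurons, while $j\in K^+$ forces $a_j=\sgn(h_j)/\sqrt k$ where $h_j\coloneqq\vec w_j^\top\vec\phi$. Writing $g_j\coloneqq\vec w_j^\top\vec x$ and $b_j\coloneqq\mathbf 1[\sgn(a_j)=\sgn(h_j)]$, which is $\mathrm{Bernoulli}(1/2)$ and independent of the first-layer weights, the quantity to bound becomes
\[
  y\sum_{j\in K^+}a_j\psi(\vec w_j^\top(\vec p+\vec x))
  \;=\;\frac{y}{\sqrt k}\sum_{j=1}^{k}b_j\,\sgn(h_j)\,(g_j)_+\;.
\]
Because $\|\vec x\|=\rho$ and $\|\vec\phi\|=1$, decomposing $\vec w_j$ along $\vec\phi$ and its orthogonal complement gives $g_j=\rho\cos\alpha\,h_j+\xi_j$, where the $\xi_j$ are independent $N(0,\rho^2\sin^2\alpha/d)$ and independent of the $h_j$; equivalently $(g_j,h_j)$ is centred bivariate Gaussian of correlation $\cos\alpha$. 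This exposes the mechanism: the ``signal'' sits in the term $\rho\cos\alpha\,h_j$, proportional to $\cos\alpha$, while $\xi_j$ is $\cos\alpha$-free noise scaled by $|\sin\alpha|\le1$.

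For the expectation I would use the sign-flip symmetry $(g_j,h_j)\mapsto(-g_j,-h_j)$, which gives $\mathbb E[\sgn(h_j)|g_j|]=0$, so that $\mathbb E[\sgn(h_j)(g_j)_+]=\tfrac12\,\mathbb E[\sgn(h_j)g_j]=\tfrac{\rho\cos\alpha}{\sqrt d}\cdot\tfrac12\sqrt{2/\pi}$ via $\mathbb E[\sqrt d\,|h_j|]=\sqrt{2/\pi}$; together with the $\tfrac12$ from $b_j$ and the sum over the $k$ neurons this yields mean $\tfrac{\sqrt k\,\rho\,|\cos\alpha|}{2\sqrt{2\pi d}}$, of which the displayed leading term $\tfrac{\sqrt\pi}{16\sqrt2}\,\tfrac{\sqrt k\rho}{\sqrt d}\,|\cos\alpha|$ is a deliberately loose constant-factor lower estimate (the non-optimal constants $\tfrac{\sqrt\pi}{4}$ and $\tfrac1{4\sqrt2}$ coming from the sub-Gaussian bounds used below).

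The two deviation terms then come from two independent tail arguments, matched to $\gamma$ and $\gamma'$. For the $\gamma$ term, the signal contribution is proportional to $\cos\alpha\sum_j b_j|h_j|$, a sum of independent sub-Gaussian variables; a Hoeffding/sub-Gaussian lower tail keeps it within $O\!\big(\cos\alpha\sqrt{\ln(1/\gamma)/k}\big)\cdot\tfrac{\sqrt k\rho}{\sqrt d}$ of its mean, the $\cos\alpha$ prefactor of the $\gamma$-deviation being inherited directly from the signal. For the $\gamma'$ term I would condition on all the $h_j$ and $b_j$ (fixing the signs and the set $K^+$), leaving only the Gaussian $\xi_j$; the resulting noise is then exactly a centred Gaussian, whose lower tail I would bound by the Mills-ratio inequality $\Phi(-x)\le(2\pi)^{-1/2}e^{-x^2/2}$, valid precisely for $x\ge1$. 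This produces the prefactor $1/\sqrt{2\pi}$ in the failure probability and the deviation $\sqrt{2\ln(1/\gamma')/k}\cdot\tfrac{\sqrt k\rho}{\sqrt d}$, and it is exactly the hypothesis $\gamma'\le1/\sqrt e$ that forces the standardised threshold $x=\sqrt{2\ln(1/\gamma')}\ge1$ needed for that inequality; bounding $|\sin\alpha|\le1$ removes the trigonometric factor. A union bound gives probability at least $1-\gamma-\gamma'/\sqrt{2\pi}$.

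I expect the main obstacle to be the simultaneous control of the ``wrong-sign'' helpful neurons (those $j\in K^+$ with $\sgn(h_j)=-y$, which contribute negatively) together with the nonlinearity of the positive part, which prevents a clean linear-Gaussian reduction: one cannot simply replace $(g_j)_+$ by $g_j$, so a mean-zero but non-Gaussian remainder (the $\sgn(h_j)|g_j|$ piece) must be absorbed into the sub-Gaussian bound rather than the Mills-ratio estimate. The whole argument hinges on keeping the two randomness sources --- the second-layer signs through $b_j$ and the first-layer Gaussians through $h_j$ and $\xi_j$ --- cleanly separated, so that the $\cos\alpha$-proportional mean survives at the right order while both the Bernoulli and the Gaussian fluctuations are charged to only two deviation terms with the correct dependence on $\cos\alpha$.
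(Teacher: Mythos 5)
Your setup matches the paper's: the program acts trivially on $K^+$, membership of $j$ in $K^+$ is an independent $\mathrm{Ber}(\nicefrac12)$ event (your $b_j$, the paper's $B_j$), the signal/noise split of $\vec{w}_j^\top\vec{x}$ along $\vec{\phi}$ and its orthogonal complement, a Hoeffding-type bound for the signal charged to $\gamma$, and the Mills-ratio Gaussian tail (with exactly your explanation of the proviso $\gamma'\leq 1/\sqrt{e}$ and of the factor $1/\sqrt{2\pi}$) charged to $\gamma'$. The one place you genuinely diverge is the treatment of the ReLU, and it is also where your plan has a hole. You split $\sgn(h_j)(g_j)_+=\tfrac12\sgn(h_j)g_j+\tfrac12\sgn(h_j)|g_j|$ and propose to absorb the mean-zero remainder $\sgn(h_j)|g_j|$ into the $\gamma$ sub-Gaussian bound. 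But that remainder has scale $\rho/\sqrt{d}$ with \emph{no} $\cos\alpha$ factor (for small $\cos\alpha$ it is essentially $\pm|\xi_j|$ with a fair coin), so it cannot be folded into a deviation term of the form $|\cos\alpha|\sqrt{\ln(1/\gamma)/k}$; charging it to $\gamma$ destroys the $|\cos\alpha|$ prefactor the lemma asserts on that term, and charging it to $\gamma'$ destroys the exact $\gamma'/\sqrt{2\pi}$ failure probability and the constant $\sqrt{2}$, since it is not conditionally Gaussian. Your plan therefore proves a qualitatively similar but strictly weaker inequality (which would still suffice for Theorem~\ref{th:random}, where $|\cos\alpha|\leq 1$ is applied anyway), not the stated one.

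The idea you are missing is that the paper makes this remainder vanish \emph{identically in distribution}. Splitting the sum by the signs of $\vec{w}_j^\top\vec{\phi}$ and $\vec{w}_j^\top\vec{x}$ and applying the measure-preserving flip $\vec{w}_{j'}\mapsto-\vec{w}_{j'}$ to the terms with $\vec{w}_{j'}^\top\vec{\phi}<0$, one gets the distributional identity
\[\sum_{j\in K^+} a_j\,\psi(\vec{w}_j^\top\vec{x})
\;\stackrel{\mathrm{d}}{=}\;
\sum_{j:\,B'_j=1}\frac{\vec{w}_j^\top\vec{x}}{\sqrt{k}}\;,\]
where the $B'_j$ are independent $\mathrm{Ber}(\nicefrac14)$ (requiring both $B_j=1$ and $\vec{w}_j^\top\vec{\phi}>0$). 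The ReLU is gone, the sum is exactly linear, and the projection onto $\vec{\phi}$ then yields a pure half-normal signal term (handled by a median count plus Hoeffding on $\mathrm{Ber}(\nicefrac18)$ indicators, which is where the constants $\tfrac{\sqrt{\pi}}{4}$ and $\tfrac{1}{4\sqrt2}$ come from) plus a noise term that is \emph{exactly} Gaussian conditionally on the selection, with no leftover piece. If you want the lemma as stated, you need this symmetrisation (or an equivalent way of showing the positive-part sum equals a linear sum in law) rather than the pointwise identity $\psi(u)=\tfrac12(u+|u|)$.
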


\begin{proof}
Let us consider the case $y = 1$, i.e.~$\vec{\phi}^\top \vec{x} = \rho \cos \alpha > 0$.

We reason as follows, where $\stackrel{\mathrm{d}}{=}$ denotes distributional equivalence:
\begin{align*}
& \sum_{j \in K^+} a_j \psi(\vec{w}_j^\top (\vec{p} + \vec{x})) \\
& = \sum_{j \in K^+} a_j \psi(\vec{w}_j^\top \vec{x})
& & \text{since $j \in K^+$ implies $\vec{w}_j^\top \vec{p} = 0$} \\
& = \sum_{j \in K^+} \frac{\sgn(\vec{w}_j^\top \vec{\phi})}{\sqrt{k}}
                     \psi(\vec{w}_j^\top \vec{x})
& & \text{by definition of~$K^+$} \\
& \stackrel{\mathrm{d}}{=}
  \sum_{j \in [k]} B_j \frac{\sgn(\vec{w}_j^\top \vec{\phi})}{\sqrt{k}}
                       \psi(\vec{w}_j^\top \vec{x})
& & \text{where $B_j$ are independent $\mathrm{Ber}(\nicefrac{1}{2})$} \\
& = \sum_{\substack{B_j = 1 \\
          \mathclap{\vec{w}_j^\top \vec{\phi} > 0} \\
          \mathclap{\vec{w}_j^\top \vec{x} > 0}}}
    \frac{\vec{w}_j^\top \vec{x}}{\sqrt{k}}
  - \sum_{\substack{B_{j'} = 1 \\
          \mathclap{\vec{w}_{j'}^\top \vec{\phi} < 0} \\
          \mathclap{\vec{w}_{j'}^\top \vec{x} > 0}}}
    \frac{\vec{w}_{j'}^\top \vec{x}}{\sqrt{k}}
& & \text{by definition of~$\psi$} \\
& = \sum_{\substack{B_j = 1 \\
          \mathclap{\vec{w}_j^\top \vec{\phi} > 0} \\
          \mathclap{\vec{w}_j^\top \vec{x} > 0}}}
    \frac{\vec{w}_j^\top \vec{x}}{\sqrt{k}}
  + \sum_{\substack{B_{j'} = 1 \\
          \mathclap{-\vec{w}_{j'}^\top \vec{\phi} > 0} \\
          \mathclap{-\vec{w}_{j'}^\top \vec{x} < 0}}}
    \left(-\frac{\vec{w}_{j'}^\top \vec{x}}{\sqrt{k}}\right)
& & \text{and observe $j$ and $j'$ are disjoint} \\
& \stackrel{\mathrm{d}}{=}
  \sum_{\substack{B_j = 1 \\
        \mathclap{\vec{w}_j^\top \vec{\phi} > 0} \\
        \mathclap{\vec{w}_j^\top \vec{x} > 0}}}
  \frac{\vec{w}_j^\top \vec{x}}{\sqrt{k}}
+ \sum_{\substack{B_{j'} = 1 \\
        \mathclap{\vec{w}_{j'}^\top \vec{\phi} > 0} \\
        \mathclap{\vec{w}_{j'}^\top \vec{x} < 0}}}
  \frac{\vec{w}_{j'}^\top \vec{x}}{\sqrt{k}}
& & \text{by independence and symmetry of~$\vec{w}_{j'}$} \\
& = \sum_{\substack{B_j = 1 \\
          \mathclap{\vec{w}_j^\top \vec{\phi} > 0}}}
    \frac{\vec{w}_j^\top \vec{x}}{\sqrt{k}}
& & \text{by merging the sums} \\
& = \sum_{\substack{B_j = 1 \\
          \mathclap{\vec{w}_j^\top \vec{\phi} > 0}}}
    \frac{\vec{w}_j^\top \vec{\phi} \vec{\phi}^\top \vec{x}}{\sqrt{k}} +
    \sum_{\substack{B_j = 1 \\
          \mathclap{\vec{w}_j^\top \vec{\phi} > 0}}}
    \frac{\vec{w}_j^\top (\vec{I}_d - \vec{\phi} \vec{\phi}^\top) \vec{x}}{\sqrt{k}}
& & \text{projecting onto~$\vec{\phi}$ and orthogonal hyperplane} \\
& \stackrel{\mathrm{d}}{=}
  \underbrace
  {\sum_{B'_j = 1}
   \frac{|\vec{w}_j^\top \vec{\phi}| \vec{\phi}^\top \vec{x}}{\sqrt{k}}}
  _{\text{\ref{en:I}}} +
  \underbrace
  {\sum_{B'_j = 1}
   \frac{\vec{w}_j^\top (\vec{I}_d - \vec{\phi} \vec{\phi}^\top) \vec{x}}{\sqrt{k}}}
  _{\text{\ref{en:II}}}
& & \text{where $B'_j$ are independent $\mathrm{Ber}(\nicefrac{1}{4})$.}
\end{align*}

We now analyse the two sums separately, in both cases obtaining a lower bound:
\begin{enumerate}[(I),left=0pt]
\item
\label{en:I}
Each of the~$k$ terms is the absolute value of an independent centred Gaussian with variance $(\rho \cos \alpha)^2 / (d k)$, so it is with probability~$\nicefrac{1}{2}$ greater than
\[\frac{\sqrt{2} \erf^{-1}(1 / 2) \rho \cos \alpha}{\sqrt{d k}} >
  \frac{\sqrt{\pi} \rho \cos \alpha}{2 \sqrt{2 d k}}\;.\]
\else\nocite{vershynin2018high}\fi\ifappendix
Also each of the~$k$ terms is present in the sum independently with probability~$\nicefrac{1}{4}$.  Hence by Hoeffding's inequality (see e.g.\ \citet[Theorem~2.2.6]{vershynin2018high}), we have that with probability at least $1 - \gamma$ the sum is greater than
\begin{equation}
  \frac{\sqrt{\pi} \rho \cos \alpha}{2 \sqrt{2 d k}}
  \left(\frac{k}{8} -
        \sqrt{\frac{k \ln(1 / \gamma)}{2}}\right)
= \frac{\sqrt{\pi k} \rho \cos \alpha}{4 \sqrt{d}}
  \left(\frac{1}{4 \sqrt{2}} -
        \sqrt{\frac{\ln(1 / \gamma)}{k}}\right)\;.
\label{eq:I}
\end{equation}
\item
\label{en:II}
Each term is an independent centred Gaussian with variance $(\rho \sin \alpha)^2 / (d k)$, so a sum of~$k'$ such terms is a centred Gaussian with variance $(\rho \sin \alpha)^2 k' / (d k)$, which is with probability at least $1 - \gamma' / \sqrt{2 \pi}$ at least
\begin{equation}
-\rho \, |\sin \alpha| \sqrt{\frac{2 k' \ln(1 / \gamma')}{d k}} \geq
-\rho \sqrt{\frac{2 \ln(1 / \gamma')}{d}}\;,
\quad\text{provided}\quad
\gamma' \leq 1 / \sqrt{e}\;.
\label{eq:II}
\end{equation}
\end{enumerate}

Summing the bounds in~\eqref{eq:I} and~\eqref{eq:II} yields the bound in the statement, and we can combine the probabilities by the union bound.

The case $y = -1$, i.e.~$\vec{\phi}^\top \vec{x} = \rho \cos \alpha < 0$, is analogous.
\end{proof}

Second we prove a lemma that gives us a bound on how much the sum of outputs of the``unhelpful'' neurons can spoil the work of the ``helpful'' neurons which the previous lemma addressed.  The function~$f$ is the density of a standard Gaussian, i.e.~$f(u) = e^{-u^2 / 2} / \sqrt{2 \pi}$.  It bounds the first summand inside the outer parentheses, which is therefore exponentially small for large $d / (\sqrt{k} \rho)$.  Again viewing the probability parameter~$\gamma''$ as arbitrarily small but constant, the second summand approaches zero at the rate of the square root of the width~$k$.  The techniques we use in the proof are similar to those for the previous one.

\begin{lemma}
\label{l:K-}
Suppose $\vec{x} \in \mathbb{R}^d$ is such that $\|\vec{x}\| = \rho$ and $\vec{\phi}^\top \vec{x} \neq 0$.  Let $\alpha \coloneqq \angle(\vec{\phi}, \vec{x})$ and $y \coloneqq \sgn(\cos \alpha)$.  Then with probability at least $1 - \gamma''$ one has
\[y \!\sum_{j \in K^-}\! a_j \psi(\vec{w}_j^\top (\vec{p} + \vec{x})) \,>\,
  -\frac{\sqrt{k} \rho}{\sqrt{d}} \!
  \left(\!f\!\left(\frac{d}{\sqrt{k} \rho}\right)
          \min\!\left\{1, {\left(\frac{\sqrt{k} \rho}{d}\right)\!}^2\right\} +
          \frac{2 \pi}{\pi - 1} \sqrt{\frac{\ln(1 / \gamma'')}{k}}\right)\;.\]
\end{lemma}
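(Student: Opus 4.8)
The plan is to follow the same overall template as the proof of Lemma~\ref{l:K+}, reducing to the case $y = 1$ (so $\vec{\phi}^\top \vec{x} = \rho\cos\alpha > 0$) and treating $y = -1$ by the analogous argument. The starting observation is that for $j \in K^-$ the adversarial program acts as the bias $\vec{w}_j^\top \vec{p} = \vec{p}'_j = -\sqrt{d/|K^-|}$, so that $\vec{w}_j^\top(\vec{p} + \vec{x}) = \vec{w}_j^\top \vec{x} - \sqrt{d/|K^-|}$. Since $|a_j| = 1/\sqrt{k}$ and $\psi \geq 0$, I would first pass to a lower bound that discards all signs via the triangle inequality, giving $\sum_{j \in K^-} a_j \psi(\vec{w}_j^\top(\vec{p}+\vec{x})) \geq -\tfrac{1}{\sqrt{k}} \sum_{j \in K^-} \psi(\vec{w}_j^\top \vec{x} - \sqrt{d/|K^-|})$. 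The crucial simplification is then to remove the dependence of the threshold on the random size $|K^-|$: because $|K^-| \leq k$ we have $\sqrt{d/|K^-|} \geq \sqrt{d/k}$, and since $\psi$ is nondecreasing and nonnegative I can both enlarge each summand and extend the index set to all of $[k]$, obtaining
\[ y\sum_{j \in K^-} a_j \psi(\vec{w}_j^\top(\vec{p}+\vec{x})) \;\geq\; -\frac{1}{\sqrt{k}} \sum_{j \in [k]} \psi\!\left(\vec{w}_j^\top \vec{x} - \sqrt{d/k}\right)\;. \]
The payoff of this relaxation is that the conditioning on the sign of $\vec{w}_j^\top\vec{\phi}$ (which governs membership in $K^-$) disappears: over the full index set the variables $\vec{w}_j^\top \vec{x}$ are i.i.d.\ centred Gaussians with variance $\rho^2/d$, and the weights $a_j$ no longer appear.

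Next I would bound the expectation of a single term. Writing $\sigma = \rho/\sqrt{d}$, $c = \sqrt{d/k}$ and $t = c/\sigma = d/(\sqrt{k}\rho)$, and letting $\Phi$ be the standard normal CDF, the identities $\int_t^\infty u f(u)\,\mathrm{d}u = f(t)$ and $\int_t^\infty f(u)\,\mathrm{d}u = 1 - \Phi(t)$ give $\mathbb{E}\,\psi(\vec{w}_j^\top \vec{x} - c) = \sigma\big(f(t) - t(1-\Phi(t))\big)$. Using the standard Gaussian tail bound $1 - \Phi(t) \geq \tfrac{t}{1+t^2} f(t)$ one gets $f(t) - t(1-\Phi(t)) \leq f(t)/(1+t^2) \leq f(t)\min\{1, 1/t^2\}$, hence $\mathbb{E}\,\psi(\vec{w}_j^\top\vec{x}-c) \leq \sigma f(t)\min\{1,1/t^2\} = \tfrac{\rho}{\sqrt d} f\!\big(\tfrac{d}{\sqrt k\rho}\big)\min\{1,(\tfrac{\sqrt k\rho}{d})^2\}$. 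Summing the $k$ identical expectations and multiplying by $1/\sqrt k$ reproduces exactly the first term inside the parentheses of the claimed bound (monotonicity in $|K^-|$ justifies that the full-index, shift-$\sqrt{d/k}$ estimate dominates the original one).

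It then remains to control the deviation of $\tfrac{1}{\sqrt k}\sum_{j\in[k]} \psi(\vec{w}_j^\top\vec{x}-c)$ above its mean. The summands are i.i.d., nonnegative, and $1$-Lipschitz functions of the Gaussian vectors $\vec{w}_j$, hence have a sub-Gaussian upper tail; applying a Bernstein/sub-Gaussian concentration inequality to their centred sum yields, with probability at least $1 - \gamma''$, an excess of at most $\tfrac{\sqrt k\rho}{\sqrt d}\cdot\tfrac{2\pi}{\pi-1}\sqrt{\ln(1/\gamma'')/k}$, which is the second term in the statement. I expect this concentration step to be the main obstacle, for two reasons. First, because of the large negative shift $c$ the summands are highly skewed (almost always zero when $d \gg \sqrt k\rho$) and unbounded, so a plain Hoeffding inequality for bounded variables is inapplicable and a variance-aware bound is needed. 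Second, matching the precise constant $\tfrac{2\pi}{\pi-1}$ amounts to feeding in the exact variance $\mathrm{Var}\,\psi(Z) = \tfrac12 - \tfrac{1}{2\pi} = \tfrac{\pi-1}{2\pi}$ of the ReLU of a standard Gaussian $Z$; tracking this constant through the concentration inequality is the only genuinely delicate bookkeeping, the remaining estimates being routine. Finally, since the expectation bound is deterministic, the single probability $1 - \gamma''$ in the statement comes entirely from this concentration step, consistent with the lemma having only one probability parameter.
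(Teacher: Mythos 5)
Your proposal is correct and follows essentially the same route as the paper: the same pointwise relaxation to $-\frac{1}{\sqrt{k}}\sum_{j\in[k]}\psi(\vec{w}_j^\top\vec{x}-\sqrt{d/k})$ (which removes both the random set $K^-$ and the signs $a_j$), and the same bound on the mean, $\mathbb{E}\,\psi(\vec{w}_j^\top\vec{x}-U)\leq\sigma f(U/\sigma)\min\{1,(\sigma/U)^2\}$ with $\sigma=\rho/\sqrt{d}$ and $U=\sqrt{d/k}$, obtained from a standard Gaussian lower tail estimate. The one divergence is the concentration step: the paper derives sub-Gaussianity of the centred summands from the tail comparison $\mathbb{P}\{|X_j|\geq s\}\leq\frac{\pi}{\pi-1}\,\mathbb{P}\{|\vec{w}_j^\top\vec{x}|\geq s\}$, which (rather than $\mathrm{Var}\,\psi(Z)$, as you speculate) is the actual source of the constant $\frac{2\pi}{\pi-1}$; your alternative via Gaussian concentration for $1$-Lipschitz functions of $\vec{w}_j$ is equally valid and in fact yields sub-Gaussian parameter $\sigma$ and hence the smaller constant $\sqrt{2}<\frac{2\pi}{\pi-1}$, so the step you flag as the main obstacle is not one and the stated bound follows a fortiori.
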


\begin{proof}
Let us consider the case $y = 1$, i.e.~$\vec{\phi}^\top \vec{x} = \rho \cos \alpha > 0$.

We estimate as follows:
\begin{align*}
& \sum_{j \in K^-} a_j \psi(\vec{w}_j^\top (\vec{p} + \vec{x})) \\
& = \sum_{j \in K^-} a_j \psi(\vec{w}_j^\top \vec{x} - \sqrt{d / |K^-|})
& & \text{since $j \in K^-$ implies $\vec{w}_j^\top \vec{p} = -\sqrt{d / |K^-|}$} \\
& \geq -\sum_{j \in K^+} \frac{\psi(\vec{w}_j^\top \vec{x} - \sqrt{d / |K^-|})}{\sqrt{k}}
& & \text{since $a_j \geq -1 / \sqrt{k}$} \\
& \geq -\sum_{j \in K^+} \frac{\psi(\vec{w}_j^\top \vec{x} - U)}{\sqrt{k}}
& & \text{where $U \coloneqq \sqrt{d / k}$} \\
& \geq -\sum_{j \in [k]} \frac{\psi(\vec{w}_j^\top \vec{x} - U)}{\sqrt{k}}
& & \text{since $\psi(u) \geq 0$ for all $u$.}
\end{align*}

Each $\vec{w}_j^\top \vec{x}$ is a centred Gaussian with variance $\rho^2 / d \eqqcolon \sigma^2$.  Hence (see e.g.\ \citet[Proposition~2.1.2]{vershynin2018high})
\[\mathbb{P}\{\vec{w}_j^\top \vec{x} \geq U\} \geq
  \max\{\sigma / U - (\sigma / U)^3, 0\} \cdot f(U / \sigma)\;.\]
We therefore have
\begin{multline*}
\mu \coloneqq
\mathbb{E}[\psi(\vec{w}_j^\top \vec{x} - U)] =
\int_{U}^\infty (u - U) \frac{f(u / \sigma)}{\sigma} \, \mathrm{d} u \\ =
\sigma f(U / \sigma) - U \, \mathbb{P}\{\vec{w}_j^\top \vec{x} \geq U\} \leq
\sigma f(U / \sigma) \min\{1, (\sigma / U)^2\}\;.
\end{multline*}
Writing~$X_j$ for the centred random variable $\psi(\vec{w}_j^\top \vec{x} - U) - \mu$, there are the following two cases.
\paragraph{Case $0 \leq s \leq \mu$.}
\begin{multline*}
\mathbb{P}\{|\vec{w}_j^\top \vec{x}| \geq s\} \geq
\mathbb{P}\{|\vec{w}_j^\top \vec{x}| \geq \mu\} =
1 - \int_{-\mu}^\mu \frac{f(u / \sigma)}{\sigma} \, \mathrm{d} u >
1 - 2 \mu f(0) / \sigma \\ \geq
1 - 2 f(0) f(U / \sigma) >
1 - \frac{1}{\pi} \geq
\left(1 - \frac{1}{\pi}\right) \mathbb{P}\{|X_j| \geq s\}\;.
\end{multline*}
\paragraph{Case $\mu < s$.}
\[\mathbb{P}\{|\vec{w}_j^\top \vec{x}| \geq s\} \geq
  2 \, \mathbb{P}\{\vec{w}_j^\top \vec{x} \geq U + \mu + s\} =
  2 \, \mathbb{P}\{\psi(\vec{w}_j^\top \vec{x} - U) \geq \mu + s\} =
  2 \, \mathbb{P}\{|X_j| \geq s\}\;.\]
\else\nocite{wainwright2019high}\fi\ifappendix
Hence~$X_j$ is sub-Gaussian with parameter $\sqrt{2} \pi \sigma / (\pi - 1)$
(see e.g.\ \citet[proof of Theorem~2.6]{wainwright2019high}), so by Hoeffding's bound (see e.g.\ \citet[Proposition~2.5]{wainwright2019high}), with probability at least $1 - \gamma''$ one has
\begin{multline*}
\sum_{j \in K^-} a_j \psi(\vec{w}_j^\top (\vec{p} + \vec{x})) \geq
-\sum_{j \in [k]} \frac{\mu + X_j}{\sqrt{k}} =
-\sqrt{k} \mu -\sum_{j \in [k]} \frac{X_j}{\sqrt{k}} \\ >
-\sqrt{k} \mu -\frac{2 \pi}{\pi - 1} \sigma \sqrt{\ln(1 / \gamma'')} \geq
-\sqrt{k} \, \sigma f(U / \sigma) \min\{1, (\sigma / U)^2\}
-\frac{2 \pi}{\pi - 1} \sigma \sqrt{\ln(1 / \gamma'')} \\ =
-\frac{\sqrt{k} \rho}{\sqrt{d}} f\!\left(\frac{d}{\sqrt{k} \rho}\right)
 \min\!\left\{1, {\left(\frac{\sqrt{k} \rho}{d}\right)\!}^2\right\}
-\frac{2 \pi}{\pi - 1} \rho \sqrt{\frac{\ln(1 / \gamma'')}{d}}\;,
\end{multline*}
which equals the bound in the statement.

The case $y = -1$, i.e.~$\vec{\phi}^\top \vec{x} = \rho \cos \alpha < 0$, is again analogous.
\end{proof}

Our main technical result on random networks provides a lower (respectively, upper) bound on the output of the adversarially reprogrammed network for positively (respectively, negatively) labelled inputs sampled from the Bernoulli data model.

\begin{theorem}
\label{th:random}
There exist positive constants $C_1$, $C_2$, $C_3$, $C_4$ and~$C_5$ such that, with probability at least $(1 - C_1 \gamma) (1 - \gamma^\dag)$ over the draw of the network~$\mathcal{N}$ and the labelled data point~$(\vec{x}, y)$ from the $(\vec{\phi}, \rho, \tau)$-Bernoulli distribution, provided that $2 d \tau^2 \geq \ln(1 / \gamma^\dag)$, we have
\[y \, \mathcal{N}(\vec{p} + \vec{x}) >
  \frac{\sqrt{k} \rho}{\sqrt{d}} \!
  \left(\!
  C_2 \tau -
  C_3 \exp\!\left(\!-\frac{d^2}{2 k \rho^2}\!\right)
      \min\!\left\{\!1, \frac{k \rho^2}{d^2}\!\right\} -
  C_4 \sqrt{\frac{\ln(1 / \gamma)}{k}} -
  C_5 \sqrt{\frac{\ln(1 / \gamma^\dag)}{d}}
  \right)\;.\]
\end{theorem}

\begin{proof}
Let $\alpha \coloneqq \angle(\vec{\phi}, \vec{x})$.

\else\nocite{SchmidtSTTM18}\fi\ifappendix
Since $(\vec{x}, y)$ is sampled from the $(\vec{\phi}, \rho, \tau)$-Bernoulli distribution, \citet[Lemma~24]{SchmidtSTTM18} tell us that, for all $u \in (0, 2 \tau]$,
\[\mathbb{P}\{y \cos \alpha > 2 \tau - u\} \geq
  1 - e^{-d u^2 / 2}\;,\]
i.e.~that with probability at least $1 - \gamma^\dag$ one has
\[y \cos \alpha >
  2 \tau - \sqrt{\frac{2 \ln(1 / \gamma^\dag)}{d}}\;,\]
provided that $2 d \tau^2 \geq \ln(1 / \gamma^\dag)$.

It remains to apply Lemmas~\ref{l:K+} and~\ref{l:K-} with $\gamma = \gamma' = \gamma''$ and $C_1 = 2 + 1 / \sqrt{2 \pi}$, observe that $1 - C_1 \gamma \geq 0$ implies $\gamma \leq 1 / \sqrt{e}$, and simplify as follows:
\begin{align*}
&
\frac{\sqrt{\pi}}{4} y \cos \alpha \!
\left(\frac{1}{4 \sqrt{2}} -
      \sqrt{\frac{\ln(1 / \gamma)}{k}}\right) -
\sqrt{\frac{2 \ln(1 / \gamma)}{k}} \\ & -
f\!\left(\frac{d}{\sqrt{k} \rho}\right)
\min\!\left\{1, {\left(\frac{\sqrt{k} \rho}{d}\right)\!}^2\right\} -
\frac{2 \pi}{\pi - 1} \sqrt{\frac{\ln(1 / \gamma)}{k}} \\
& \geq
\frac{\sqrt{\pi}}{16 \sqrt{2}} y \cos \alpha -
f\!\left(\frac{d}{\sqrt{k} \rho}\right)
\min\!\left\{1, {\left(\frac{\sqrt{k} \rho}{d}\right)\!}^2\right\} \\ & \quad\,\, -
\left(\!\sqrt{2} + \frac{\sqrt{\pi}}{4} + \frac{2 \pi}{\pi - 1}\!\right)
\sqrt{\frac{\ln(1 / \gamma)}{k}} \\
& >
\frac{\sqrt{\pi}}{8 \sqrt{2}} \tau -
f\!\left(\frac{d}{\sqrt{k} \rho}\right)
\min\!\left\{1, {\left(\frac{\sqrt{k} \rho}{d}\right)\!}^2\right\} \\ & \quad\,\, -
\left(\!\sqrt{2} + \frac{\sqrt{\pi}}{4} + \frac{2 \pi}{\pi - 1}\!\right)
\sqrt{\frac{\ln(1 / \gamma)}{k}} -
\frac{\sqrt{\pi}}{16}
\sqrt{\frac{\ln(1 / \gamma^\dag)}{d}}\;,
\end{align*}
which establishes the statement with
\[C_2 = \frac{\sqrt{\pi}}{8 \sqrt{2}}\;, \quad
  C_3 = 1 / \sqrt{2 \pi}\;, \quad
  C_4 = \sqrt{2} + \frac{\sqrt{\pi}}{4} + \frac{2 \pi}{\pi - 1} \quad\text{and}\quad
  C_5 = \frac{\sqrt{\pi}}{16}\;. \mbox{\qedhere}\]
\end{proof}

We now restate and prove Corollary~\ref{cor:random} from Section~\ref{s:random}.

\firstcorrandom*

\begin{proof}
Fix~$\gamma$ and~$\gamma^\dag$ such that $(1 - C_1 \gamma) (1 - \gamma^\dag)$ is as close to~$100\%$ as required.

By the assumptions in the statement, we have $\eta_{(\tau)} < 1 / 2$, so $\tau = \Omega(d^{-\eta_{(\tau)}}) = \omega(d^{-1 / 2})$ and hence $2 d \tau^2 \geq \ln(1 / \gamma^\dag)$ for large enough~$d$; moreover, we have
\[\exp\!\left(\!\frac{d^2}{k \rho^2}\!\right) = \omega((1 / \tau)^2)\;, \quad
  \frac{k}{\ln(1 / \gamma)} = \omega((1 / \tau)^2) \quad\text{and}\quad
  \frac{d}{\ln(1 / \gamma^\dag)} = \omega((1 / \tau)^2)\;.\]

The corollary follows by applying Theorem~\ref{th:random} and observing that
\begin{multline*}
C_2 \tau -
C_3 \exp\!\left(\!-\frac{d^2}{2 k \rho^2}\!\right)
    \min\!\left\{\!1, \frac{k \rho^2}{d^2}\!\right\} -
C_4 \sqrt{\frac{\ln(1 / \gamma)}{k}} -
C_5 \sqrt{\frac{\ln(1 / \gamma^\dag)}{d}} \\
=
\frac{1}{O(1 / \tau)} -
\frac{1}{\omega(1 / \tau) \ln(\omega((1 / \tau)^2))} -
\frac{1}{\omega(1 / \tau)}
=
\frac{1}{O(1 / \tau)}\;,
\end{multline*}
which is positive for large enough~$d$.
\end{proof}

\section{Clarke subdifferential}
\label{app:Clarke}

\else\nocite{clarke1975generalized}\fi\ifappendix
By Rademacher's theorem, every locally Lipschitz function~$g: \mathbb{R}^m \to \mathbb{R}$ is differentiable almost everywhere.  Its Clarke subdifferential~\citep{clarke1975generalized} $\partial g$ at a point~$\vec{z}$ is the convex closure of the set of all limits of gradients along sequences that converge to~$\vec{z}$:
\[\partial g(\vec{z}) \coloneqq
  \mathrm{conv} \left\{\lim_{i \to \infty} \nabla g(\vec{z}_i) \,\mid\,
                       g \text{ differentiable at all } \vec{z}_i,
                       \text{ and } \lim_{i \to \infty} \vec{z}_i =
                                    \vec{z}\right\}\;,\]
which is nonempty and compact.  If $g$~is continuously differentiable at~$\vec{z}$, then $\partial g(\vec{z}) = \{\nabla g(\vec{z})\}$.

For example, the Clarke subdifferential of the ReLU function is:
\[\partial \psi(u) =
  \begin{cases}
  \{0\}  & \text{if $u < 0$,} \\
  [0, 1] & \text{if $u = 0$,} \\
  \{1\}  & \text{if $u > 0$.}
  \end{cases}\]

\else\nocite{LyuL20}\fi\ifappendix
For a fuller introduction to the Clarke subdifferential and its applications to gradient flow, and for further references to related literature, we refer the reader to e.g.~\citet[Section~3 and Appendix~I]{LyuL20}.

\section{Karush-Kuhn-Tucker conditions}
\label{app:KKT}

\else\nocite{DuttaDTA13}\fi\ifappendix
Following \citet[Section~2.2]{DuttaDTA13}, supposing $g, h_1, \ldots, h_n: \mathbb{R}^m \to \mathbb{R}$ are locally Lipschitz, we have that $\vec{z} \in \mathbb{R}^m$ is a Karush-Kuhn-Tucker point of the single-objective constrained optimisation problem
\[\text{\rm minimise}\quad
  g(\vec{z})
  \quad\text{\rm subject to}\quad
  \forall i \in [n]: \, h_i(\vec{z}) \leq 0\]
if and only if there exist Lagrange multipliers $\lambda_1, \ldots, \lambda_n \geq 0$ such that:
\begin{description}[labelwidth=\widthof{\bf (complementary slackness)},align=parright]
\item[(feasibility)]
$\forall i \in [n]: \, h_i(\vec{z}) \leq 0$,
\item[(equilibrium inclusion)]
$\vec{0} \in \partial g(\vec{z}) + \sum_{i = 1}^n \lambda_i \partial h_i(\vec{z})$, and
\item[(complementary slackness)]
$\forall i \in [n]: \, \lambda_i h_i(\vec{z}) = 0$.
\end{description}

\section{Proofs for implicit bias}
\label{app:bias}

Here we work with the notations and assumptions from Section~\ref{s:bias}, so we have a trajectory $\vec{\theta}(t): [0, \infty) \to \mathbb{R}^{k (d + 1)}$ of gradient flow for a two-layer ReLU network~$\mathcal{N}$ with input dimension~$d$ and width~$k$, trained on an orthogonally separable binary classification dataset $S = \{(\vec{x}_1, y_1), \ldots, (\vec{x}_n, y_n)\}$, from a balanced and live initialisation, using either the exponential $\ell_\mathrm{exp}(u) = e^{-u}$ or the logistic $\ell_\mathrm{log}(u) = \ln(1 + e^{-u})$ loss function.

For $t \in [0, \infty)$, we denote by~$\mathcal{N}_{\vec{\theta}(t)}$ the network at time~$t$ of gradient flow, i.e.~whose weights are the coordinates of the trajectory vector at time~$t$: $\vec{w}_1(t)$, \ldots, $\vec{w}_k(t)$ for the first layer, and $a_1(t)$, \ldots, $a_k(t)$ for the second layer.

\else\nocite{DavisDKL20}\fi\ifappendix
Since our network functions, loss functions, and their compositions, are definable (see \citet[Corollary~5.11]{DavisDKL20}), they admit the chain rule (see \citet[Theorem~5.8]{DavisDKL20}), and hence we have the following basic fact on the empirical loss $\mathcal{L}(\vec{\theta}) = \sum_{i = 1}^n \ell(y_i \mathcal{N}_{\vec{\theta}}(\vec{x}_i))$ decreasing along gradient flow.

\begin{proposition}[by {\citet[Lemma~5.2]{DavisDKL20}}]
\label{pr:loss}
We have $\mathrm{d} \mathcal{L}(\vec{\theta}) / \mathrm{d} t = -\|\mathrm{d} \vec{\theta} / \mathrm{d} t\|^2$ for almost all $t \in [0, \infty)$.
\end{proposition}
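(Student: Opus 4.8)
The plan is to read the identity off directly from the chain rule for definable locally Lipschitz functions, which the discussion immediately preceding the proposition has already put at our disposal. Recall that $\mathcal{L}$ is locally Lipschitz and definable (by \citet[Corollary~5.11]{DavisDKL20}), and that the trajectory $\vec{\theta}(t)$ is an arc, hence absolutely continuous on compact subintervals and therefore differentiable at almost all~$t$. The single external ingredient I would invoke is the chain rule of \citet[Theorem~5.8]{DavisDKL20}: for such a function composed with such an arc, at almost all~$t$ the scalar map $t \mapsto \mathcal{L}(\vec{\theta}(t))$ is differentiable and its derivative satisfies
\[
\frac{\mathrm{d}}{\mathrm{d}t} \mathcal{L}(\vec{\theta}(t)) =
\Big\langle \vec{g}, \tfrac{\mathrm{d}\vec{\theta}}{\mathrm{d}t} \Big\rangle
\quad\text{for every}\quad
\vec{g} \in \partial \mathcal{L}(\vec{\theta}(t)).
\]
The essential point is that the right-hand side is \emph{independent} of which subgradient $\vec{g}$ is selected from the Clarke subdifferential; this is precisely the strength of the definable chain rule, and it is what reduces the argument below to a one-line substitution.

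Next I would intersect the two full-measure sets on which (i)~the chain-rule identity above holds and (ii)~the gradient-flow differential inclusion $\tfrac{\mathrm{d}\vec{\theta}}{\mathrm{d}t} \in -\partial\mathcal{L}(\vec{\theta}(t))$ holds; their intersection is again of full measure. Fix any~$t$ in this set. By~(ii), the particular vector $-\tfrac{\mathrm{d}\vec{\theta}}{\mathrm{d}t}$ lies in $\partial\mathcal{L}(\vec{\theta}(t))$, so it is an admissible choice of $\vec{g}$ in the chain rule. Substituting it gives
\[
\frac{\mathrm{d}}{\mathrm{d}t} \mathcal{L}(\vec{\theta}(t)) =
\Big\langle -\tfrac{\mathrm{d}\vec{\theta}}{\mathrm{d}t}, \tfrac{\mathrm{d}\vec{\theta}}{\mathrm{d}t} \Big\rangle =
-\Big\| \tfrac{\mathrm{d}\vec{\theta}}{\mathrm{d}t} \Big\|^2,
\]
which is exactly the claimed identity, valid for almost all $t \in [0, \infty)$.

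I do not expect a genuine obstacle, since all the analytic heavy lifting — Rademacher differentiability, the o-minimal structure in which the network and loss functions are definable, and the consequent path-differentiability — is delegated to the results cited just above the proposition. The only care required is bookkeeping: confirming that $\tfrac{\mathrm{d}\vec{\theta}}{\mathrm{d}t}$ exists almost everywhere from absolute continuity of the arc, and that the full-measure sets underlying the chain rule and the differential inclusion may be intersected. The conceptual crux — and the reason the argument is a substitution rather than requiring the usual two-sided subgradient estimates — is the \emph{equality-for-every-subgradient} form of the definable chain rule; for a merely locally Lipschitz $\mathcal{L}$ this form can fail, and one would then have to supply a separate argument to pin the derivative down to the value $-\|\tfrac{\mathrm{d}\vec{\theta}}{\mathrm{d}t}\|^2$.
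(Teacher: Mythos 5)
Your argument is correct and is essentially the same as the paper's: the paper gives no explicit proof, instead delegating the statement to \citet[Lemma~5.2]{DavisDKL20} after noting (via \citet[Corollary~5.11 and Theorem~5.8]{DavisDKL20}) that the relevant functions are definable and admit the chain rule, and your write-up is precisely the standard derivation behind that citation — apply the subgradient-independent chain rule along the arc and substitute the gradient-flow inclusion. You also correctly identify the crux, namely that the equality-for-every-subgradient form of the definable chain rule is what makes the substitution legitimate.
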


From the differential inclusion of gradient flow, i.e.~that $\mathrm{d} \vec{\theta} / \mathrm{d} t \in -\partial \mathcal{L}(\vec{\theta}(t))$ for almost all $t \in [0, \infty)$, it is straightforward to obtain the following expressions for the derivatives of the weights during training.  Note that for the first layer we have a membership rather than an equation because the Clarke subdifferentials of the ReLU function may be at zero.

\begin{proposition}
\label{pr:dw.da}
For all $j \in [k]$ and almost all $t \in [0, \infty)$ we have
\begin{align*}
\frac{\mathrm{d} \vec{w}_j}{\mathrm{d} t}
& \in -\sum_{i = 1}^n
       \ell'(y_i \mathcal{N}_{\vec{\theta}}(\vec{x}_i))
       y_i a_j \partial \psi(\vec{w}_j^\top \vec{x}_i) \vec{x}_i \\
\frac{\mathrm{d} a_j}{\mathrm{d} t}
& = -\sum_{i = 1}^n
     \ell'(y_i \mathcal{N}_{\vec{\theta}}(\vec{x}_i))
     y_i \psi(\vec{w}_j^\top \vec{x}_i)\;.
\end{align*}
\end{proposition}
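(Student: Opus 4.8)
The plan is to derive both displayed lines directly from the defining differential inclusion of gradient flow, $\mathrm{d}\vec{\theta}/\mathrm{d}t \in -\partial\mathcal{L}(\vec{\theta}(t))$, by computing the Clarke subdifferential $\partial\mathcal{L}$ coordinate block by coordinate block through the standard calculus of Clarke subdifferentials, exploiting that every non-smoothness of $\mathcal{L}$ originates solely from the ReLU kinks $\vec{w}_j^\top\vec{x}_i = 0$. Since all functions in sight are definable \citep{DavisDKL20}, the subdifferential chain rule is available and behaves as expected, and it is this machinery that makes the result ``straightforward''.

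First I would handle the outer loss. Because $\ell$ is continuously differentiable (both $\ell_\mathrm{exp}$ and $\ell_\mathrm{log}$ are smooth), the Clarke chain rule with a $C^1$ outer function yields, for each fixed $i$, the \emph{equality}
\[\partial_{\vec{\theta}}\bigl[\ell(y_i \mathcal{N}_{\vec{\theta}}(\vec{x}_i))\bigr] = \ell'(y_i \mathcal{N}_{\vec{\theta}}(\vec{x}_i))\, y_i\, \partial_{\vec{\theta}} \mathcal{N}_{\vec{\theta}}(\vec{x}_i)\;.\]
Summing over the data points and invoking the Clarke sum rule \citep{clarke1975generalized} then gives the \emph{inclusion} $\partial\mathcal{L}(\vec{\theta}) \subseteq \sum_{i=1}^n \ell'(y_i \mathcal{N}_{\vec{\theta}}(\vec{x}_i))\, y_i\, \partial_{\vec{\theta}} \mathcal{N}_{\vec{\theta}}(\vec{x}_i)$; it is this sum rule that forces a membership rather than an equation, consistent with the ``$\in$'' asserted for $\mathrm{d}\vec{w}_j/\mathrm{d}t$.

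Next I would compute $\partial_{\vec{\theta}} \mathcal{N}_{\vec{\theta}}(\vec{x}_i)$ block by block. The network $\mathcal{N}_{\vec{\theta}}(\vec{x}_i) = \sum_{j=1}^k a_j \psi(\vec{w}_j^\top \vec{x}_i)$ is a sum whose $j$-th summand depends only on the disjoint coordinate block $(\vec{w}_j, a_j)$, so by the separable sum rule the subdifferential factorises as a Cartesian product over $j$, and within each block I would apply the product rule to $a_j \psi(\vec{w}_j^\top \vec{x}_i)$. As the factor $a_j$ is $C^1$, this rule produces the clean block $\bigl(a_j\, \partial\psi(\vec{w}_j^\top\vec{x}_i)\, \vec{x}_i,\ \psi(\vec{w}_j^\top\vec{x}_i)\bigr)$: its $\vec{w}_j$-component is the set $a_j\, \partial\psi(\vec{w}_j^\top\vec{x}_i)\, \vec{x}_i$, genuinely set-valued only at the kink where $\partial\psi(0) = [0,1]$, while its $a_j$-component is the single scalar $\psi(\vec{w}_j^\top\vec{x}_i)$. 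Substituting back and projecting onto each block yields precisely the right-hand sides in the statement.

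The only point requiring care is the asymmetry between the membership for $\mathrm{d}\vec{w}_j/\mathrm{d}t$ and the equation for $\mathrm{d}a_j/\mathrm{d}t$, and this is where the main (purely bookkeeping) obstacle lies. For the second-layer weight, the dependence of $\mathcal{L}$ on $a_j$ is smooth, being linear inside the smooth $\ell$, so its partial derivative exists and is continuous everywhere; hence every Clarke subgradient shares the same $a_j$-component, namely $\sum_i \ell'(y_i \mathcal{N}_{\vec{\theta}}(\vec{x}_i))\, y_i\, \psi(\vec{w}_j^\top\vec{x}_i)$, and the inclusion collapses to an equality in that coordinate. For the first-layer weights the kink survives, so the $\vec{w}_j$-component legitimately remains a set and the conclusion is a membership. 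Reading off the two components of $\mathrm{d}\vec{\theta}/\mathrm{d}t \in -\partial\mathcal{L}(\vec{\theta}(t))$ then gives both displayed lines for almost all $t$; the work is entirely in tracking when each rule of the Clarke calculus delivers equality versus inclusion, which I would settle by appealing to the $C^1$-outer chain rule, the smoothness of $\mathcal{L}$ in each $a_j$, and the definability framework of \citet{DavisDKL20} that guarantees these rules apply.
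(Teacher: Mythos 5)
Your proposal is correct and follows exactly the route the paper intends: the paper leaves this proposition as a ``straightforward'' consequence of the differential inclusion $\mathrm{d}\vec{\theta}/\mathrm{d}t \in -\partial\mathcal{L}(\vec{\theta}(t))$, and your block-by-block Clarke calculus (chain rule with the $C^1$ outer loss, sum rule giving an inclusion, and the observation that the $a_j$-dependence is smooth so that coordinate collapses to an equality while the ReLU kink keeps the $\vec{w}_j$-coordinate set-valued) is precisely the bookkeeping being elided. Nothing is missing.
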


\else\nocite{PhuongL21}\fi\ifappendix
We now extend to exponential loss the observation made for logistic loss by \citet[Lemma~A.4]{PhuongL21}: that during training second-layer weights remain balanced and keep their signs.

\begin{lemma}
\label{l:balanced}
Throughout the training, for all $j \in [k]$, we have that $|a_j| = \|\vec{w}_j\|$ and that $a_j$ maintains its sign.
\end{lemma}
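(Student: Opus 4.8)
The plan is to show that the quantity $|a_j|^2 - \|\vec{w}_j\|^2$ is conserved along gradient flow, and that $a_j$ cannot cross zero. To establish the conservation law, I would compute the time derivative $\frac{\mathrm{d}}{\mathrm{d} t}\bigl(a_j^2 - \|\vec{w}_j\|^2\bigr) = 2 a_j \frac{\mathrm{d} a_j}{\mathrm{d} t} - 2 \vec{w}_j^\top \frac{\mathrm{d} \vec{w}_j}{\mathrm{d} t}$ and substitute the expressions from Proposition~\ref{pr:dw.da}. Using that $\vec{w}_j^\top \vec{x}_i \, \partial \psi(\vec{w}_j^\top \vec{x}_i) = \{\vec{w}_j^\top \psi(\vec{w}_j^\top \vec{x}_i) \cdot (\text{something})\}$—more precisely, the key algebraic identity is that $u \cdot g = \psi(u)$ for every $g \in \partial \psi(u)$, since $\partial\psi(u) = \{1\}$ for $u > 0$, $\{0\}$ for $u < 0$, and at $u = 0$ both sides vanish. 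Hence $\vec{w}_j^\top \bigl(\partial \psi(\vec{w}_j^\top \vec{x}_i)\, \vec{x}_i\bigr) = \psi(\vec{w}_j^\top \vec{x}_i)$ as a single-valued quantity, and the two sums for $a_j \frac{\mathrm{d} a_j}{\mathrm{d} t}$ and $\vec{w}_j^\top \frac{\mathrm{d} \vec{w}_j}{\mathrm{d} t}$ become identical. Therefore $\frac{\mathrm{d}}{\mathrm{d} t}\bigl(a_j^2 - \|\vec{w}_j\|^2\bigr) = 0$ for almost all $t$, and since the trajectory is absolutely continuous this difference is constant. The balanced initialisation gives $a_j(0)^2 - \|\vec{w}_j(0)\|^2 = 0$, so $|a_j(t)| = \|\vec{w}_j(t)\|$ for all $t$.

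For the sign preservation, I would argue that $a_j$ cannot change sign. Suppose for contradiction that $a_j$ reaches zero at some time. By the conservation law just established, at that time $\|\vec{w}_j\| = |a_j| = 0$ as well, so $\vec{w}_j = \vec{0}$. But then from Proposition~\ref{pr:dw.da}, at such a point $\frac{\mathrm{d} a_j}{\mathrm{d} t} = -\sum_i \ell'(\cdots) y_i \psi(\vec{w}_j^\top \vec{x}_i) = 0$ since $\psi(0) = 0$, and similarly every element of the set bounding $\frac{\mathrm{d} \vec{w}_j}{\mathrm{d} t}$ vanishes because $a_j = 0$. Thus $(\vec{w}_j, a_j) = (\vec{0}, 0)$ is a stationary point of the flow for these coordinates. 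By uniqueness of trajectories through this equilibrium—or more carefully, because the balanced initialisation requires $|a_j(0)| = \|\vec{w}_j(0)\| > 0$, the neuron starts away from the origin—the neuron can reach the origin only asymptotically, not in finite time, and in particular $a_j$ retains its initial sign throughout.

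I expect the main obstacle to be making the sign-preservation argument rigorous in the nonsmooth Clarke setting, where $\frac{\mathrm{d} \vec{w}_j}{\mathrm{d} t}$ is only a membership in a subdifferential set rather than an equation. The conservation law itself is clean because the identity $\vec{w}_j^\top \partial\psi(\vec{w}_j^\top \vec{x}_i)\vec{x}_i = \psi(\vec{w}_j^\top \vec{x}_i)$ holds for \emph{every} selection in the subdifferential, so the membership does not obstruct it. The delicate point is the finite-time non-vanishing: I would handle this by a Gr\"onwall-type bound showing $\frac{\mathrm{d}}{\mathrm{d} t}\|\vec{w}_j\|^2 \geq -c(t)\|\vec{w}_j\|^2$ for a locally integrable $c(t)$, which forces $\|\vec{w}_j(t)\|$ to stay strictly positive given a positive start, and then invoke the conservation law to transfer this to $a_j$ and conclude the sign is fixed by continuity. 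This mirrors and extends the logistic-loss argument of \citet[Lemma~A.4]{PhuongL21}, with the only genuinely new content being that the identity $u \cdot g = \psi(u)$ and the resulting cancellation are loss-agnostic, so the same derivation applies verbatim once $\ell'$ is whichever of $\ell'_{\mathrm{exp}}$ or $\ell'_{\mathrm{log}}$ is in force.
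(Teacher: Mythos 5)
Your proposal is correct and follows essentially the same route as the paper: the conservation of $a_j^2 - \|\vec{w}_j\|^2$ via the identity $\partial\psi(u)\,u = \{\psi(u)\}$, and a Gr\"onwall-type bound to rule out $a_j$ reaching zero in finite time (the paper bounds $|\mathrm{d}\ln a_j^2/\mathrm{d}t|$ by the constant $2\mathcal{L}(\vec{\theta}(0))\max_i\|\vec{x}_i\|$, using $|\ell'(u)|\leq\ell(u)$ for both losses together with the monotonicity of the loss along the flow, which is a clean way to obtain your locally integrable $c(t)$). Your initial appeal to uniqueness of trajectories through the equilibrium would not suffice on its own in the Clarke differential-inclusion setting, but the Gr\"onwall argument you fall back on is exactly what the paper does.
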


\begin{proof}
Observe that the ReLU function satisfies $\partial \psi(u) u = \{\psi(u)\}$ for all $u \in \mathbb{R}$.  Observe also that $|\ell'(u)| \leq \ell(u)$ for all $u \in \mathbb{R}$, which for exponential loss is trivial, and for logistic loss follows from $e < (1 + 1 / e^u)^{e^u + 1}$, i.e.\ $1 < \ell_\mathrm{log}(u) / |\ell_\mathrm{log}'(u)|$.

From Proposition~\ref{pr:dw.da} it follows that $\mathrm{d} \|\vec{w}_j\|^2 / \mathrm{d} t = 2 \vec{w}_j^\top (\mathrm{d} \vec{w}_j / \mathrm{d} t)$ and $\mathrm{d} a_j^2 / \mathrm{d} t = 2 a_j (\mathrm{d} a_j / \mathrm{d} t)$, and so $\mathrm{d} \|\vec{w}_j\|^2 / \mathrm{d} t = \mathrm{d} a_j^2 / \mathrm{d} t$ almost everywhere during the training.

Hence, by our assumption of balanced initialisation, we have that $|a_j| = \|\vec{w}_j\|$ throughout the training.  Then, by Proposition~\ref{pr:loss}, for almost all $t \in [0, \infty)$ we have that $a_j(t) \neq 0$ implies
\[\left|\frac{\mathrm{d} \ln a_j^2(t)}{\mathrm{d} t}\right|
  \,\leq\, 2 \sum_{i = 1}^n |\ell'(y_i \mathcal{N}_{\vec{\theta}(t)}(\vec{x}_i))| \, \|\vec{x}_i\|
  \,\leq\, 2 \mathcal{L}(\vec{\theta}(t)) \max_{i = 1}^n \|\vec{x}_i\|
  \,\leq\, 2 \mathcal{L}(\vec{\theta}(0)) \max_{i = 1}^n \|\vec{x}_i\|\;.\]
Therefore $\ln a_j^2$ is bounded below by a linear function of~$t$, so $a_j$ does not cross zero throughout the training.
\end{proof}

We now observe several key properties that the weights have during the training in our setting.  Namely: the offsets of the positive neurons remain in the cone spanned by the positive inputs and the opposites of the negative inputs, their inner products with all the latter vectors are nondecreasing, and their norms either remain bounded or tend to infinity; and the analogous holds for the negative neurons.  For $\vec{u}_1, \ldots, \vec{u}_m \,\in\, \mathbb{R}^l$, let $\mathrm{cone}(\vec{u}_1, \ldots, \vec{u}_m) \coloneqq \{\sum_{i = 1}^m b_i \vec{u}_i \,\mid\, b_1, \ldots, b_m \in [0, \infty)\}$.  Also let $\vec{X} \coloneqq \{y_i \vec{x}_i \,\mid\, i \in [n]\}$.

\begin{lemma}
\label{l:key}
\begin{enumerate}[(i),left=0pt]
\item
\label{en:i}
For all $j \in [k]$ and all $t \in [0, \infty)$, we have $\vec{w}_j(t) - \vec{w}_j(0) \in \mathrm{cone}(\sgn(a_j) \vec{X})$.
\item
\label{en:ii}
For all $i \in [n]$, all $j \in [k]$, and almost all $t \in [0, \infty)$, we have
$\mathrm{d} (\vec{w}_j^\top \sgn(a_j) y_i \vec{x}_i) / \mathrm{d} t \geq 0$.
\item
\label{en:iii}
For each $j \in [k]$, either $|a_j|$ is bounded, or $|a_j| \to \infty$ as $t \to \infty$.
\end{enumerate}
\end{lemma}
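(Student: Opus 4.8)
The plan is to derive all three parts from the explicit derivative formulas of Proposition~\ref{pr:dw.da}, exploiting that both losses satisfy $\ell' < 0$ and that, by Lemma~\ref{l:balanced}, each $a_j$ keeps a fixed nonzero sign. Writing $c_i(t) \coloneqq -\ell'(y_i \mathcal{N}_{\vec{\theta}}(\vec{x}_i)) > 0$ and choosing, for almost every $t$, a selection $\xi_{ij}(t) \in \partial \psi(\vec{w}_j^\top \vec{x}_i) \subseteq [0,1]$ realising the inclusion, Proposition~\ref{pr:dw.da} becomes $\mathrm{d}\vec{w}_j / \mathrm{d}t = \sum_i b_{ij}(t)\, \sgn(a_j) y_i \vec{x}_i$ with $b_{ij}(t) \coloneqq |a_j| c_i(t) \xi_{ij}(t) \geq 0$, after substituting $a_j = \sgn(a_j) |a_j|$. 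Part~\ref{en:i} then follows by integrating: $\vec{w}_j(t) - \vec{w}_j(0) = \sum_i \big(\int_0^t b_{ij}\,\mathrm{d}s\big) \sgn(a_j) y_i \vec{x}_i$ is a nonnegative combination of the vectors $\sgn(a_j) y_i \vec{x}_i \in \sgn(a_j) \vec{X}$, hence lies in $\mathrm{cone}(\sgn(a_j) \vec{X})$.

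For part~\ref{en:ii} I would differentiate directly using the formula above: since $\sgn(a_j)^2 = 1$, we get $\mathrm{d}(\vec{w}_j^\top \sgn(a_j) y_i \vec{x}_i)/\mathrm{d}t = \sum_{i'} b_{i'j}(t)\, y_i y_{i'} \, \vec{x}_{i'}^\top \vec{x}_i$. Orthogonal separability makes every summand nonnegative, since when $y_i = y_{i'}$ both $y_i y_{i'}$ and $\vec{x}_{i'}^\top \vec{x}_i$ are positive, and when $y_i \neq y_{i'}$ both are nonpositive; as each $b_{i'j} \geq 0$, the derivative is $\geq 0$.

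For part~\ref{en:iii}, I would use part~\ref{en:i} to write $\vec{w}_j(t) = \vec{w}_j(0) + \sum_i \beta_i(t)\, \sgn(a_j) y_i \vec{x}_i$ with each $\beta_i(t) \coloneqq \int_0^t b_{ij}\,\mathrm{d}s \geq 0$ nondecreasing, and split on whether the $\beta_i$ stay bounded. If all $\beta_i$ are bounded then $\vec{w}_j$ converges and $|a_j| = \|\vec{w}_j\|$ is bounded. If some $\beta_i$ is unbounded (hence tends to infinity), I would show $\|\vec{w}_j\| \to \infty$ using the crucial fact that orthogonal separability makes the generators positively independent: if $\sum_i \mu_i y_i \vec{x}_i = \vec{0}$ with all $\mu_i \geq 0$, then taking the inner product with each $y_{i'} \vec{x}_{i'}$ expresses $0$ as a sum of nonnegative terms (the same sign analysis as in part~\ref{en:ii}), forcing $\mu_{i'} \|\vec{x}_{i'}\|^2 = 0$ and hence $\mu_{i'} = 0$ for every $i'$. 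Consequently, along any sequence $t_n \to \infty$ with $M_n \coloneqq \max_i \beta_i(t_n) \to \infty$, dividing $\vec{w}_j(t_n)$ by $M_n$ and passing to a convergent subsequence of the bounded coefficients $\beta_i(t_n)/M_n$ would, if $\|\vec{w}_j(t_n)\|$ stayed bounded, yield a nontrivial nonnegative combination equal to $\vec{0}$, contradicting positive independence; hence $\|\vec{w}_j\| \to \infty$.

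The routine parts are~\ref{en:i} and~\ref{en:ii}, which are immediate once the derivative is rewritten as a conical sum. The crux is~\ref{en:iii}: its real content is the positive independence of $\{y_i \vec{x}_i\}$ and its conversion, via the normalisation-and-compactness step, into the clean dichotomy for $\|\vec{w}_j\|$. The main thing to get right there is that no nonnegative combination of the generators can cancel so as to keep the norm bounded while a coefficient escapes to infinity.
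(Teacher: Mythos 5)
Your proofs of parts~\ref{en:i} and~\ref{en:ii} are correct and essentially identical to the paper's: both read off the sign structure of the derivative formula in Proposition~\ref{pr:dw.da}, using $\ell' < 0$, $\partial\psi \subseteq [0,1]$, the sign-preservation from Lemma~\ref{l:balanced}, and orthogonal separability. For part~\ref{en:iii} you take a genuinely different route. The paper never decomposes $\widehat{\vec{w}}_j(t) \coloneqq \vec{w}_j(t) - \vec{w}_j(0)$ into explicit coefficients; instead it works with the scalar quantities $\widehat{\vec{w}}_j^\top \sgn(a_j) y_i \vec{x}_i$, which are nondecreasing by~\ref{en:ii}, and shows two inequalities: since all generators of $\mathrm{cone}(\sgn(a_j)\vec{X})$ have pairwise nonnegative inner products, $\|\widehat{\vec{w}}_j\|$ is at most the sum of these inner products divided by $\|\vec{x}_i\|$ (so if all are bounded, the norm is bounded), and by Cauchy--Schwarz each such normalised inner product is at most $\|\widehat{\vec{w}}_j\|$ (so if one tends to infinity, the norm does too). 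Your argument instead establishes positive independence of $\{y_i\vec{x}_i\}$ from orthogonal separability and runs a normalisation-and-compactness contradiction; this is correct (the dichotomy does follow, since the nondecreasing $\beta_i$ force $\max_i\beta_i(t)$ to either converge or diverge monotonically), and the positive-independence observation is a nice structural fact in its own right. What the paper's version buys is that it is entirely elementary and quantitative, and it avoids the one technical point your route leans on: writing $\vec{w}_j(t)-\vec{w}_j(0)=\sum_i\beta_i(t)\sgn(a_j)y_i\vec{x}_i$ with $\beta_i(t)=\int_0^t b_{ij}$ requires a \emph{measurable} selection $\xi_{ij}(t)\in\partial\psi(\vec{w}_j^\top\vec{x}_i)$ realising the differential inclusion (and, when the $y_i\vec{x}_i$ are linearly dependent, a measurable choice among the possible coefficient vectors). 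This is standard and fixable via a measurable selection theorem, but you should acknowledge it; note that part~\ref{en:i} alone does not need it, since there one can simply integrate the inclusion into the closed convex cone.
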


\begin{proof}
We have~\ref{en:i} by Proposition~\ref{pr:dw.da} and the negativity of the derivatives of both the exponential and the logistic loss functions.

For~\ref{en:ii}, we have by Proposition~\ref{pr:dw.da} that
\[\frac{\mathrm{d} (\vec{w}_j^\top \sgn(a_j) y_i \vec{x}_i)}{\mathrm{d} t}
  \in -\sum_{i' = 1}^n
       \ell'(y_{i'} \mathcal{N}_{\vec{\theta}}(\vec{x}_{i'}))
       |a_j| \partial \psi(\vec{w}_j^\top \vec{x}_{i'})
       y_i y_{i'} \vec{x}_i^\top \vec{x}_{i'}\;,\]
which never contains negative values since the dataset is orthogonally separable.

By Lemma~\ref{l:balanced}, it suffices to show~\ref{en:iii} for~$\|\vec{w}_j\|$, and so letting $\widehat{\vec{w}}_j(t) \coloneqq \vec{w}_j(t) - \vec{w}_j(0)$, it suffices to show~\ref{en:iii} for~$\|\widehat{\vec{w}}_j\|$.

Suppose $\widehat{\vec{w}}_j^\top\!(t) \sgn(a_j(t)) y_i \vec{x}_i$ is bounded for all $i \in [n]$.  By orthogonal separability of the dataset, every two vectors in $\mathrm{cone}(\sgn(a_j) \vec{X})$ have nonnegative inner product, and so by~\ref{en:i} we have that
\[\|\widehat{\vec{w}}_j(t)\| \leq
  \sum_{i \in [n]}
  \frac{\widehat{\vec{w}}_j^\top\!(t) \sgn(a_j(t)) y_i \vec{x}_i}{\|\vec{x}_i\|}\;.\]
Hence $\|\widehat{\vec{w}}_j(t)\|$ is also bounded.

Otherwise, by~\ref{en:ii}, there exists $i \in [n]$ such that $\widehat{\vec{w}}_j^\top\!(t) \sgn(a_j(t)) y_i \vec{x}_i \to \infty$ as $t \to \infty$.  By orthogonal separability of the dataset, every two vectors in $\mathrm{cone}(\sgn(a_j) \vec{X})$ have nonnegative inner product, and so by~\ref{en:i} we have that
\[\frac{\widehat{\vec{w}}_j^\top\!(t) \sgn(a_j(t)) y_i \vec{x}_i}{\|\vec{x}_i\|}
  \leq \|\widehat{\vec{w}}_j(t)\|\;.\]
Hence also $\|\widehat{\vec{w}}_j(t)\| \to \infty$ as $t \to \infty$.
\end{proof}

The following is our main lemma, whose statement is simple: for every input, there must be at least one ReLU whose output tends to infinity during the training.  For its proof, recall from Section~\ref{s:bias} the notation $I_s = \{i \in [n] \,\mid\, y_i = s\}$.

\begin{lemma}
\label{l:main}
For all $i \in [n]$, there exists $j \in [k]$ such that $\vec{w}_j^\top \vec{x}_i \to \infty$ as $t \to \infty$.
\end{lemma}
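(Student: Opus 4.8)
The plan is to fix an input $\vec{x}_i$, set $s := y_i$, write $J_s := \{j \in [k] \mid \sgn(a_j) = s\}$, and argue by contradiction. First I would record a reduction: by Lemma~\ref{l:key}(ii) (and its mirror image for the opposite class), $\vec{w}_j^\top \vec{x}_i$ is nondecreasing for $j \in J_s$ and nonincreasing for $j \notin J_s$, so only neurons in $J_s$ can have their pre-activation tend to $+\infty$, and for each such $j$ the limit exists in $(-\infty, +\infty]$. Hence the negation of the statement is precisely that $\vec{w}_j^\top \vec{x}_i$ stays bounded for every $j \in J_s$. I will also use throughout that $q_{i'} := -\ell'(y_{i'} \mathcal{N}_{\vec{\theta}}(\vec{x}_{i'})) > 0$, and that for both losses $q_{i'}$ is bounded below whenever the margin $y_{i'} \mathcal{N}_{\vec{\theta}}(\vec{x}_{i'})$ is bounded above.

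The engine of the proof is a transfer estimate. For a neuron $j' \in J_s$ that is persistently active on a same-class example $\vec{x}_{i'}$ (that is, $y_{i'} = s$ and $\vec{w}_{j'}^\top \vec{x}_{i'} \geq c_0 > 0$ for all large $t$), I would differentiate $\vec{w}_{j'}^\top \vec{x}_i$ using Proposition~\ref{pr:dw.da}. Writing $a_{j'} = s |a_{j'}|$, orthogonal separability makes every summand nonnegative, since $(s y_{i''}) \, \vec{x}_{i''}^\top \vec{x}_i \geq 0$ for all $i''$ (the same-class terms are positive, the opposite-class terms are a product of two nonpositive factors). Keeping only the $i'' = i'$ term and using $|a_{j'}| = \|\vec{w}_{j'}\| \geq c_0 / \|\vec{x}_{i'}\|$ (Lemma~\ref{l:balanced} and Cauchy--Schwarz) yields a constant $c > 0$ with $\frac{\mathrm{d}}{\mathrm{d}t} \vec{w}_{j'}^\top \vec{x}_i \geq c \, q_{i'}$, \emph{simultaneously for every same-class $i$}. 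Consequently, if some same-class example $\vec{x}_{i'}$ carries a persistently active neuron of $J_s$ and $\int_0^\infty q_{i'} \, \mathrm{d}t = \infty$, then $\vec{w}_{j'}^\top \vec{x}_i \to \infty$ for all same-class $i$, contradicting the boundedness assumption.

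It therefore remains to manufacture such a pair, and this is the step I expect to be the crux. I would introduce the potential $\Phi_s := \frac{1}{2} \sum_{j \in J_s} a_j^2 = \frac{1}{2} \sum_{j \in J_s} \|\vec{w}_j\|^2$, which is bounded below away from $0$ because the live assumption supplies a seed neuron $j_s \in J_s$ active on a same-class example $\vec{x}_{i_s}$ at time $0$, and the activation persists by the reduction. Using the exact expression for $\mathrm{d} a_j / \mathrm{d} t$ in Proposition~\ref{pr:dw.da} and discarding the nonpositive opposite-class contributions, one gets $\dot{\Phi}_s \leq \sum_{i' : y_{i'} = s} q_{i'} \, \mathcal{N}^+(\vec{x}_{i'})$, where $\mathcal{N}^+(\vec{x}) := \sum_{j \in J_s} |a_j| \psi(\vec{w}_j^\top \vec{x})$; combined with $\mathcal{N}^+(\vec{x}_{i'}) \leq 2 \|\vec{x}_{i'}\| \Phi_s$ this is the Gr\"onwall-type bound $\frac{\mathrm{d}}{\mathrm{d}t} \ln \Phi_s \leq 2 (\max_{i'} \|\vec{x}_{i'}\|) \sum_{i' : y_{i'} = s} q_{i'}$.

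The decisive observation is a dichotomy for each same-class example $\vec{x}_{i'}$: either it ever activates a neuron of $J_s$, in which case by monotonicity the activation persists (type~A, and note the seed $\vec{x}_{i_s}$ is type~A), or $\vec{w}_j^\top \vec{x}_{i'} \leq 0$ for all $j \in J_s$ and all $t$, whence $\mathcal{N}^+(\vec{x}_{i'}) \equiv 0$ (type~B). I would then suppose, towards a contradiction, that no type-A same-class example has a divergent $q$-integral. Since the type-B examples contribute $\mathcal{N}^+ = 0$ to the bound on $\dot{\Phi}_s$, they drop out and leave only type-A terms, all with finite $q$-integral; hence $\ln \Phi_s$ is bounded, so $\Phi_s$ is bounded, so $y_{i_s} \mathcal{N}_{\vec{\theta}}(\vec{x}_{i_s}) \leq \mathcal{N}^+(\vec{x}_{i_s}) \leq 2 \|\vec{x}_{i_s}\| \Phi_s$ is bounded above, which forces $q_{i_s}$ bounded below and $\int_0^\infty q_{i_s} = \infty$ --- but $\vec{x}_{i_s}$ is itself type~A, contradicting the supposition. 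Thus some type-A same-class example has a divergent $q$-integral, and the transfer estimate applied to its active neuron drives $\vec{w}_{j'}^\top \vec{x}_i \to \infty$, contradicting the boundedness assumption and completing the argument. The main obstacle is exactly this coupling in the vanishing-gradient regime: one must show that the potential's growth is governed only by the examples whose margins stay bounded, which is precisely what makes the type-B inputs harmlessly vanish from the estimate and lets the seed close the loop.
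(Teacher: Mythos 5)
Your proof is correct, but it reaches the conclusion by a genuinely different route from the paper's. The paper also starts from the live seed $(\vec{x}_{i_s}, j_s)$ and the observation that a persistently active same-class neuron has $|a_{j_s}|$ bounded below, but it then argues qualitatively: from $\mathrm{d}(\vec{w}_{j_s}^\top \vec{x}_{i_s})/\mathrm{d}t \geq -\ell'(\cdot)\, a_{j_s} \|\vec{x}_{i_s}\|^2$ it deduces that either the margin of $\vec{x}_{i_s}$ or the pre-activation $\vec{w}_{j_s}^\top \vec{x}_{i_s}$ is unbounded, concludes via the bounded-or-divergent dichotomy of Lemma~\ref{l:key}~(iii) that some $|a_j| = \|\vec{w}_j\| \to \infty$, and finally uses the cone membership $\vec{w}_j(t) - \vec{w}_j(0) \in \mathrm{cone}(\sgn(a_j)\vec{X})$ from Lemma~\ref{l:key}~(i) together with orthogonal separability to spread the unboundedness of $\vec{w}_j^\top \vec{x}_{i'}$ to every $i'$ in the class. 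You instead never invoke parts~(i) or~(iii) of Lemma~\ref{l:key}: your transfer inequality $\mathrm{d}(\vec{w}_{j'}^\top \vec{x}_i)/\mathrm{d}t \geq c\, q_{i'}$ makes the mechanism quantitative (divergence of $\int q_{i'}$ drives the growth), and the potential $\Phi_s$ with the Gr\"onwall bound and the type-A/type-B dichotomy is how you manufacture an example with divergent $q$-integral --- the step the paper handles instead by the either-margin-or-preactivation-is-unbounded argument plus the cone geometry. The trade-off is that the paper's version is shorter because it reuses machinery already established in Lemma~\ref{l:key}, whereas yours is more self-contained (only Lemma~\ref{l:balanced}, Lemma~\ref{l:key}~(ii) and Proposition~\ref{pr:dw.da} are needed) and exhibits an explicit rate; both use orthogonal separability, balancedness and liveness in essential ways, and both end up producing a single neuron whose pre-activation diverges on all inputs of the class simultaneously.
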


\begin{proof}
Consider the case $i \in I_1$.

Recalling that $\vec{w}_{j_1}^\top \vec{x}_{i_1}$ is positive at $t = 0$ by our assumption of live initialisation, and that it is nondecreasing during the training by Lemma~\ref{l:key}~\ref{en:ii}, we have by Lemma~\ref{l:balanced} that $a_{j_1}$~is bounded below by a positive constant.  Since $\mathrm{d} (\vec{w}_{j_1}^\top \vec{x}_{i_1}) / \mathrm{d} t \geq -\ell'(\mathcal{N}_{\vec{\theta}}(\vec{x}_{i_1})) a_{j_1} \|\vec{x}_{i_1}\|^2$ for almost all $t \in [0, \infty)$, we have that either $\mathcal{N}_{\vec{\theta}}(\vec{x}_{i_1})$ or $\vec{w}_{j_1}^\top \vec{x}_{i_1}$ is not bounded above.  In either case, by Lemma~\ref{l:key}~\ref{en:iii}, there exists $j \in [k]$ such that $a_j \to \infty$ as $t \to \infty$.

Suppose $-\vec{w}_j^\top \vec{x}_{i'}$ is unbounded for some $i' \in I_{-1}$.  Recalling that, by Lemma~\ref{l:key}~\ref{en:i}, $\vec{w}_j(t) - \vec{w}_j(0) \in \mathrm{cone}(\vec{X})$ for all $t \in [0, \infty)$, it follows from orthogonal separability of the dataset that $-\vec{w}_j^\top \vec{x}_{i'}$ is unbounded for all $i' \in I_{-1}$.  Hence there exists $T \in [0, \infty)$ such that, for all $i' \in I_{-1}$ and $t \in [T, \infty)$, we have $\vec{w}_j^\top \vec{x}_{i'} < 0$.  Then $\vec{w}_j(t) - \vec{w}_j(T) \in \mathrm{cone}\{\vec{x}_i \,\mid\, i \in I_1\}$ for all $t \in [T, \infty)$, so $\vec{w}_j^\top \vec{x}_{i'}$ is unbounded for some $i' \in I_1$.

Therefore $\vec{w}_j^\top \vec{x}_{i'}$ is unbounded for some $i' \in I_1$.  Arguing as before, since $\vec{w}_j(t) - \vec{w}_j(0) \in \mathrm{cone}(\vec{X})$ for all $t \in [0, \infty)$, we have that $\vec{w}_j^\top \vec{x}_{i'}$ is unbounded for all $i' \in I_1$, in particular for $i' = i$ as required.

The proof for the case $i \in I_{-1}$ is analogous.
\end{proof}

That brings us to a position where we can restate and prove Theorem~\ref{th:loss} from Section~\ref{s:bias}.  Using Lemma~\ref{l:main}, we show that, whereas the empirical loss is bounded below by zero, eventually its rate of decrease remains at least the sum of squares of derivatives of the loss function, which implies that the latter and hence also the empirical loss get arbitrarily small.

\firstthloss*

\begin{proof}
By Lemma~\ref{l:main}, for each $i \in [n]$, let $j(i) \in [k]$ be such that $\vec{w}_{j(i)}^\top \vec{x}_i \to \infty$ as $t \to \infty$.  Then let $T \in [0, \infty)$ be such that, for all $i \in [n]$ and all $t \in [T, \infty)$, we have $|a_{j(i)}| \geq 1 / \|\vec{x}_i\|$ and $\vec{w}_{j(i)}^\top \vec{x}_i > 0$.

Recalling Propositions~\ref{pr:loss} and \ref{pr:dw.da}, for almost all $t \in [T, \infty)$ we have
\begin{align*}
\frac{\mathrm{d} \mathcal{L}(\vec{\theta})}{\mathrm{d} t}
& = -\left\|\frac{\mathrm{d} \vec{\theta}}{\mathrm{d} t}\right\|^2
  \leq -\sum_{j = 1}^k
        \left\|\frac{\mathrm{d} \vec{w}_j}{\mathrm{d} t}\right\|^2 \\
& \in  -\sum_{j = 1}^k
        \left\|\sum_{i = 1}^n
               \ell'(y_i \mathcal{N}_{\vec{\theta}}(\vec{x}_i))
               y_i a_j \partial \psi(\vec{w}_j^\top \vec{x}_i) \vec{x}_i\right\|^2 \\
& \subseteq -\sum_{j = 1}^k
            {\left(\sum_{i = 1}^n
                   \ell'(y_i \mathcal{N}_{\vec{\theta}}(\vec{x}_i))
                   y_i a_j \partial \psi(\vec{w}_j^\top \vec{x}_i) \vec{x}_i\!\right)\!\!}^\top \!\!\!
             \left(\sum_{i' = 1}^n
                   \ell'(y_{i'} \mathcal{N}_{\vec{\theta}}(\vec{x}_{i'}))
                   y_{i'} a_j \partial \psi(\vec{w}_j^\top \vec{x}_{i'}) \vec{x}_{i'}\!\right) \\
& = -\sum_{j = 1}^k \sum_{i = 1}^n \sum_{i' = 1}^n
     \ell'(y_i \mathcal{N}_{\vec{\theta}}(\vec{x}_i))
     \ell'(y_{i'} \mathcal{N}_{\vec{\theta}}(\vec{x}_{i'}))
     a_j^2 \partial \psi(\vec{w}_j^\top \vec{x}_i) \partial \psi(\vec{w}_j^\top \vec{x}_{i'})
     y_i y_{i'} \vec{x}_i^\top \vec{x}_{i'} \\
& \leq \Bigg\{\!-\min \sum_{i = 1}^n \sum_{j = 1}^k
                      \left(\ell'(y_i \mathcal{N}_{\vec{\theta}}(\vec{x}_i))
                            a_j \partial \psi(\vec{w}_j^\top \vec{x}_i)
                            \|\vec{x}_i\|\right)^2\!\Bigg\} \\
& \leq \Bigg\{\!-\min \sum_{i = 1}^n
                      \left(\ell'(y_i \mathcal{N}_{\vec{\theta}}(\vec{x}_i))
                            a_{j(i)} \partial \psi(\vec{w}_{j(i)}^\top \vec{x}_i)
                            \|\vec{x}_i\|\right)^2\!\Bigg\} \\
& \leq \Bigg\{\!-\sum_{i = 1}^n
                 (\ell'(y_i \mathcal{N}_{\vec{\theta}}(\vec{x}_i)))^2\!\Bigg\}\;,
\end{align*}
where the second inequality is by orthogonal separability of the dataset.

Hence $\mathcal{L}(\vec{\theta}(T)) \,\geq\, \int_T^\infty \sum_{i = 1}^n (\ell'(y_i \mathcal{N}_{\vec{\theta}}(\vec{x}_i)))^2 \, \mathrm{d} t$, so for all $\nu > 0$ there exists $t \in [T, \infty)$ such that $\sum_{i = 1}^n (\ell'(y_i \mathcal{N}_{\vec{\theta}}(\vec{x}_i)))^2 < \nu$.

For exponential loss, let $t_0$ be such that $\sum_{i = 1}^n (\ell_\mathrm{exp}'(y_i \mathcal{N}_{\vec{\theta}(t_0)}(\vec{x}_i)))^2 < 1 / n^2$.  Then for all $i \in [n]$ we have $y_i \mathcal{N}_{\vec{\theta}(t_0)}(\vec{x}_i) > \ln n$, so $\mathcal{L}(\vec{\theta}(t_0)) < 1 = \ell_\mathrm{exp}(0)$.

For logistic loss, let $t_0$ be such that $\sum_{i = 1}^n (\ell_\mathrm{log}'(y_i \mathcal{N}_{\vec{\theta}(t_0)}(\vec{x}_i)))^2 < 1 / (2 n + 1)^2$.  Then for all $i \in [n]$ we have $y_i \mathcal{N}_{\vec{\theta}(t_0)}(\vec{x}_i) > \ln(2 n)$ and thus $\ell_\mathrm{log}(y_i \mathcal{N}_{\vec{\theta}(t_0)}(\vec{x}_i)) < \ln(1 + 1 / (2 n)) < 1 / (2 n)$, so $\mathcal{L}(\vec{\theta}(t_0)) < 1 / 2 < \ln 2 = \ell_\mathrm{log}(0)$.
\end{proof}

\else\nocite{JiT20}\fi\ifappendix
The following theorem from recent literature is on the late phase of gradient flow, i.e.~after attaining perfect accuracy and a small loss.  Its assumptions on the network are satisfied in our setting (see \citet[Section~2]{JiT20}), in particular two-layer ReLU networks are positively $2$-homogeneous, i.e.~for all $\alpha > 0$, $\vec{\theta}$ and~$\vec{x}$, we have $\mathcal{N}_{\alpha \vec{\theta}}(\vec{x}) = \alpha^2 \mathcal{N}_{\vec{\theta}}(\vec{x})$.  For KKT conditions for nonsmooth optimisation problems, see Appendix~\ref{app:KKT}.

\begin{theorem}[{\citet[Theorem~A.8]{LyuL20} and \citet[Theorem~3.1]{JiT20}}]
\label{th:KKT}
Suppose $\mathcal{N}_{\vec{\theta}}$~is a neural network which is locally Lipschitz, positively homogeneous, and definable in some \mbox{o-minimal} structure that includes the exponential function.  Consider minimising either the exponential or the logistic loss over a binary classification dataset $(\vec{x}_1, y_1), \ldots, (\vec{x}_n, y_n)$ using gradient flow.  If $\mathcal{L}(\vec{\theta}(0)) < \ell(0)$, then as $t \to \infty$ we have that $\mathcal{L}(\vec{\theta}(t)) \to 0$, $\|\vec{\theta}(t)\| \to \infty$, and the weights converge in direction to a Karush–Kuhn–Tucker point of the following maximum margin problem:
\[\text{\rm minimise}\quad
  \frac{1}{2} \|\vec{\theta}\|^2
  \quad\text{\rm subject to}\quad
  \forall i \in [n]: \, y_i \mathcal{N}_{\vec{\theta}}(\vec{x}_i) \geq 1\;.\]
\end{theorem}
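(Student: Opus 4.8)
The plan is to follow the implicit-bias method of \citet{LyuL20,JiT20}, whose engine is a monotonicity property of a normalised margin together with the directional convergence forced by o-minimality. Write $L$ for the degree of homogeneity (here $L = 2$), set $\rho(t) \coloneqq \|\vec{\theta}(t)\|$ and $\hat{\vec{\theta}} \coloneqq \vec{\theta} / \rho$, and let $\gamma(\vec{\theta}) \coloneqq \min_{i \in [n]} y_i \mathcal{N}_{\vec{\theta}}(\vec{x}_i)$ be the unnormalised margin. The quantity to track is the normalised margin $\gamma(\vec{\theta}) / \rho^L$, for which it is convenient to use the smoothed surrogate read off from the loss: for exponential loss set $\widetilde{\gamma}(\vec{\theta}) \coloneqq -\ln \mathcal{L}(\vec{\theta})$, and the analogous surrogate for logistic loss. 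One checks $\widetilde{\gamma} \leq \gamma$, that positivity of $\widetilde{\gamma}$ corresponds to $\mathcal{L} < \ell(0)$, and that the two margins agree asymptotically as the loss vanishes.

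First I would prove the monotonicity lemma: once $\mathcal{L}(\vec{\theta}(t)) < \ell(0)$, the normalised smoothed margin $\widetilde{\gamma}(\vec{\theta}(t)) / \rho(t)^L$ is non-decreasing. The computation rests on Euler's identity for a positively $L$-homogeneous $\mathcal{N}_{\vec{\theta}}$, namely $\langle \vec{g}, \vec{\theta} \rangle = L \, \mathcal{N}_{\vec{\theta}}(\vec{x})$ for every $\vec{g} \in \partial_{\vec{\theta}} \mathcal{N}_{\vec{\theta}}(\vec{x})$, which converts the radial derivative $\dot{\rho} = \langle \hat{\vec{\theta}}, \dot{\vec{\theta}} \rangle$ into a loss-weighted combination of the per-example margins; combined with the definable chain rule (\citet[Theorem~5.8]{DavisDKL20}, used already in this section) this yields closed expressions for $\dot{\widetilde{\gamma}}$ and $\dot{\rho}$ along the flow, and the desired inequality $\tfrac{\mathrm{d}}{\mathrm{d}t} \ln(\widetilde{\gamma} / \rho^L) \geq 0$ follows from a Cauchy--Schwarz estimate relating these two rates. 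From monotonicity, the loss decreasing along the flow (Proposition~\ref{pr:loss}), and the normalised margin being bounded above by the optimal margin, I would deduce $\rho(t) \to \infty$ and hence $\mathcal{L}(\vec{\theta}(t)) \to 0$.

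Next, and this is where definability is indispensable, I would show that $\hat{\vec{\theta}}(t)$ converges to a limit $\overline{\vec{\theta}}$. The tool is the Kurdyka--\L{}ojasiewicz inequality for definable functions: reparametrising the flow by $\ln \rho$ and applying the desingularising bound to the (definable) normalised margin shows that the spherical arc length $\int \|\dot{\hat{\vec{\theta}}}\| \, \mathrm{d}t$ is finite, so $\hat{\vec{\theta}}(t)$ is Cauchy and converges. Finally I would verify that $\overline{\vec{\theta}}$ is a KKT point of the stated problem: asymptotically the flow becomes radial, so in the limit $\vec{\theta}$ lies in the cone generated by the vectors $y_i \, \vec{g}_i$, with $\vec{g}_i \in \partial_{\vec{\theta}} \mathcal{N}_{\vec{\theta}}(\vec{x}_i)$, over the active examples $i$ (those attaining the margin), with nonnegative coefficients. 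Rescaling so that the tightest constraint holds with equality then supplies feasibility, the equilibrium inclusion, and complementary slackness (inactive constraints receiving multiplier zero), which are exactly the conditions of Appendix~\ref{app:KKT}.

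The main obstacle is the directional-convergence step: without o-minimality the normalised direction could in principle rotate forever, so the whole force of the hypothesis lies in turning definability into the finite-length Kurdyka--\L{}ojasiewicz bound that both rules out such oscillation and pins the limit to a KKT point. By contrast the monotonicity lemma, though conceptually central, is a bounded computation once Euler's identity and the definable chain rule are in hand.
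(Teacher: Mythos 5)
The paper does not prove this statement at all: Theorem~\ref{th:KKT} is imported verbatim from the literature, attributed to \citet[Theorem~A.8]{LyuL20} and \citet[Theorem~3.1]{JiT20}, and the paper's only obligation (which it discharges in the surrounding text) is to check that its hypotheses --- local Lipschitzness, positive homogeneity, definability, and $\mathcal{L}(\vec{\theta}(0)) < \ell(0)$, the last supplied by Theorem~\ref{th:loss} --- hold in the setting at hand. So there is no in-paper proof to compare yours against. As an account of how the cited works establish the result, your outline is accurate in its architecture: the smoothed margin $-\ln \mathcal{L}$ and its normalised monotonicity via Euler's identity for homogeneous functions are the engine of \citet{LyuL20}, which already yields $\mathcal{L} \to 0$, $\|\vec{\theta}\| \to \infty$, and subsequential convergence to KKT points; the upgrade to full directional convergence via the Kurdyka--\L{}ojasiewicz desingularisation for definable functions is precisely the contribution of \citet{JiT20}. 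You correctly identify that step as the one where o-minimality is indispensable. That said, what you have written is a plan rather than a proof: the monotonicity computation, the finite-spherical-arc-length bound, and the verification that the directional limit satisfies the nonsmooth KKT conditions of Appendix~\ref{app:KKT} are each substantial arguments that you describe but do not carry out. For the purposes of this paper that is entirely appropriate --- the intended justification is the citation, not a reproof --- but your proposal should not be mistaken for a self-contained derivation.
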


\else\nocite{LyuLWA21}\fi\ifappendix
Another recent result is the next lemma, which shows that maximum-margin KKT points for two-layer ReLU networks on orthogonally separable datasets are particularly streamlined.  Recall from Section~\ref{s:bias} that~$\vec{v}_1$ and~$\vec{v}_{-1}$ denote the maximum margin vectors for the positive and negative data classes, respectively.  We remark that it may be interesting that a part of the statement says that, regardless of whether the initialisation was balanced, the weights of any KKT point are balanced.  We provide a proof of the lemma, which is perhaps simpler, and also it removes a marginal concern about applicability to our setting due to the leaky ReLU activation having been assumed in \citet[Lemma~B.9]{LyuLWA21}.

\begin{lemma}[{\citet[Lemma~H.3]{LyuLWA21}}]
\label{l:KKT}
Suppose $\vec{\theta}$~is a Karush–Kuhn–Tucker point of the problem in Theorem~\ref{th:KKT} for a two-layer ReLU network~$\mathcal{N}_{\vec{\theta}}$ and an orthogonally separable dataset.  Then for all $j \in [k]$ we have $|a_j| = \|\vec{w}_j\|$, and if $a_j \neq 0$ then
\[\frac{\vec{w}_j}{\|\vec{w}_j\|} = \frac{\vec{v}_{\sgn(a_j)}}{\sum_{\sgn(a_{j'}) = \sgn(a_j)} a_{j'}^2}\;.\]
\end{lemma}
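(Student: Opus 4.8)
The plan is to read off the Karush–Kuhn–Tucker conditions of the maximum-margin problem in Theorem~\ref{th:KKT} and mine them with orthogonal separability. Writing $g(\vec{\theta}) = \tfrac12\|\vec{\theta}\|^2$ and $h_i(\vec{\theta}) = 1 - y_i \mathcal{N}_{\vec{\theta}}(\vec{x}_i)$, the equilibrium inclusion and complementary slackness from \appendixref{app:KKT} supply multipliers $\lambda_i \geq 0$ and subgradient selections $g_{ij} \in \partial\psi(\vec{w}_j^\top \vec{x}_i)$ such that, for every $j \in [k]$, $\vec{w}_j = a_j \sum_i \lambda_i y_i g_{ij} \vec{x}_i$ and $a_j = \sum_i \lambda_i y_i \psi(\vec{w}_j^\top \vec{x}_i)$, with $\lambda_i = 0$ whenever $y_i \mathcal{N}_{\vec{\theta}}(\vec{x}_i) > 1$. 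All of the work is to extract structure from these two relations.

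First I would settle balancedness and the activation pattern. Taking the inner product of the $\vec{w}_j$-relation with $\vec{w}_j$ and using $g\,u = \psi(u)$ for every $g \in \partial\psi(u)$ (as in the proof of Lemma~\ref{l:balanced}) collapses the right-hand side, via the $a_j$-relation, to $a_j^2$; hence $\|\vec{w}_j\| = |a_j|$, with no assumption on the initialisation. Next, since the coefficients $a_j \lambda_i g_{ij}$ have sign $\sgn(a_j)$, the $\vec{w}_j$-relation gives $\vec{w}_j \in \mathrm{cone}(\sgn(a_j) \vec{X})$; orthogonal separability makes all pairwise inner products of the generators $\{\sgn(a_j) y_i \vec{x}_i\}$ nonnegative, so $\sgn(a_j)\, y_i\, \vec{w}_j^\top \vec{x}_i \geq 0$ for all $i$. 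Thus a positive neuron ($a_j > 0$) vanishes on every negative example and acts as the identity on positives, and symmetrically for negative neurons, so the network decouples by class.

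The crux, which I expect to be the main obstacle, is to upgrade this to the exact identity $\vec{w}_j = a_j \vec{z}$ with $\vec{z} \coloneqq \sum_{i \in I_1} \lambda_i \vec{x}_i$ for a positive neuron (the negative class being symmetric); the difficulty is controlling the free subgradient selections $g_{ij} \in [0,1]$ at the ReLU kink. I would argue by self-consistency. If a negative example $i \in I_{-1}$ had $g_{ij} > 0$, then $\vec{w}_j^\top \vec{x}_i = 0$, and expanding this $0$ through the $\vec{w}_j$-relation makes every summand nonpositive (by orthogonal separability), so each vanishes; the $i$-th summand forces $\lambda_i g_{ij} \|\vec{x}_i\|^2 = 0$, a contradiction. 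Hence negative examples drop out of the $\vec{w}_j$-relation. With only positive examples left, if some $i \in I_1$ had $\vec{w}_j^\top \vec{x}_i = 0$, the same expansion (now with all summands nonnegative) forces $\lambda_{i'} g_{i'j} = 0$ for every $i' \in I_1$, whence $\vec{w}_j = \vec{0}$, contradicting $a_j \neq 0$. Therefore $g_{ij} = 1$ for all $i \in I_1$, and $\vec{w}_j = a_j \vec{z}$.

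Finally I would identify the common direction with the maximum-margin vector. From $\vec{w}_j = a_j \vec{z}$ and balancedness, $\|\vec{z}\| = 1$ and $\vec{w}_j/\|\vec{w}_j\| = \vec{z}$ for every $j \in J_+ \coloneqq \{j : a_j > 0\}$. Set $\vec{v}^\ast \coloneqq \sum_{j \in J_+} a_j \vec{w}_j = \bigl(\sum_{j \in J_+} a_j^2\bigr) \vec{z}$; by the decoupling, $\vec{v}^{\ast\top} \vec{x}_i = \mathcal{N}_{\vec{\theta}}(\vec{x}_i) \geq 1$ for all $i \in I_1$, so $\vec{v}^\ast$ is feasible for the quadratic program defining $\vec{v}_1$. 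Choosing multipliers $\mu_i = \bigl(\sum_{j \in J_+} a_j^2\bigr) \lambda_i \geq 0$ verifies its KKT conditions — stationarity by construction, and complementary slackness because $\mu_i > 0$ forces $\lambda_i > 0$ and hence $\mathcal{N}_{\vec{\theta}}(\vec{x}_i) = 1 = \vec{v}^{\ast\top}\vec{x}_i$ — which, the program being convex, are sufficient for global optimality. By uniqueness $\vec{v}^\ast = \vec{v}_1$, so $\vec{z} = \vec{v}_1 / \sum_{j \in J_+} a_j^2$ and thus $\vec{w}_j/\|\vec{w}_j\| = \vec{v}_{\sgn(a_j)} / \sum_{\sgn(a_{j'}) = \sgn(a_j)} a_{j'}^2$, completing the proof.
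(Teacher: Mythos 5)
Your proposal is correct and follows essentially the same route as the paper's proof: read off the KKT conditions, use orthogonal separability to force the opposite-class subgradient terms out of the expression for $\vec{w}_j$, conclude $\vec{w}_j = a_j \sum_{i \in I_{\sgn(a_j)}} \lambda_i \vec{x}_i$ together with $|a_j| = \|\vec{w}_j\|$, and identify the common direction with $\vec{v}_{\sgn(a_j)}$ via the KKT conditions of the per-class quadratic program. The only differences are cosmetic — you obtain balancedness up front from the identity $g\,u = \psi(u)$ and insert an intermediate cone/sign-pattern step, whereas the paper derives both directly from the explicit form of $\vec{w}_j$.
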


\begin{proof}
We have that $\vec{\theta}$~is feasible, i.e.~$\forall i \in [n]: \, y_i \mathcal{N}_{\vec{\theta}}(\vec{x}_i) \geq 1$, and some $\lambda_1, \ldots, \lambda_n \geq 0$ satisfy
\[\vec{\theta} \in \sum_{i = 1}^n \lambda_i \partial (y_i \mathcal{N}_{\vec{\theta}}(\vec{x}_i))
  \quad\text{and}\quad
  \forall i \in [n]: \, \lambda_i = 0 \,\vee\,
                        y_i \mathcal{N}_{\vec{\theta}}(\vec{x}_i) = 1\;.\]
Thus, for all $j \in [k]$, we have $\vec{w}_j = a_j \sum_{i = 1}^n \lambda_i y_i \psi'_{i, j} \vec{x}_i$ where $\psi'_{i, j} \in \partial \psi(\vec{w}_j^\top \vec{x}_i)$ are some values, and $a_j = \sum_{i = 1}^n \lambda_i y_i \psi(\vec{w}_j^\top \vec{x}_i)$.

That $a_j = 0$ implies $\|\vec{w}_j\| = 0$ is immediate, so suppose $a_j > 0$.

For all $i' \in I_{-1}$ such that $\lambda_{i'} > 0$, we have by orthogonal separability that
\[\vec{w}_j^\top \vec{x}_{i'}
  =
  a_j \sum_{i = 1}^n \lambda_i y_i \psi'_{i, j} \vec{x}_i^\top \vec{x}_{i'}
  \leq
  -a_j \lambda_{i'} \psi'_{i', j} \|\vec{x}_{i'}\|^2\;,\]
so $\psi'_{i', j} > 0$ is impossible, and thus $\psi'_{i', j} = 0$.

Hence $a_j = \sum_{i \in I_1} \lambda_i \psi(\vec{w}_j^\top \vec{x}_i)$, so there exists $i \in I_1$ such that $\lambda_i > 0$ and $\vec{w}_j^\top \vec{x}_i > 0$.  Then for all $i' \in I_1$ we have $\vec{w}_j^\top \vec{x}_{i'} \geq a_j \lambda_i \vec{x}_i^\top \vec{x}_{i'} > 0$.

Therefore $a_j^2 = \vec{w}_j^\top a_j \sum_{i \in I_1} \lambda_i \vec{x}_i = \|\vec{w}_j\|^2$, so $a_j = \|\vec{w}_j\|$ and $\vec{w}_j / \|\vec{w}_j\| = \sum_{i \in I_1} \lambda_i \vec{x}_i \eqqcolon \vec{u}_1$.

Analogously, if $a_j < 0$, then $-a_j = \|\vec{w}_j\|$ and $\vec{w}_j / \|\vec{w}_j\| = \sum_{i \in I_{-1}} \lambda_i \vec{x}_i \eqqcolon \vec{u}_{-1}$.

For both signs $s \in \{\pm 1\}$, let $\kappa_s \coloneqq \sum_{\sgn(a_j) = s} a_j^2$.  By orthogonal separability, it follows that for all $i \in I_s$ we have $\mathcal{N}_{\vec{\theta}}(\vec{x}_i) = \sum_{\sgn(a_j) = s} a_j \vec{w}_j^\top \vec{x}_i = \kappa_s \vec{u}_s^\top \vec{x}_i$.  Hence $\kappa_s \vec{u}_s$~is a Karush-Kuhn-Tucker point of the maximum margin problem
\[\text{\rm minimise}\quad
  \frac{1}{2} \|\vec{v}\|^2
  \quad\text{\rm subject to}\quad
  \forall i \in I_s: \, \vec{v}^\top \vec{x}_i \geq 1\;,\]
and therefore its unique Karush-Kuhn-Tucker point, which is the maximum margin vector~$\vec{v}_s$.
\end{proof}

The following corollary, restated from Section~\ref{s:bias}, is now an immediate consequence of Theorem~\ref{th:loss}, Theorem~\ref{th:KKT} and Lemma~\ref{l:KKT}.

\firstcorbias*

It remains to prove Proposition~\ref{pr:fail}, also restated from Section~\ref{s:bias}.

\firstprfail*

\begin{proof}
The cases $m = 1$ and $m = -1$ are symmetric by swapping $\vec{\phi}$~and~$y$ with $-\vec{\phi}$~and~$-y$, so we can assume that $m = 1$.

Since
\begin{multline*}
\mathbb{P}_{(\vec{x}, y) \sim \mathcal{D}}
\{y \, \mathcal{N}_{\vec{\theta}}(\vec{p} + \vec{x}) > 0\} \\
=
\frac{1}{2} \,
\mathbb{P}_{(\vec{x}, y) \sim \mathcal{D}}
\{y \, \mathcal{N}_{\vec{\theta}}(\vec{p} + \vec{x}) > 0 \,\mid\, y = 1\} +
\frac{1}{2} \,
\mathbb{P}_{(\vec{x}, y) \sim \mathcal{D}}
\{y \, \mathcal{N}_{\vec{\theta}}(\vec{p} + \vec{x}) > 0 \,\mid\, y = -1\}\;,
\end{multline*}
it suffices to establish the following claim: there exists a data class $s \in \{\pm 1\}$ such that
\[\mathbb{P}_{(\vec{x}, y) \sim \mathcal{D}}
  \{y \, \mathcal{N}_{\vec{\theta}}(\vec{p} + \vec{x}) > 0 \,\mid\, y = s\} \,\leq\,
  e^{-2 d \tau^2 \cos^2 \angle(\vec{v}_1 - \vec{v}_{-1}, \vec{\phi})}\;.\]

Suppose $\vec{v}_1^\top \vec{p} \leq \vec{v}_{-1}^\top \vec{p}$, and let $s = 1$.  Then we have that
$\psi(\vec{v}_1^\top    (\vec{p} + \vec{x})) >
 \psi(\vec{v}_{-1}^\top (\vec{p} + \vec{x}))$
implies
$\vec{v}_1^\top \vec{x} >
 \vec{v}_{-1}^\top \vec{x}$,
so
\begin{multline*}
\mathbb{P}_{(\vec{x}, y) \sim \mathcal{D}}
\{y \, \mathcal{N}_{\vec{\theta}}(\vec{p} + \vec{x}) > 0 \,\mid\, y = s\}
=
\mathbb{P}_{(\vec{x}, 1) \sim \mathcal{D}}
\{\mathcal{N}_{\vec{\theta}}(\vec{p} + \vec{x}) > 0\} \\
=
\mathbb{P}_{(\vec{x}, 1) \sim \mathcal{D}}\!
\left\{\psi(\vec{v}_1^\top    (\vec{p} + \vec{x})) >
       \psi(\vec{v}_{-1}^\top (\vec{p} + \vec{x}))\right\}
\leq
\mathbb{P}_{(\vec{x}, 1) \sim \mathcal{D}}\!
\left\{\vec{v}_1^\top \vec{x} >
       \vec{v}_{-1}^\top \vec{x}\right\} \\
\leq
e^{-2 d \tau^2 \cos^2 \angle(\vec{v}_1 - \vec{v}_{-1}, \vec{\phi})}
\end{multline*}
as required, where the last inequality is by the assumption $\cos \angle(\vec{v}_1 - \vec{v}_{-1}, \vec{\phi}) < 0$ and \citet[Lemma~26]{SchmidtSTTM18}.

If $\vec{v}_1^\top \vec{p} \geq \vec{v}_{-1}^\top \vec{p}$, the argument to show the claim is analogous, with taking $s = -1$ and observing that
$\psi(\vec{v}_1^\top    (\vec{p} + \vec{x})) <
 \psi(\vec{v}_{-1}^\top (\vec{p} + \vec{x}))$
implies
$\vec{v}_1^\top \vec{x} <
 \vec{v}_{-1}^\top \vec{x}$.
\end{proof}

\else\nocite{GilmerMFSRWG18}\nocite{SchmidtSTTM18}\nocite{IlyasSTETM19}\fi\ifappendix
\section{Gaussian data models}
\label{app:Gaussian}

We show here how our main theoretical results can be adapted if the Bernoulli data models (see Section~\ref{s:random}) we considered are replaced by data models based on the multivariate Gaussian distribution.  The latter distribution has been standard for modelling data in theoretical investigations of adversarial examples (see e.g.\ \citet{GilmerMFSRWG18,SchmidtSTTM18,IlyasSTETM19}).

\paragraph{Definition.}

Following \citet{SchmidtSTTM18}, given a per-class mean vector $\varrho \vec{\varphi}$ where $\varrho > 0$ is its radius and $\vec{\varphi} \in \mathbb{S}^{d - 1}$ is its direction, and given a variance parameter $\varsigma > 0$, we define the $(\vec{\varphi}, \varrho, \varsigma)$-Gaussian distribution over $(\vec{x}, y) \,\in\, \mathbb{R}^d \times \{\pm 1\}$ as follows:
\begin{itemize}
\item
first draw the label~$y$ uniformly at random from $\{\pm 1\}$,
\item
then sample the data point~$\vec{x}$ from the spherical multivariate Gaussian distribution whose mean is $y \varrho \vec{\varphi}$ and covariance matrix is $\varsigma^2 \vec{I}_d$.
\end{itemize}

These binary classification data models are thus mixtures of two spherical Gaussians, one per data class.  The overlapping of the classes, and therefore the difficulty of the classification task, is controlled by the ratio between the spherical radius~$\sqrt{d} \varsigma$ and the mean radius~$\varrho$, i.e.\ classification becomes harder as $\sqrt{d} \varsigma / \varrho$ increases.

\subsection{Random networks}

We assume that, as in Section~\ref{s:random}, we have a random two-layer ReLU network~$\mathcal{N}$ with input dimension~$d$, width~$k$ such that $k \leq d$, and weights~$\vec{w}_j$ and~$a_j$ for $j \in [k]$.

Let $\vec{p}$~be an adversarial program defined as in~\eqref{eq:p.p'}, with respect to the direction~$\vec{\varphi}$ of the Gaussian data model.

By adapting the proof of Theorem~\ref{th:random} to the Gaussian adversarial task, we obtain the following result, which provides similar lower (respectively, upper) bounds on the output of the adversarially reprogrammed network for positively (respectively, negatively) labelled adversarial inputs.

\begin{theorem}
\label{th:G.random}
There exist positive constants $C_1$, $C_2$, $C_3$ and~$C_4$ such that, with probability at least $(1 - C_1 \gamma) (1 - 2 \gamma^\dag)$ over the draw of the network~$\mathcal{N}$ and the labelled data point~$(\vec{x}, y)$ from the $(\vec{\varphi}, \varrho, \varsigma)$-Gaussian distribution, provided that $\varrho^2 / \varsigma^2 \geq 2 \ln(1 / \gamma^\dag)$, we have
\[y \, \mathcal{N}(\vec{p} + \vec{x}) >
  \frac{\sqrt{k}}{\sqrt{d}} \!
  \left(\!
  \frac{C_2}{2} \beta -
  \beta' \!
  \left(\!
  C_3 \exp\!\left(\!-\frac{d^2}{2 k {\beta'}^2}\!\right)
      \min\!\left\{\!1, \frac{k {\beta'}^2}{d^2}\!\right\} +
  C_4 \sqrt{\frac{\ln(1 / \gamma)}{k}}
  \right)\!
  \right)\;,\]
where
\[\beta  \coloneqq \varrho - \varsigma \sqrt{2 \ln(1 / \gamma^\dag)}
  \quad\text{and}\quad
  \beta' \coloneqq \varrho + \varsigma (\sqrt{d} + \sqrt{2 \ln(1 / \gamma^\dag)})\;.\]
\end{theorem}

\begin{proof}
For any fixed $(\vec{x}, y)$ such that $y \vec{\varphi}^\top \vec{x} > 0$, we can apply Lemmas~\ref{l:K+} and~\ref{l:K-} as in the proof of Theorem~\ref{th:random}, which gives us that with probability at least $1 - C_1 \gamma$ over the draw of the network~$\mathcal{N}$ one has
\[y \, \mathcal{N}(\vec{p} + \vec{x}) >
  \frac{\sqrt{k}}{\sqrt{d}} \!
  \left(\!
  \frac{C_2}{2} y \vec{\varphi}^\top \vec{x} -
  \|\vec{x}\| \!
  \left(\!
  C_3 \exp\!\left(\!-\frac{d^2}{2 k \|\vec{x}\|^2}\!\right)
      \min\!\left\{\!1, \frac{k \|\vec{x}\|^2}{d^2}\!\right\} -
  C_4 \sqrt{\frac{\ln(1 / \gamma)}{k}}
  \right)\!
  \right)\;,\]
where $C_1$, $C_2$, $C_3$ and~$C_4$ are positive constants defined as in the proof of Theorem~\ref{th:random}.

On the other hand, for $(\vec{x}, y)$ sampled from the $(\vec{\varphi}, \varrho, \varsigma)$-Gaussian distribution, \citet[Lemma~13]{SchmidtSTTM18} tell us that with probability at least $1 - \gamma^\dag$ one has
\[\|\vec{x}\| <
  \varrho + \varsigma (\sqrt{d} + \sqrt{2 \ln(1 / \gamma^\dag)})\;,\]
and again \citet[Lemma~15]{SchmidtSTTM18} tell us that with probability at least $1 - \gamma^\ddag$ one has
\[y \vec{\varphi}^\top \vec{x} >
  \varrho - \varsigma \sqrt{2 \ln(1 / \gamma^\ddag)}\;.\]

To complete the proof, we take $\gamma^\dag = \gamma^\ddag$ and combine the inequalities above.
\end{proof}

Next we obtain a counterpart of Corollary~\ref{cor:random}.  It shows that, for sufficiently large input dimensions~$d$, and under relatively mild restrictions on the growth rates of the network width~$k$, the mean radius~$\varrho$ and the variance parameter~$\varsigma$ of the Gaussian data model, expected reprogramming accuracy can be arbitrarily close to~$100\%$.  Particularly interesting is the constraint $\eta_{(\varrho)} - \eta_{(\varsigma)} > 1 / 2 - \eta_{(k)} / 2$, which amounts to $\sqrt{d} \varsigma / \varrho = o(\sqrt{k})$, i.e.\ the adversarial task is required to be not too difficult, where greater difficulty may be allowed by greater network width.

\begin{corollary}
\label{cor:G.random}
Suppose that
\[k         = \Theta(d^{\eta_{(k)}})\;, \quad
  \varrho   = \Theta(d^{\eta_{(\varrho)}}) \quad\text{and}\quad
  \varsigma = O(d^{\eta_{(\varsigma)}})\;,\]
where $\eta_{(k)}, \eta_{(\varrho)}, \eta_{(\varsigma)} \in [0, 1]$ are arbitrary constants that satisfy
\[\eta_{(k)}                            > 0\;, \quad
  \eta_{(\varrho)}                      < 1 - \eta_{(k)} / 2\;, \quad
  \eta_{(\varsigma)}                    < 1 / 2 - \eta_{(k)} / 2 \quad\text{and}\quad
  \eta_{(\varrho)} - \eta_{(\varsigma)} > 1 / 2 - \eta_{(k)} / 2\;.\]
Then, for sufficiently large input dimensions~$d$, the expected accuracy of the adversarially reprogrammed network~$\mathcal{N}$ on the $(\vec{\varphi}, \varrho, \varsigma)$-Gaussian data model is arbitrarily close to~$100\%$.
\end{corollary}

\begin{proof}
Fix~$\gamma$ and~$\gamma^\dag$ such that $(1 - C_1 \gamma) (1 - 2 \gamma^\dag)$ is as close to~$100\%$ as required.

By the assumptions in the statement, we have
\[\varrho / \varsigma =
  \Omega(d^{\eta_{(\varrho)}}) / O(d^{\eta_{(\varsigma)}}) =
  \Omega(d^{\eta_{(\varrho)} - \eta_{(\varsigma)}}) =
  \omega_d(1)\;,\]
so $\varrho^2 / \varsigma^2 \geq 2 \ln(1 / \gamma^\dag)$ for large enough~$d$; moreover, we have
\[\beta = \Omega(d^{\eta_{(\varrho)}})\;, \quad
  \beta' = O(d^{\max\{\eta_{(\varrho)},
                      \eta_{(\varsigma)} + 1 / 2\}}) \quad\text{and}\quad
  \exp\!\left(\!\frac{d^2}{k {\beta'}^2}\!\right) = \omega(d^2)\;.\]

The corollary follows by applying Theorem~\ref{th:G.random} and observing that
\begin{multline*}
\frac{C_2}{2} \beta -
\beta' \!
\left(\!
C_3 \exp\!\left(\!-\frac{d^2}{2 k {\beta'}^2}\!\right)
    \min\!\left\{\!1, \frac{k {\beta'}^2}{d^2}\!\right\} +
C_4 \sqrt{\frac{\ln(1 / \gamma)}{k}}
\right) \\
=
\Omega(d^{\eta_{(\varrho)}}) -
O(d^{\max\{\eta_{(\varrho)},
           \eta_{(\varsigma)} + 1 / 2\}}) \!
\left(\!
\frac{1}{\omega(d) \ln(\omega(d^2))} +
\frac{1}{\Omega(d^{\eta_{(k)} / 2})}
\!\right) \\
=
\Omega(d^{\eta_{(\varrho)}}) -
O(d^{\max\{\eta_{(\varrho)},
           \eta_{(\varsigma)} + 1 / 2\} -
     \eta_{(k)} / 2})
=
\Omega(d^{\eta_{(\varrho)}}) -
o(d^{\eta_{(\varrho)}})
=
\Omega(d^{\eta_{(\varrho)}})\;,
\end{multline*}
which is positive for large enough~$d$.
\end{proof}

\subsection{Implicit bias}

We consider, as in Section~\ref{s:bias}, a trajectory of gradient flow for a two-layer ReLU network~$\mathcal{N}_{\vec{\theta}}$ with input dimension~$d$ and width~$k$, trained on an orthogonally separable binary classification dataset, from a balanced and live initialisation, using either the exponential or the logistic loss function.  Also as in Section~\ref{s:bias}, the assumption $k \leq d$ is not needed here.

Hence Corollary~\ref{cor:bias} still applies, and we restate here for convenience.  Recall that $\vec{w}_j$~are first-layer network weights, $a_j$~are second-layer network weights, and $\vec{v}_1$~and $\vec{v}_{-1}$~are the maximum-margin vectors for the positive and negative data classes (respectively).

\firstcorbias*

By adapting the proof of Proposition~\ref{pr:fail} to the Gaussian adversarial task, we obtain the following result about consequences for adversarial reprogramming of the long training of the network.  It tells us that, for any class label mapping from the original to the adversarial task, for any Gaussian data model whose mean direction is in a half-space of the difference of the maximum-margin vectors, and for any adversarial program, the accuracy tends to~$1 / 2$ provided that $\varrho / \varsigma$ tends to infinity, i.e.\ the difficulty $\sqrt{d} \varsigma / \varrho$ of the data model increases slower than the square root of the input dimension~$d$.  The latter constraint mirrors the corresponding conclusion for the Bernoulli data model in Proposition~\ref{pr:fail}, and is again considerably weaker than in the results of \citet{SchmidtSTTM18} on the Gaussian data model, where $\varrho = \sqrt{d}$ and $\varsigma = O(\!\sqrt[4]{d})$ are assumed.

\begin{proposition}
\label{pr:G.fail}
Suppose network weights~$\vec{\theta}$ are is in Corollary~\ref{cor:bias}, class label mapping $m \in \{\pm 1\}$ is arbitrary, data model~$\mathcal{D}$ is any $(\vec{\varphi}, \varrho, \varsigma)$-Gaussian distribution such that $m \cos \angle(\vec{v}_1 - \vec{v}_{-1}, \vec{\varphi}) < 0$, and adversarial program~$\vec{p}$ is arbitrary.  Then we have that
\[\mathbb{P}_{(\vec{x}, y) \sim \mathcal{D}}
  \{m \, y \, \mathcal{N}_{\vec{\theta}}(\vec{p} + \vec{x}) > 0\} \,\leq\,
  \frac{1}{2} +
  \frac{1}{2}
  \exp\!\left(-\frac{\varrho^2 \cos^2 \angle(\vec{v}_1 - \vec{v}_{-1}, \vec{\varphi})}
                    {2 \varsigma^2}\right)\;.\]
\end{proposition}

\begin{proof}
The argument is as in the proof of Proposition~\ref{pr:fail}, except that we invoke \citet[Lemma~17]{SchmidtSTTM18} to show that $\cos \angle(\vec{v}_1 - \vec{v}_{-1}, \vec{\varphi}) < 0$ implies
\[\mathbb{P}_{(\vec{x}, 1) \sim \mathcal{D}}\!
  \left\{\vec{v}_1^\top \vec{x} >
         \vec{v}_{-1}^\top \vec{x}\right\}
  \leq
  \exp\!\left(-\frac{\varrho^2 \cos^2 \angle(\vec{v}_1 - \vec{v}_{-1}, \vec{\varphi})}
                    {2 \varsigma^2}\right)\;.
  \mbox{\qedhere}\]
\end{proof}

\section{Further details on experiments}

Here we supplement Section~\ref{s:exper} by presenting experimental results on adversarial tasks other than MNIST, and listing the data behind the test accuracy plots.

\subsection{Experiments using other datasets}
\label{app:other}

We repeated our experiments with the Fashion-MNIST~\citep{XiaoRV17} dataset as the adversarial task instead of MNIST, while keeping the rest of the experimental setup unchanged.  Fashion-MNIST also consists of $60,\!000$ training images and $10,\!000$ test images, and it is available under the MIT licence.  It has $10$~labels: $0$~for T-shirt/top, $1$~for Trouser, $2$~for Pullover, etc.

The average test accuracies are plotted in Figure~\ref{f:experimentsfashion}.  The compute involved is similar to that for MNIST, which we discussed in Section~\ref{s:exper}.

We then repeated our experiments with the Kuzushiji-MNIST~\citep{ClanuwatBKLYH18} dataset as the adversarial task, again keeping the rest of the experimental setup unchanged.  Kuzushiji-MNIST also consists of $60,\!000$ training images and $10,\!000$ test images, and it is available under the Creative Commons Attribution Share-Alike 4.0 licence.  It consists of $10$~classes of handwritten Japanese characters.

The average test accuracies are plotted in Figure~\ref{f:experimentskmnist}, and the compute involved is again similar to that for MNIST.

\begin{figure}[p]
    \centering
    \begin{tikzpicture}
        \begin{groupplot}[group style={group size= 2 by 1},height=5.9cm,width=7.3cm]
            \nextgroupplot[xlabel=$r$, ylabel=test accuracy, xmode=log,minor tick num=1,ymin=0, ymax=1, xmax=1,log ticks with fixed point, ymajorgrids=true,  yminorgrids=true, minor grid style={line width=.2pt,draw=gray!50,legend pos=outer north east},title=Scheme 1]

            \errorbarplot{resnet50}{fashion_mnist}{r}
            \errorbarplot{ResNet50V2}{fashion_mnist}{r}
            \errorbarplot{ResNet101V2}{fashion_mnist}{r}
            \errorbarplot{inception-v3}{fashion_mnist}{r}
            \errorbarplot{EfficientNetB0}{fashion_mnist}{r}
            \errorbarplot{ResNet152V2}{fashion_mnist}{r}

            \coordinate (left) at (rel axis cs:0,1);

            \nextgroupplot[xlabel=$v$, xmode=log,minor tick num=1,ymin=0, ymax=1, xmax=1,log ticks with fixed point, ymajorgrids=true,  yminorgrids=true, minor grid style={line width=.2pt,draw=gray!50,legend pos=outer north east},title=Scheme 2,legend columns=3,legend to name=networksfashion]

            \errorbarplot{resnet50}{fashion_mnist}{v}\addlegendentry{ResNet-50}
            \errorbarplot{ResNet50V2}{fashion_mnist}{v}\addlegendentry{ResNet-50V2}
            \errorbarplot{ResNet101V2}{fashion_mnist}{v}\addlegendentry{ResNet-101V2}
            \errorbarplot{inception-v3}{fashion_mnist}{v}\addlegendentry{Inception-v3}
\errorbarplot{EfficientNetB0}{fashion_mnist}{v}\addlegendentry{EfficientNet-B0}
\errorbarplot{ResNet152V2}{fashion_mnist}{v}\addlegendentry{ResNet-152V2}

            \coordinate (right) at (rel axis cs:1,1);

        \end{groupplot}
        \coordinate (middle) at ($(left)!.5!(right)$);
        \node[below] at (middle |- current bounding box.south) {\pgfplotslegendfromname{networksfashion}};
    \end{tikzpicture}
    \caption{The accuracy achieved by the adversarial program on the Fashion-MNIST test set for different parameters of the two schemes of combining input images with adversarial programs. The horizontal axes are logarithmic. The values plotted are averages over $5$~trials, which are listed together with the standard deviations in \appendixref{app:data}.}
    \label{f:experimentsfashion}
\end{figure}
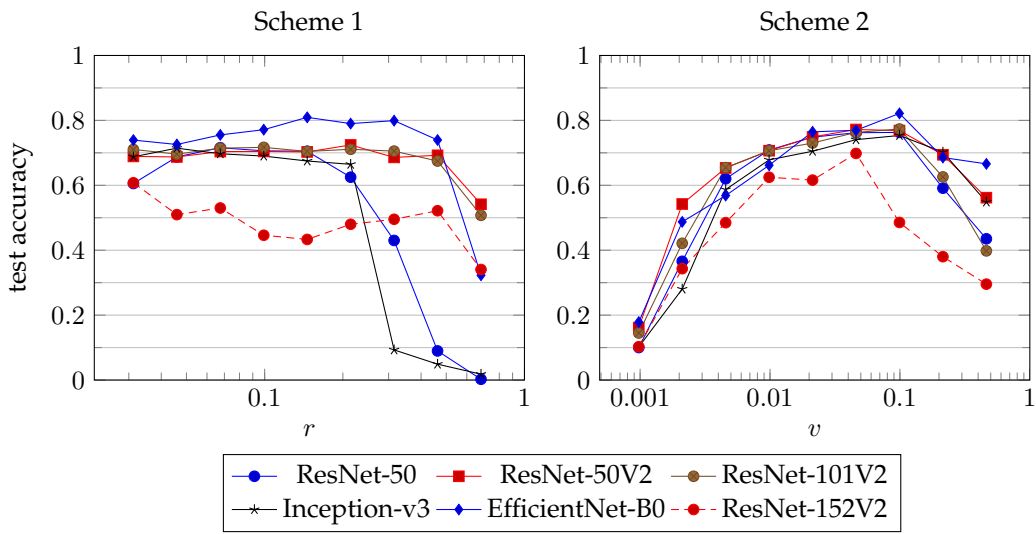

\begin{figure}[p]
    \centering
    \begin{tikzpicture}
        \begin{groupplot}[group style={group size= 2 by 1},height=5.9cm,width=7.3cm]
            \nextgroupplot[xlabel=$r$, ylabel=test accuracy, xmode=log,minor tick num=1,ymin=0, ymax=1, xmax=1,log ticks with fixed point, ymajorgrids=true,  yminorgrids=true, minor grid style={line width=.2pt,draw=gray!50,legend pos=outer north east},title=Scheme 1]

            \errorbarplot{resnet50}{kmnist}{r}
            \errorbarplot{ResNet50V2}{kmnist}{r}
            \errorbarplot{ResNet101V2}{kmnist}{r}
            \errorbarplot{inception-v3}{kmnist}{r}
            \errorbarplot{EfficientNetB0}{kmnist}{r}
            \errorbarplot{ResNet152V2}{kmnist}{r}

            \coordinate (left) at (rel axis cs:0,1);

            \nextgroupplot[xlabel=$v$, xmode=log,minor tick num=1,ymin=0, ymax=1, xmax=1,log ticks with fixed point, ymajorgrids=true,  yminorgrids=true, minor grid style={line width=.2pt,draw=gray!50,legend pos=outer north east},title=Scheme 2,legend columns=3,legend to name=networkskmnist]

            \errorbarplot{resnet50}{kmnist}{v}\addlegendentry{ResNet-50}
            \errorbarplot{ResNet50V2}{kmnist}{v}\addlegendentry{ResNet-50V2}
            \errorbarplot{ResNet101V2}{kmnist}{v}\addlegendentry{ResNet-101V2}
            \errorbarplot{inception-v3}{kmnist}{v}\addlegendentry{Inception-v3}
\errorbarplot{EfficientNetB0}{kmnist}{v}\addlegendentry{EfficientNet-B0}
\errorbarplot{ResNet152V2}{kmnist}{v}\addlegendentry{ResNet-152V2}

            \coordinate (right) at (rel axis cs:1,1);

        \end{groupplot}
        \coordinate (middle) at ($(left)!.5!(right)$);
        \node[below] at (middle |- current bounding box.south) {\pgfplotslegendfromname{networkskmnist}};
    \end{tikzpicture}
    \caption{The accuracy achieved by the adversarial program on the Kuzushiji-MNIST test set for different parameters of the two schemes of combining input images with adversarial programs. The horizontal axes are logarithmic. The values plotted are averages over $5$~trials, which are listed together with the standard deviations in \appendixref{app:data}.}
    \label{f:experimentskmnist}
\end{figure}
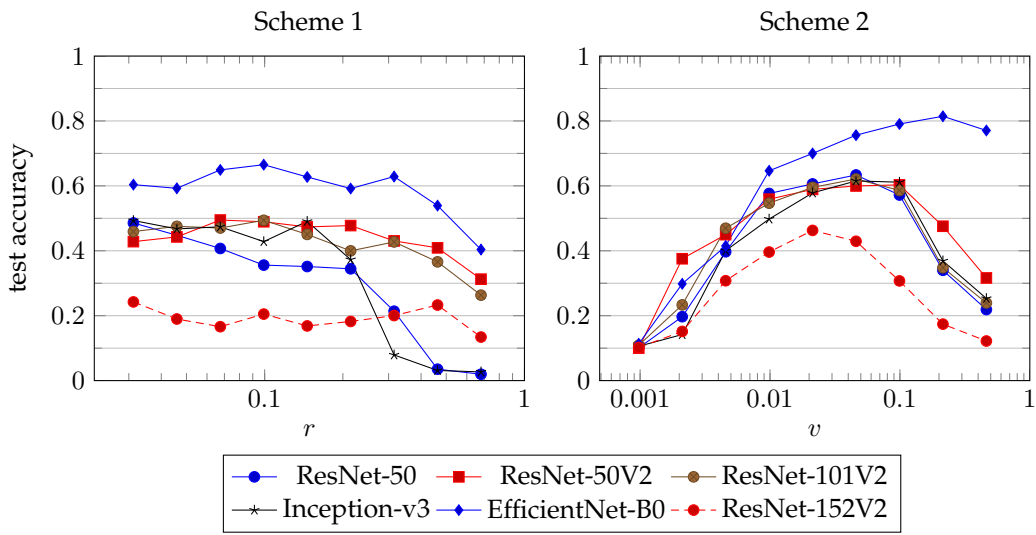

\subsection{Data from experiments}
\label{app:data}

Please see Tables \ref{t:mnistr}, \ref{t:mnistv}, \ref{t:fashionr}, \ref{t:fashionv}, \ref{t:kmnistr}, and~\ref{t:kmnistv}.

We remark that, in the cases where the standard deviation is relatively large, that is due to some of the trials failing to find an adversarial program with non-trivial test accuracy, and the remaining trials succeeding.  It might be interesting in future work to investigate this apparent bimodality.

\setlength{\tabcolsep}{0.2em}
\newlength{\expsep}
\setlength{\expsep}{1.9em}

\newcommand{\expdata}[5]
{\begin{table}[p]
\caption{Accuracy averages~($\%$) on the #5 test set, which are plotted in Figure~\ref{f:experiments#1} on the #3, with standard deviations~($\%$) shown in parentheses, over $5$~trials.}
\label{t:#1#4}
\vspace{1.5ex}
\centering
\small
\begin{tabular}{l@{\hspace{\expsep}}rr@{\hspace{\expsep}}rr@{\hspace{\expsep}}rr@{\hspace{\expsep}}rr@{\hspace{\expsep}}rr@{\hspace{\expsep}}rr} \toprule
&
\multicolumn{2}{r@{\hspace{\expsep}}}{Efficient} &
\multicolumn{2}{r@{\hspace{\expsep}}}{Inception-} &
\multicolumn{2}{r@{\hspace{\expsep}}}{ResNet-} &
\multicolumn{2}{r@{\hspace{\expsep}}}{ResNet-} &
\multicolumn{2}{r@{\hspace{\expsep}}}{ResNet-} &
\multicolumn{2}{r}{ResNet-} \\
$#4$ &
\multicolumn{2}{r@{\hspace{\expsep}}}{Net-B0} &
\multicolumn{2}{r@{\hspace{\expsep}}}{v3} &
\multicolumn{2}{r@{\hspace{\expsep}}}{101V2} &
\multicolumn{2}{r@{\hspace{\expsep}}}{152V2} &
\multicolumn{2}{r@{\hspace{\expsep}}}{50} &
\multicolumn{2}{r}{50V2} \\ \midrule
\csvreader[column names={#4-sym=\sym,
                         val-acc-mean-EfficientNetB0=\efficientmean, val-acc-std-EfficientNetB0=\efficientstd,
                         val-acc-mean-ResNet101V2=\resnetmiddlemean, val-acc-std-ResNet101V2=\resnetmiddlestd,
                         val-acc-mean-ResNet50V2=\resnetsmallmean,   val-acc-std-ResNet50V2=\resnetsmallstd,
                         val-acc-mean-inception-v3=\inceptionmean,   val-acc-std-inception-v3=\inceptionstd,
                         val-acc-mean-resnet50=\resnetoldmean,       val-acc-std-resnet50=\resnetoldstd,
                         val-acc-mean-ResNet152V2=\resnetbigmean,    val-acc-std-ResNet152V2=\resnetbigstd,
                         val-acc-count-EfficientNetB0=\efficientcount,
                         val-acc-count-ResNet101V2=\resnetmiddlecount,
                         val-acc-count-ResNet50V2=\resnetsmallcount,
                         val-acc-count-inception-v3=\inceptioncount,
                         val-acc-count-resnet50=\resnetoldcount,
                         val-acc-count-ResNet152V2=\resnetbigcount},
           late after line=\\,
           late after last line=\\ \bottomrule]
          {aggregate_#2_#4.csv}{}
          {\sym &
           \percentage{\efficientmean}    & (\percentage{\efficientstd}) &
           \percentage{\inceptionmean}    & (\percentage{\inceptionstd}) &
           \percentage{\resnetmiddlemean} & (\percentage{\resnetmiddlestd}) &
           \percentage{\resnetbigmean}    & (\percentage{\resnetbigstd}) &
           \percentage{\resnetoldmean}    & (\percentage{\resnetoldstd}) &
           \percentage{\resnetsmallmean}  & (\percentage{\resnetsmallstd})}
\end{tabular}
\end{table}}

\expdata{mnist}{mnist}{left}{r}{MNIST}
\expdata{mnist}{mnist}{right}{v}{MNIST}
\expdata{fashion}{fashion_mnist}{left}{r}{Fashion-MNIST}
\expdata{fashion}{fashion_mnist}{right}{v}{Fashion-MNIST}
\expdata{kmnist}{kmnist}{left}{r}{Kuzushiji-MNIST}
\expdata{kmnist}{kmnist}{right}{v}{Kuzushiji-MNIST}

\else\fi
\end{document}